\pgfplotsset{compat=1.12}
\newcommand{\xmin}{x_{\min}}
\newcommand{\xmax}{x_{\max}}
\newtheorem{theorem}{Theorem}
\newtheorem{lemma}{Lemma}
\newcommand*{\om}{\textsc{OneMax}\xspace}
\newcommand*{\OneMax}{\om}
\newcommand*{\onemax}{\om}
\newcommand{\umdastar}{UMDA$^*$\xspace}
\newcommand{\umda}{UMDA\xspace}
\DeclareMathOperator{\Prob}{Pr}
\newcommand*{\E}{\mathrm{E}}
\newcommand*{\Var}{\mathrm{Var}}
\newcommand*{\bigO}{\mathrm{O}}
\newcommand*{\cumulC}[1]{C_{\geq #1}}
\newcommand{\ie}{i.\,e.\xspace}
\newcommand{\wrt}{w.\,r.\,t.\xspace}
\newcommand{\eg}{e.\,g.\xspace}
\newcommand{\wlo}{w.\,l.\,o.\,g.\xspace}
\newcommand{\indic}[1]{\mathds{1}\{#1\}}
\DeclareMathOperator{\Bin}{Bin}
\newcommand{\N}{\mathds{N}}
\newcommand{\R}{\mathds{R}}
\newcommand{\filt}{\mathcal{F}}
\newcommand{\filtt}{\mathcal{F}_t}
\newenvironment{proofof}[1]{\begin{proof}[Proof of~#1]}{\end{proof}}
\title{Upper Bounds on the Runtime of the Univariate Marginal Distribution Algorithm on OneMax\thanks{An extended abstract 
of this article appeared in the proceedings of the 2017 Genetic and Evolutionary Computation Conference (GECCO~2017) \cite{WittGECCO17}.}}
\author{Carsten Witt\\DTU Compute\\Technical University of Denmark\\
  2800 Kgs. Lyngby\\
	Denmark}
\begin{document}

\maketitle

\begin{abstract}
A runtime analysis of the Univariate Marginal Distribution Algorithm (\umda) is presented on the OneMax function for 
				wide ranges of the parameters $\mu$ and $\lambda$. If $\mu\ge c\log n$   
				for some constant~$c>0$ and $\lambda=(1+\Theta(1))\mu$, 
				a general bound $O(\mu n)$ on the expected runtime is obtained.
				This bound crucially assumes that all marginal probabilities of the algorithm are confined to the interval $[1/n,1-1/n]$. 
				If $\mu\ge c' \sqrt{n}\log n$ for a constant $c'>0$ and $\lambda=(1+\Theta(1))\mu$, the behavior 
				of the algorithm changes and the bound on the expected runtime becomes $O(\mu\sqrt{n})$, 
				which typically even holds if the borders on the marginal probabilities are omitted. 				
				
        The results supplement the recently derived lower bound $\Omega(\mu\sqrt{n}+n\log n)$ 
				by Krejca and Witt (FOGA~2017) and turn out as tight for the two very different choices $\mu=c\log n$ 
				and $\mu=c'\sqrt{n}\log n$. 
				They also improve the previously best known upper bound $O(n\log n\log\log n)$ by Dang and Lehre (GECCO~2015).
				\end{abstract}

\section{Introduction}
\begin{sloppypar}
Estimation-of-distribution algorithms (EDAs, \cite{LarranagaLozanoEDABook}) 
are randomized search heuristics that have 
emerged as a popular alternative to classical 
evolutionary algorithms like Genetic Algorithms. In contrast to the classical approaches, 
EDAs do not store explicit populations of search points but develop a probabilistic model 
of the fitness function to be optimized. Roughly, this model is built by sampling 
a number of search points from the current model and updating it based on the structure 
of the best samples.
\end{sloppypar}

Although many different variants of EDAs (cf.~\cite{HauschildPelikan11}) 
and many different domains are possible, theoretical analysis of EDAs in 
discrete search spaces often considers running time analysis over $\{0, 1\}^n$. The simplest 
of these  EDAs have no mechanism to learn correlations between bits. Instead, 
they store a Poisson binomial distribution, \ie, a probability vector $\bm{ p}$ of $n$ independent 
probabilities, each component $p_i$ denoting the probability that a sampled bit string will have a $1$ at position $i$.

The first theoretical analysis in this setting was conducted by Droste~\citep{Droste2006a}, who 
analyzed the \emph{compact Genetic Algorithm} (cGA), an EDA that only samples two solutions 
in each iteration, on linear functions. Papers considering other EDAs \cite{ChenEtAlCEC09a, ChenEtAlCEC07, ChenEtAlCEC09b, ChenEtAlIEEETEC2010} followed. Also 
iteration-best \emph{Ant Colony Optimization} (ACO), historically classified as a different 
type of search heuristic, can be considered as an EDA and analyzed in the same framework~\cite{Neumann2010a}.

Recently, the interest in the theoretical running time 
analysis of EDAs has increased~\cite{DangLehreGECCO15, FriedrichKK16EDA, FriedrichEtAlISAAC15, SudholtWitt2016, KrejcaWittFOGA2017}. 
 Most of these works derive bounds for a specific EDA on the popular $\om$ function, which counts the number
 of $1$s in a bit string and is considered to be one of the easiest functions with a unique optimum~\cite{Sudholt2012c, Witt2013}. 
In this paper, we follow up on recent work on 
the \emph{Univariate Marginal Distribution Algorithm} (\umda~\cite{MuhlenbeinPaass1996}) on $\om$.

 The \umda is an EDA that samples $\lambda$ solutions in each iteration,  selects $\mu < \lambda$ best solutions, 
and then sets the probability $p_i$ (hereinafter called \emph{frequency}) 
to the relative occurrence of $1$s among these $\mu$ individuals. The algorithm has already been
analyzed some years ago for several artificially designed example 
functions~\cite{ChenEtAlCEC09a, ChenEtAlCEC07, ChenEtAlCEC09b, ChenEtAlIEEETEC2010}. However, none these papers considered 
the most fundamental benchmark 
function in theory, the $\om$ function. 
In fact, the running time analysis of the \umda on the simple $\om$ function has turned out to be rather challenging; the first such result, 
showing the upper bound $O(n\log n\log\log n)$ on its expected running time for certain settings of~$\mu$ and 
$\lambda$, was not published until 2015~\cite{DangLehreGECCO15}.
 
In a recent related study, Wu et~al.~\cite{WuKolonkoMoehringAnalysisCEIEEETEC} 
present the first running time analysis of the cross-entropy method (CE), 
which is a generalization of the \umda and analyze it on \om and another benchmark function. 
Using $\mu=n^{1+\epsilon}\log n$ for some constant~$\epsilon>0$ and 
$\lambda=\omega(\mu)$, they obtain that the running time of CE 
on \om is $O(\lambda n^{1/2+\epsilon/3}/\rho)$ with overwhelming probability, where $\rho$ is a parameter of CE.
Hence, if  $\rho=\Omega(1)$, including the special case $\rho=1$ where CE 
collapses to \umda, a running time bound of $O(n^{3/2+(4/3)\epsilon}\log n)$ 
holds, \ie, slightly above $n^{3/2}$\!.
 Technically, Wu et al.\ use concentration 
bounds such as Chernoff bounds to bound the effect of so-called genetic drift, 
which is also considered in the present paper, 
as well as anti-concentration results, in particular for the Poisson binomial distribution, to  
obtain their statements. All bounds can
hold with high probability only since CE is formulated without so-called borders on the frequencies.

Very recently, these upper bounds were supplemented 
by a general lower bound of the kind $\Omega(\mu\sqrt{n}+n\log n)$ \cite{KrejcaWittFOGA2017}, proving that 
the \umda cannot be more efficient than simple evolutionary algorithms on this function, at least if $\lambda=(1+\Theta(1))\mu$. As the upper bounds 
due to~\cite{DangLehreGECCO15} and the recent 
lower bounds were apart by a factor of $\Theta(\log\log n)$, it was an open problem 
to determine the asymptotically best possible running time of the \umda on \om.

In this paper, we close this gap and show that the \umda can optimize \om in expected time $O(n\log n)$ 
for two very different, carefully chosen values of $\mu$, always assuming that
 $\lambda=(1+\Theta(1))\mu$. In fact, we obtain two general upper bounds depending on 
$\mu$. If $\mu\ge c\sqrt{n}\log n$, 
where $c$ is a sufficiently large constant, the first upper bound is $O(\mu\sqrt{n})$. This 
bound exploits that all $p_i$ move more or less steadily to the largest possible value  and that 
with high probability there are no frequencies that ever drop below~$1/4$. Around $\mu=\Theta(\sqrt{n}\log n)$, 
there is a phase transition in the behavior of the algorithm. With smaller~$\mu$, the stochastic movement of the 
frequencies is more chaotic and many frequencies will hit the lowest possible value during the 
optimization. Still, the expected optimization time is $O(\mu n)$ for $\mu\ge c'\log n$ and  
a sufficiently large constant~$c'>0$ if all 
frequencies are confined to the interval $[1/n,1-1/n]$, 
as typically done in EDAs. If frequencies are allowed to drop to $0$, the algorithm will typically have
infinite optimization time below the phase transition point $\mu\sim\sqrt{n}\log n$, whereas 
it typically will be efficient above.

Interestingly, Dang and Lehre \cite{DangLehreGECCO15} used $\mu=\Theta(\ln n)$, \ie, a value below the
phase transition  to obtain their $ O(n\log n\log\log n)$ bound. This region turns out to 
be harder to analyze than the region above the phase transition, at least with our techniques. 
However, our proof also follows an approach being widely different from \cite{DangLehreGECCO15}. 
There the so-called level based theorem, a very general upper bound technique, is applied 
to track the stochastic behavior of the best-so-far \om-value. While this gives a rather short and elegant 
proof of the upper bound $ O(n\log n\log\log n)$, the generality of the technique does not give much insight into 
how the probabilities $p_i$ of the individuals bits develop over time. We think that it is crucial to understand
 the working principles of the algorithm thoroughly and present a detailed analysis of the stochastic process 
at bit level, as also done in many other running time 
analyses of EDAs \cite{FriedrichKK16EDA, FriedrichEtAlISAAC15,SudholtWitt2016,KrejcaWittFOGA2017}.

This paper is structured as follows: in Section~\ref{sec:prelims}, we introduce the setting 
we are going to analyze and summarize some tools from probability theory that are used 
throughout the paper. In particular, a new negative drift theorem is presented. It generalizes 
previous formulations by making  
milder assumptions on steps in the direction of the drift than on steps against the drift. 
 In this section, we also give a detailed analysis of the update rule of the \umda, which results 
in a bias of the frequencies~$p_i$ towards higher values. These techniques are presented 
for the \om-case, but contain some general insights that may be useful in analyses of 
different fitness functions. 
In Section~\ref{sec:upperBound}, we prove the upper bound for the 
case of $\mu$ above the phase transition point $\Theta(\sqrt{n}\log n)$. The 
case of $\mu$ below this point is dealt with in Section~\ref{sec:upperBoundTwo}. 
We finish with some conclusions. The appendix gives a self-contained proof 
of the new drift theorem.

\paragraph{Independent, related work.} Very recently, Lehre and Nguyen \cite{LehreNguyenGECCO17} 
independently obtained the upper bound $O(\lambda n)$ for $c\log n\le \mu = O(\sqrt{n})$ and 
$\lambda=\Omega(\mu)$ using a 
refined application of the so-called level-based method. Our approach also covers larger $\mu$ (but requires 
$\lambda = \mu(1+\Theta(1))$) and is 
technically different.

\section{Preliminaries}
\label{sec:prelims}

\begin{sloppypar}We consider the so-called \emph{Univariate Marginal Distribution Algorithm} (\umda~\cite{MuhlenbeinPaass1996}) 
in Algorithm~\ref{alg:UMDA} that maximizes the pseudo-Boolean function~$f$. Throughout this paper, we have 
$f\coloneqq \om$, where, for all $x=(x_1,\dots,x_n) \in \{0, 1\}^n$,
\[
\om(x) = \sum_{i = 1}^{n} x_i.
\]
Note that the unique maximum is the all-ones bit string. However, a more general version can 
be defined by choosing an arbitrary optimum $a \in \{0, 1\}^n$ and defining, for all $x \in \{0, 1\}^n$,
$
    \om_a(x) = { n - d_{\mathrm{H}}(x, a)},
$
where $d_{\mathrm{H}}(x, a)$ denotes the Hamming distance of the bit strings $x$ and $a$. 
Note that $\om_{1^n}$ is equivalent to the original definition of $\om$. Our analyses hold true 
for any function $\om_a$, with $a \in \{0, 1\}^n$, due to symmetry of the \umda's update rule.\end{sloppypar}

\begin{algorithm2e}
    $t \gets 0$, 
    $p_{t,1} \gets p_{t,2} \gets \cdots \gets p_{t,n} \gets \tfrac{1}{2}$\;
    \While{\emph{termination criterion not met}}
    {
        $P_t \gets \emptyset$\;
        \For{$j \in \{1, \ldots, \lambda\}$}
        {
            \For{$i \in \{1, \ldots, n\}$}
            {
                $x^{(j)}_{t, i} \gets 1$ with prob.\ $p_{t,i}$ and $x^{(j)}_{t,i} \gets 0$ with prob.\ $1 - p_{t,i}$\;
            }
            $P_t \gets P_t \cup \{x^{(j)}_t\}$\;
        }
        Sort individuals in $P$ descending by fitness (such that $f(x^{(1)}_t) \ge \dots\ge f(x^{(\mu)}_t)$), breaking ties uniformly at random\;
        \For{$i \in \{1,\dots,n\}$}
        {
            $p_{t + 1,i} \gets \frac{\sum_{j = 1}^{\mu} x^{(j)}_{t,i}}{\mu}$\;
            $\bigl[R\bigr]$ Restrict $p_{t+1,i}$ to be within $[\tfrac{1}{n}, 1 - \tfrac{1}{n}]$\;
        }
        $t \gets t+1$\;
    }
    \caption{Univariate Marginal Distribution Algorithm (\umda); algorithm \umdastar is obtained if the line indexed $\bigl[R\bigr]$ is omitted.}
    \label{alg:UMDA}
\end{algorithm2e}

We call bit strings \emph{individuals} and their respective $\om$-values \emph{fitness}.

The \umda does not store an explicit population but does so implicitly, as usual in 
EDAs. For each of the $n$ different bit positions, 
it stores a rational number $p_i$, which we call \emph{frequency}, determining how likely it is 
that a hypothetical individual would have a $1$ at this position. In other words, the \umda stores 
a probability distribution over $\{0, 1\}^n$. The starting distribution samples according to 
the uniform distribution, \ie, 
 $p_i=1/2$ for $i\in\{1,\dots,n\}$. 

In each so-called generation~$t$, the \umda samples $\lambda$ individuals such that each individual has a $1$ 
at position $i$, where $i \in \{1, \dots, n\}$ with probability $p_{t,i}$, independent of all the other 
frequencies. Thus, the number of $1$s is sampled according to a Poisson binomial distribution with probability 
vector $\bm{p}_{t}=(p_{t,i})_{i \in \{1, \dots, n\}}$.

After sampling $\lambda$ individuals, $\mu$ of them with highest fitness are chosen, 
breaking ties uniformly at random (so-called \emph{selection}). Then, for each position, 
the respective frequency is set to the relative occurrence of $1$s in this position. That is, 
if the chosen $\mu$ best individuals have $x$ $1$s at position~$i$ among them, the frequency $p_i$ will be 
updated to $x/\mu$ for the next iteration. Note that such an update allows large 
jumps like, \eg, from $(\mu - 1)/\mu$ to $1/\mu$.

If a frequency is either $0$ or $1$, it cannot change anymore since then all values at this position will be 
either $0$ or $1$. To prevent the \umda from getting stuck in this way, we narrow the interval of possible 
frequencies down to $[1/n, 1 - 1/n]$ and call $1/n$ and $1-1/n$ the \emph{borders} for the frequencies. Hence, 
there is always a chance of sampling $0$s and $1$s for each position. This is a common approach used by other EDAs 
as well, such as the cGA or ACO algorithms (cf.\ the references given in the introduction). We also consider a variant of the \umda 
called \umdastar where the borders 
are not used. That algorithm will typically not have finite expected running time; however, it might still be efficient 
with high probability if it is sufficiently unlikely that frequencies get stuck at bad values.

Overall, we are interested in upper bounds on the \umda's expected number of \emph{function evaluations} on $\om$ 
until the optimum is sampled; this number is typically called \emph{running time} or \emph{optimization time}. 
Note that this equals $\lambda$ times the expected number of generations until the optimum is sampled.

In all of our analyses, we  assume that $\lambda = (1 + \beta)\mu$ for some arbitrary constant $\beta > 0$ and 
use $\mu$ and $\lambda$ interchangeably in asymptotic notation. Of course, we 
could also choose $\lambda = \omega(\mu)$ but then each generation would be  more expensive. Choosing $\lambda = \Theta(\mu)$ 
lets us basically focus on the minimal number of function evaluations per generation, as $\mu$ of them are at least 
needed to make an update.

\subsection{Useful Tools from Probability Theory}

We will see that the number of $1$s sampled by the UMDA at a certain 
position is binomially distributed with the frequency as success probability. 
In our analyses,  
we will therefore often have to bound the tail of binomial and related distributions.   
To this end, 
many classical techniques such as Chernoff-Hoeffding bounds exist. The following 
version, which includes the knowledge of the variance, is particularly handy to use.

\begin{lemma}[\cite{McDiarmid1998}] 
\label{lem:mcdiarmid}
If $X_1,\dots,X_n$ are independent, and $X_i-\E(X_i)\le b$ for 
$i\in\{1,\dots,n\}$, then for $X\coloneqq X_1+\dots+X_n$ and any $d\ge 0$ it holds 
that \[\Prob(X-\E(X)\ge d) \le e^{-\frac{d^2}{2\sigma^2(1+\delta/3)}},\]
where $\sigma^2 \coloneqq \Var(X)$ and $\delta \coloneqq bd/\sigma^2$.
\end{lemma}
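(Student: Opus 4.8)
The plan is to prove this as a one-sided Bernstein-type inequality via the exponential-moment (Chernoff) method, exploiting that the hypothesis only controls the \emph{upward} deviations $X_i-\E(X_i)\le b$. First I would recenter, writing $Y_i\coloneqq X_i-\E(X_i)$, so that $\E(Y_i)=0$, $Y_i\le b$, and $\sum_i\Var(Y_i)=\sigma^2$. For any $\theta>0$, Markov's inequality applied to $\eulerE^{\theta(X-\E(X))}$ together with independence gives
\[
\Prob(X-\E(X)\ge d)\le \eulerE^{-\theta d}\prod_{i=1}^n \E\bigl(\eulerE^{\theta Y_i}\bigr),
\]
so everything reduces to bounding a single moment-generating factor and then optimizing over $\theta$.

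The key step is a bound on $\E(\eulerE^{\theta Y_i})$ that uses only the one-sided constraint $Y_i\le b$. Here I would use that $\psi(x)\coloneqq(\eulerE^{x}-1-x)/x^2$ is non-decreasing on all of $\Re$; since $\theta Y_i\le\theta b$ for $\theta>0$, this yields the pointwise bound $\eulerE^{\theta Y_i}\le 1+\theta Y_i+Y_i^2(\eulerE^{\theta b}-1-\theta b)/b^2$. Taking expectations, using $\E(Y_i)=0$ and $1+z\le\eulerE^{z}$, gives $\E(\eulerE^{\theta Y_i})\le\exp\bigl(\Var(Y_i)(\eulerE^{\theta b}-1-\theta b)/b^2\bigr)$. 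Multiplying over $i$ collapses the product into a single exponent, so the tail is at most $\exp\bigl(-\theta d+\tfrac{\sigma^2}{b^2}(\eulerE^{\theta b}-1-\theta b)\bigr)$.

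It remains to relax this Bennett-type exponent into the advertised form and to pick $\theta$. I would use the elementary inequality $\eulerE^{x}-1-x\le\frac{x^2/2}{1-x/3}$, valid for $0\le x<3$, which follows by comparing the power series of both sides term by term (the coefficient bound $2\cdot 3^{k-2}\le k!$ holds for every $k\ge 2$). Setting $x=\theta b$ and then choosing $\theta=d/\bigl(\sigma^2(1+\delta/3)\bigr)$ with $\delta=bd/\sigma^2$, one checks that $\theta b<3$ and that $1-\theta b/3=(1+\delta/3)^{-1}$, after which the two terms of the exponent combine exactly to $-d^2/\bigl(2\sigma^2(1+\delta/3)\bigr)$, which is the claimed bound.

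The main obstacle — and the reason the statement is sharper than the naive Hoeffding bound — is the asymmetry of the hypotheses: only $Y_i\le b$ is assumed, with no matching lower bound, so every step must go through with information about upward fluctuations alone. This is precisely what the monotonicity of $\psi$ buys us, since for $\theta>0$ the only comparison needed is $\theta Y_i\le\theta b$, however negative $Y_i$ may be. The remaining ingredients (monotonicity of $\psi$, the series comparison, and the optimization over $\theta$) are routine calculus; since the statement is quoted verbatim from \cite{McDiarmid1998}, in the paper itself one may simply cite it, but the argument above reconstructs it in full.
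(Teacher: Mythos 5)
Your proof is correct. The paper itself gives no proof of this lemma --- it is quoted verbatim from the cited survey of McDiarmid --- and your argument is precisely the standard one found there: Chernoff's method, the one-sided MGF bound via the monotonicity of $\psi(x)=(\eulerE^x-1-x)/x^2$, the series comparison $\eulerE^x-1-x\le \tfrac{x^2/2}{1-x/3}$, and the optimal choice $\theta=d/(\sigma^2(1+\delta/3))$, for which the algebra (in particular $1-\theta b/3=(1+\delta/3)^{-1}$ and $\theta b<3$) checks out exactly as you state. The only assertion left unproved is the monotonicity of $\psi$ on all of $\Re$, which is the standard calculus fact this argument always rests on; everything else is complete.
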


The following lemma describes a result regarding the Poisson binomial distribution which 
we find very intuitive. However, as we did not find a sufficiently related result in the literature, 
we give a self-contained 
proof here. Roughly, the lemma considers a chunk of the distribution 
around the expected value 
whose joint probability is a constant less than~$1$ and then argues that every 
point in the chunk has a probability that is at least inversely proportional to the variance. 
See Figure~\ref{fig:illustration-1-over-sigma} for an illustration.
\begin{lemma}
\label{lem:at-least-1-over-sigma}
Let $X_1,\dots,X_n$ be independent Poisson trials. Denote $p_i=\Prob(X_i=1)$ for $i\in\{1,\dots,n\}$, 
$X\coloneqq \sum_{i=1}^n X_i$, 
$\mu\coloneqq \E(X)=\sum_{i=1}^n p_i$ and $\sigma^2\coloneqq \Var(X)=\sum_{i=1}^n p_i(1-p_i)$.
Given two constants $\ell,u\in(0,1)$ such that $\ell+u<1$, 
let $k_\ell\coloneqq \min\{i\mid \Prob(X\le i)\ge \ell\}$ and $k_u
\coloneqq \max\{i\mid \Prob(X\ge i)\ge u\}$. Then 
it holds that $\Prob(X=k)=\Omega(\min\{1,1/\sigma\})$ for all $k\in\{k_\ell,\dots,k_u\}$, where 
the $\Omega$-notation is with respect to~$n$.
\end{lemma}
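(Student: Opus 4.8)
My plan is to exploit the \emph{log-concavity} of the Poisson binomial distribution. Writing $p_k\coloneqq\Prob(X=k)$, each summand $X_i$ has the (trivially log-concave) distribution on $\{0,1\}$, and since a convolution of log-concave sequences is again log-concave, the sequence $(p_k)$ is log-concave, i.e.\ $p_k^2\ge p_{k-1}p_{k+1}$, equivalently the ratios $r_k\coloneqq p_{k+1}/p_k$ are non-increasing in $k$. In particular $(p_k)$ is unimodal, so its minimum over any integer interval is attained at one of the two endpoints. Hence it suffices to bound $p_{k_\ell}$ and $p_{k_u}$ from below by $\Omega(\min\{1,1/\sigma\})$, because every other point of $\{k_\ell,\dots,k_u\}$ then has at least the smaller of these two values.

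First I would collect the elementary properties of the chunk. From $\Prob(X\le k_\ell-1)<\ell$ and $\Prob(X\ge k_u+1)<u$ one gets $\Prob(k_\ell\le X\le k_u)> 1-\ell-u$, where $c\coloneqq 1-\ell-u>0$ is a positive constant; the same definitions, together with monotonicity of the tails, give $\Prob(X\ge k)\ge u$ for every $k\le k_u$ and $\Prob(X\le k)\ge\ell$ for every $k\ge k_\ell$. A Chebyshev bound shows that both $k_\ell$ and $k_u$ lie within distance $\lambda\sigma$ of $\mu$ for a constant $\lambda=\lambda(\ell,u)$, so the chunk contains $W=\bigO(\sigma)$ integer points. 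Averaging the mass $c$ over these points yields a chunk point of probability at least $c/W$; since $W\le\max\{1,2\lambda\sigma\}$ this gives $p_{\max}=\Omega(\min\{1,1/\sigma\})$ (the cap at a constant being automatic as $p_{\max}\le 1$).

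The crux is then to transport this peak bound out to the endpoints while losing only a constant factor. Consider the right endpoint $k_u$ in the case where it attains the minimum over the chunk; unimodality then forces the mode $m$ to satisfy $m\le k_u$, so $r_{k_u}\le 1$. Log-concavity gives $p_{k_u+t}\le p_{k_u}r_{k_u}^{\,t}$ for $t\ge 0$, hence $u\le\Prob(X\ge k_u)\le p_{k_u}/(1-r_{k_u})$, i.e.\ $p_{k_u}\ge(1-r_{k_u})u$. I would split on the local decay rate $r_{k_u-1}$. If $r_{k_u-1}<1-1/W$, then $1-r_{k_u}\ge 1-r_{k_u-1}>1/W$, so the inequality already gives $p_{k_u}>u/W=\Omega(1/\sigma)$. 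If instead $r_{k_u-1}\ge 1-1/W$, the decay from the mode is so slow that $p_{k_u}=p_m\prod_{j=m}^{k_u-1}r_j\ge p_m\,r_{k_u-1}^{\,k_u-m}\ge p_m(1-1/W)^{W}=\Omega(1)\,p_{\max}=\Omega(1/\sigma)$, using that the ratios are non-increasing and $k_u-m\le W$. The left endpoint $k_\ell$ is handled symmetrically via $\Prob(X\le k_\ell)\ge\ell$ and the reciprocal ratios, and whenever an endpoint is \emph{not} on the tail-side of the mode it is not the minimising endpoint, so its (possibly invalid) bound is not needed. Combining the two endpoint bounds with the reduction from the first paragraph proves $p_k=\Omega(\min\{1,1/\sigma\})$ throughout the chunk.

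I expect the genuine difficulty to be exactly this last transport step: controlling how much the point probabilities may decay over the $\Theta(\sigma)$-wide chunk. Log-concavity is indispensable here—without the monotonicity of the ratios $r_k$ one could not rule out a sharp drop right at the edge of the chunk, and indeed the statement fails for general distributions with a gap. The slightly delicate bookkeeping is ensuring that the endpoint one bounds really lies on the tail-side of the mode, so that $r_{k_u}\le 1$ (respectively the reciprocal ratio at $k_\ell$) is available; everything else (the constant $\lambda$ from Chebyshev, the estimate $(1-1/W)^{W}=\Omega(1)$) is routine.
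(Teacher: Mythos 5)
Your proof is essentially correct, but it takes a genuinely different route from the paper. The paper first disposes of the case $\sigma=o(1)$ separately, and for $\sigma=\Omega(1)$ it only uses \emph{unimodality} together with Samuels' localisation of the mode at $\lfloor\E(X)\rfloor$ or $\lceil\E(X)\rceil$: writing $\alpha=\Prob(X=k_\ell)$ and assuming $k_\ell\le\lfloor\E(X)\rfloor$, it observes that $\Prob\bigl(X\le k_\ell-\ell/(2\alpha)\bigr)\ge\ell/2$, so if $\alpha=o(1/\sigma)$ a constant amount of mass would sit at distance $\omega(\sigma)$ below the mean, forcing $\Var(X)=\omega(\sigma^2)$ --- a contradiction; monotonicity then carries the bound from the endpoint up to the mode. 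You instead invoke the \emph{log-concavity} of the Poisson binomial pmf (a stronger structural fact, though equally classical), obtain a high point in the chunk by Chebyshev plus pigeonhole, and transport the bound to the endpoints via a dichotomy on the decay ratio: either the ratio at the endpoint is bounded away from $1$ by $1/W$, in which case the geometric tail bound $\Prob(X\ge k_u)\le p_{k_u}/(1-r_{k_u})$ already gives $p_{k_u}\ge u/W$, or all ratios over the chunk are at least $1-1/W$ and the peak value decays by only a constant factor. Your approach buys uniformity (no separate $\sigma=o(1)$ case) and does not need to know where the mode sits relative to the mean; the paper's variance argument buys a weaker hypothesis (plain unimodality) and avoids the ratio bookkeeping. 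One small imprecision: in your Case B you bound $k_u-m\le W$ where $m$ is the global mode, but nothing you have established prevents $m<k_\ell$; the fix is immediate --- start the telescoping product at the chunk's argmax $\max\{m,k_\ell\}$ rather than at $m$, since that point still carries probability at least $c/W$ and lies within $W$ of $k_u$. With that one-line patch the argument is sound.
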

 
\begin{figure}
\centering
\begin{tikzpicture}[xscale=1]
\clip(1,-0.5) rectangle +(8,3.63);
\begin{axis}[
  no markers, domain=2:18, samples=20,
  axis lines*=left, xlabel={}, ylabel=distribution,
  every axis y label/.style={at=(current axis.above origin),anchor=south},
  every axis x label/.style={at=(current axis.right of origin),anchor=west},
  height=4cm, width=12cm,
  xtick=\empty, ytick=\empty,
  extra x ticks={6.85, 12.09},
  extra x tick style={grid=major,xticklabel shift=0},
  extra x tick labels={},
  enlargelimits=false, clip=false, axis on top,
  grid = none,
  hide y axis,
	bar shift = 0cm,
	ybar , bar width = 1, 
	declare function={binom(\n,\p) = \n!/(x!*(\n-x)!)*\p^x*(1-\p)^(\n-x);
    },
		declare function={binoma(\n,\p) = 1.15*\n!/(x!*(\n-x)!)*\p^x*(1-\p)^(\n-x);
    },
    xtick style={draw=none},
    yticklabel style={
        /pgf/number format/fixed,
        /pgf/number format/fixed zerofill,
        /pgf/number format/precision=2,
    }
  ]
	\addplot[forget plot, domain=0:20] {binom(20,0.48)};
	\begin{scope}
	\clip(0,0) rectangle (5.8,1.2	);
	\addplot[fill=black!50,domain=0:20, forget plot] {binom(20,0.48)};
	\end{scope}
	\begin{scope}
	\clip(0,0) rectangle (0.0,1.2	);
		\addplot[forget plot,fill=black!50, domain=0:20] {binoma(20,0.48)};
	\end{scope}
	
	\begin{scope}
	\clip(13.2,0) rectangle +(7,2	);
	\addplot[fill=black!50,domain=0:20] {binom(20,0.48)};
	\end{scope}
  
  \draw(axis cs:  5.65,1.8) node {tail $\ge \ell$};
	\draw(axis cs:  13.3,1.8) node {tail $\ge u$};
	\node(A) at (axis cs: 7,-0.4) {};
	\node(B) at (axis cs: 13,-0.4) {};	
	\node(C1) at (axis cs: 9.5,-0.2) {$\mu$};
	\node(C2) at (axis cs: 6.4,-0.2) {$k_\ell$};
	\node(C3) at (axis cs: 12.6,-0.2) {$k_u$};
	\draw[->](C1) -- node[pos=0.48,above=-0.8]{\tiny maximize} (C2) ;
	\draw[->](C1) -- node[pos=0.5,above=-0.8]{\tiny maximize} (C3) ;
	\node at (axis cs: 9.4,2.3) {probability $\Omega(\min\{1,1/\sigma\})$ each};
	\node(A0) at (axis cs: 6.3,0.7) {};
	\node(A1) at (axis cs: 7.3,1.1) {};
	\node(A2) at (axis cs: 8.4,1.58) {};
	\node(A3) at (axis cs: 9.5,1.75) {};
	\node(A4) at (axis cs: 10.6,1.6) {};
	\node(A5) at (axis cs: 11.7,1.2) {};
	\node(A6) at (axis cs: 12.7,0.72) {};
	\node(B1) at (axis cs: 9.5,2.1) {};
	\node(B2) at (axis cs: 9.5,2.2) {};
	\node(B3) at (axis cs: 9.6,2.15) {};
	\node(B4) at (axis cs: 9.4,2.15) {};
	\draw[->,node distance = 0.0] (B1) edge[bend right=45]  (A1);
	\draw[->,node distance = 0.0] (B1) edge[bend right=45]  (A0);
	\draw[->] (B3) edge[bend right=20] (A2);
	\draw[->] (B2) -- (A3);
	\draw[->] (B4) edge[bend left=20] (A4);
	\draw[->] (B1) edge[bend left=45] (A5);
	\draw[->] (B1) edge[bend left=45] (A6);
	\end{axis}
\end{tikzpicture}
\caption{Illustration of Lemma~\ref{lem:at-least-1-over-sigma}.}
\label{fig:illustration-1-over-sigma}
\end{figure}
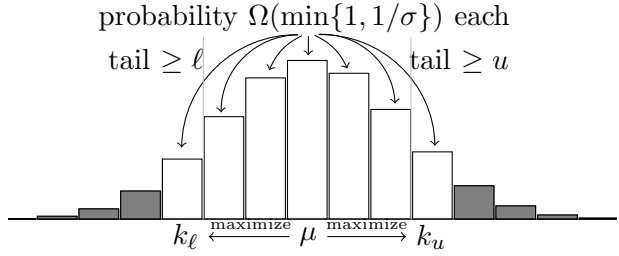

\begin{proof}
To begin with, we note that $k_\ell\le k_u$.  This holds since by assumption $\Prob(X<k_\ell)<\ell$ 
and $\Prob(X>k_u)<u$, hence $\Prob(k_\ell \le X \le k_u)\ge 1-\ell-u > 0$, using $\ell+u<1$. If $k_\ell>k_u$
happened, 
we would obtain a contradiction.

We first handle the case $\sigma=o(1)$ separately. This implies $\Prob(X-\E(X)\ge 1/2)=o(1)$ and analogously 
$\Prob(X-\E(X)\le -1/2)=o(1)$. Namely, if we had 
$\Prob(X-\E(X)\ge 1/2)=\Omega(1)$, then $\E((X-\E(X))^2)=\Omega(1)$, contradicting the assumption $\sigma=o(1)$; 
analogously for the other inequality.
 
Let $[\E(X)]$ be the integer closest to~$\E(X)$, which is unique since, as argued in the previous paragraph, 
$\E(X)-[\E(X)] = 1/2$ would contradict 
$\sigma=o(1)$. We assume 
$[\E(X)]=\lceil \E(X)\rceil$ and note that the case $[\E(X)]=\lfloor \E(X)\rfloor$ is analogous. 
From the previous paragraph, we now obtain that 
 $\Prob(X\le \lfloor\E(X)\rfloor)=o(1)$ and therefore $\Prob(X\ge \lceil\E(X)\rceil)=1-o(1)$. Moreover, 
again using $\sigma=o(1)$, we also obtain $\Prob(X\ge \lceil\E(X)\rceil+1)=o(1)$.
Since $\ell$ and 
$u$ are positive constants less than~$1$, 
it immediately follows that $k_\ell=k_u=[\E(X)]$ and $\Prob(X=k_\ell)=1-o(1)=\Omega(1)$. 

%
%

In the following, we assume $\sigma=\Omega(1)$. We use that 
$k_\ell\le \lfloor\E(X)\rfloor$ or $k_u\ge \lceil\E(X)\rceil$ (or both) 
must hold; otherwise, since $\lceil\E(X)\rceil\le \lfloor\E(X)\rfloor+1$, we 
would contradict the fact $k_\ell\le k_u$. 
Hereinafter we consider the case $k_\ell\le \lfloor\E(X)\rfloor$ and note that other case is symmetrical.
 We start by proving 
$\Prob(X=k_\ell)=\Omega(1)$. To this end, we recall 
the unimodality of the Poisson binomial distribution function, more precisely 
$\Prob(X=i)\le \Prob(X=i+1)$ for $i\le \lfloor\E(X)\rfloor-1$ and $\Prob(X=i)\ge \Prob(X=i+1)$ for 
$i\ge \lceil\E(X)\rceil$ \cite{Samuels1965}. Hence, 
 denoting  $\alpha\coloneqq \Prob(X=k_\ell)$, we have 
$\Prob(X=i)\le \alpha$ for all $i\le k_\ell$. It follows  
 that $\Prob(X\le k_\ell-\ell/(2\alpha))\ge \ell/2=\Omega(1)$ 
since $\Prob(X\le k_\ell)\ge \ell$ by definition. We remark (but do not use) that this also implies 
a lower bound on $k_\ell$. 

If $\alpha=o(1/\sigma)$ held, the fact that $\Prob(X\le k_\ell-\ell/(2\alpha))=\Omega(1)$ 
would imply $\sqrt{\Var(X)}=\Omega(1/\alpha)=\omega(\sigma)$, 
contradicting our assumption $\sqrt{\Var(X)}=\sigma$. Hence, 
$\Prob(X=k_\ell)=\Omega(1/\sigma)$. Again using the monotonicity of 
the Poisson binomial distribution, we have
$\Prob(X=i)=\Omega(1/\sigma)$ for all $i\in\{k_\ell,\dots,\lfloor\E(X)\rfloor\}$. 
If $k_u\le \lfloor\E(X)\rfloor$, this already proves 
$\Prob(X=i)=\Omega(1/\sigma)$ for all $i\in\{k_\ell,\dots,k_u\}$ and nothing 
is left to show. Otherwise 
the bound follows for the remaining $i\in\{\lceil \E(X)\rceil,\dots,k_u\}$ 
by a symmetrical argument, more precisely by first showing that 
$\Prob(X=k_u)=\Omega(1/\sigma)$ and then 
using that $\Prob(X=i)\ge \Prob(X=i+1)$ for $i\ge \lceil \E(X)\rceil$.  
%
\qed\end{proof}

As mentioned, we will study how the frequencies associated with single bits evolve over time. 
To analyze the underlying stochastic process, the following theorem will be used. It generalizes 
the so-called \emph{simplified drift theorem with scaling} from \cite{OlivetoWittTCS15}. The crucial 
relaxation is that the original version demanded an exponential decay \wrt jumps in both 
directions, more precisely the second condition below was on $\Prob(\lvert X_{t+1}-X_{t}\rvert\ge jr)$. 
 We now only have sharp demands on jumps in the undesired direction while 
there is a milder assumption (included in the first item) on jumps in the desired direction. Roughly 
speaking, if constants in the statements do not matter, the previous version of the drift theorem 
is implied by the current one 
as long as $r=O(1)$. 

The theorem uses the notation $\E(X\mid \filtt;A)$ for filtrations $\filtt$ and events~$A$ to denote 
the expected value $\E(X\mid \filtt)$ in the conditional probability space on event~$A$. If $A$ is not a null event, 
then $\E(X\mid \filtt;A)\ge \epsilon$ is equivalent to $\E(X-\epsilon;A\mid \filtt)\ge 0$, where the 
notation ``$;A$'' just denotes the multiplication with $\indic{A}$; in fact 
the notation $\E(X;A\mid \filtt)$ is often used in the literature, \eg, by Hajek~\cite{Hajek1982}. 
Additionally $X\preceq Y$ denotes that $X$ is stochastically at most as large as~$Y$.
The proof of 
the theorem is given in the appendix.

\begin{theorem}[Generalizing \cite{OlivetoWittTCS15}]\label{theo:negative-drift-scaling-2017}
  Let $X_t$, $t\ge 0$, be real-valued random variables describing a
	stochastic process over some state space, adapted to a filtration $\filtt$. Suppose 
  there exist an interval $[a,b]\subseteq \R$
   and, possibly depending on
  $\ell\coloneqq b-a$, a drift bound $\epsilon\coloneqq \epsilon(\ell)>0$, 
	a typical forward jump factor $\kappa\coloneqq \kappa(\ell)>0$,
	a scaling factor $r\coloneqq r(\ell)>0$ 
	as well as a sequence of functions $\Delta_t\coloneqq \Delta_t(X_{t+1}-X_t)$ satisfying 
	$\Delta_t\preceq X_{t+1}-X_t$ 
   such that  
  for all $t\ge 0$ the following three conditions hold:
  \begin{enumerate}
  \item 
	$\E(\Delta_t\cdot\indic{\Delta_t\le \kappa\epsilon  } \mid \filtt \,;\, a< X_t <b) \ge \epsilon$,
  \item $\Prob(\Delta_t\le -jr \mid \filtt  \,;\, a< X_t)  \le  e^{-j}$ for all $j\in \N$, 
  \item
  $  \lambda \ell \ge 2\ln(4/(\lambda \epsilon))$, 
	where $\lambda\coloneqq \min\{1/(2r),\epsilon/(17r^2),1/(\kappa\epsilon)\}$. 
  \end{enumerate}
  Then for   $T^*\coloneqq \min\{t\ge
  0 \mid X_t\le a \}$ it holds that $\Prob(T^*\le
  e^{\lambda\ell/4}\mid \filt_0 \,;\, X_0\ge b) = O(e^{-\lambda\ell/4})$.
\end{theorem}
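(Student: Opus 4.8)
The plan is to control the process through the exponential potential $\potentialFunction(X_t)\coloneqq\eulerE^{-\lambda(X_t-a)}$, which equals $1$ on the boundary $X_t=a$, exceeds $1$ once $X_t<a$, and is at most $\eulerE^{-\lambda\ell}$ as soon as $X_t\ge b$. Writing $\potentialFunction(X_{t+1})=\potentialFunction(X_t)\,\eulerE^{-\lambda(X_{t+1}-X_t)}$ and using that $z\mapsto\eulerE^{-\lambda z}$ is decreasing together with $\Delta_t\preceq X_{t+1}-X_t$, I would first reduce everything to the single scalar estimate
\[
\E\bigl(\eulerE^{-\lambda\Delta_t}\mid\filtt\,;\,a<X_t<b\bigr)\le 1,
\]
which then gives the supermartingale property $\E(\potentialFunction(X_{t+1})\mid\filtt\,;\,a<X_t<b)\le\potentialFunction(X_t)$ inside the interval. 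This is the only place where the three hypotheses enter, and it is here that the asymmetry of the new conditions must be exploited.

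To prove the scalar estimate I would split $\E(\eulerE^{-\lambda\Delta_t})$ according to the size of $\Delta_t$ into three ranges. On moderate steps $-r\le\Delta_t\le\kappa\epsilon$, the bounds $\lambda r\le 1/2$ and $\lambda\kappa\epsilon\le1$ keep $\lambda\lvert\Delta_t\rvert$ bounded, so $\eulerE^{-\lambda\Delta_t}\le 1-\lambda\Delta_t+\lambda^2\Delta_t^2$, and the truncated drift condition~1 yields a linear gain of at least $\lambda\epsilon$ while the quadratic term is an $\bigO(\lambda^2(r^2+(\kappa\epsilon)^2))$ variance contribution. Large forward jumps $\Delta_t>\kappa\epsilon$ satisfy $\eulerE^{-\lambda\Delta_t}<1$ and hence only help — this is exactly why the milder, truncated assumption on forward steps suffices. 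Large backward jumps $\Delta_t<-r$ are governed by condition~2; bounding their contribution by the geometric series $\sum_{j\ge1}\eulerE^{\lambda(j+1)r}\eulerE^{-j}$, which converges because $\lambda r\le 1/2$, I would show it too is dominated by the linear gain. Collecting the three ranges leaves $\E(\eulerE^{-\lambda\Delta_t})\le 1-\lambda\epsilon+(\text{lower order})\le 1$, and it is precisely the choice $\lambda=\min\{1/(2r),\epsilon/(17r^2),1/(\kappa\epsilon)\}$ — in particular the factor $17$ — that forces the variance and backward-tail terms below $\lambda\epsilon$. I expect this balancing of the three constraints to be the main obstacle: with the relaxed forward condition one cannot symmetrize, and the interdependence (a heavy backward tail forces a large $\kappa$, hence a small $\lambda$, hence a small $\lambda r$) must be tracked carefully.

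The supermartingale property only holds strictly inside $(a,b)$, whereas the process starts at $X_0\ge b$ and may re-enter $\{x\ge b\}$ repeatedly, where condition~1 gives no drift and $\potentialFunction$ may grow in expectation. I would handle this by a renewal argument over excursions rather than a global supermartingale: decompose the trajectory into the maximal time intervals during which $a<X_t<b$. Each excursion is entered from $X\ge b$, so by $\Delta_t\preceq X_{t+1}-X_t$ and condition~2 the downward entry step has an exponential tail, whence the entry value $X_s$ satisfies $\E(\potentialFunction(X_s)\mid\filt_{s-1})\le\eulerE^{-\lambda\ell}\,\E(\eulerE^{-\lambda\Delta_{s-1}}\mid\filt_{s-1})=\bigO(\eulerE^{-\lambda\ell})$, again using the convergent geometric sum. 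Applying optional stopping to the nonnegative supermartingale $\potentialFunction(X_t)$ stopped at the first exit of the excursion from $(a,b)$, and noting $\potentialFunction\ge1$ on $\{X_t\le a\}$, gives $\Prob(\text{this excursion reaches }a\mid\filt_s)\le\potentialFunction(X_s)$; a possible direct jump from $\{x\ge b\}$ to $\{x\le a\}$ is absorbed into the same estimate via condition~2.

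Finally I would assemble the bound by a union bound over excursions. Since each excursion occupies at least one step, within the horizon $\eulerE^{\lambda\ell/4}$ there are at most $\eulerE^{\lambda\ell/4}$ of them, so taking expectations of the per-excursion estimate and summing (justified by the strong Markov property) yields
\[
\Prob\bigl(T^*\le \eulerE^{\lambda\ell/4}\mid\filt_0\,;\,X_0\ge b\bigr)\le \eulerE^{\lambda\ell/4}\cdot\bigO(\eulerE^{-\lambda\ell})=\bigO(\eulerE^{-3\lambda\ell/4})=\bigO(\eulerE^{-\lambda\ell/4}),
\]
as claimed. Condition~3, namely $\lambda\ell\ge 2\ln(4/(\lambda\epsilon))$, is what guarantees that $\eulerE^{\lambda\ell}$ is large enough for the horizon to be nontrivial and for the lower-order corrections and geometric-series constants collected above to be safely absorbed.
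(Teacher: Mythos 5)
Your overall architecture---bound $\E(e^{-\lambda\Delta_t}\mid\filtt)$ inside $(a,b)$, control the overshoot on re-entry from $\{x\ge b\}$, and union-bound over at most $e^{\lambda\ell/4}$ excursions---is exactly the route the paper takes; the paper packages the last two steps as a citation of Hajek's drift theorem and only proves the moment-generating-function bound explicitly. The gap is inside that mgf estimate, at the one place where the asymmetric hypotheses have to do real work. You bound the second-order contribution of the moderate range by the worst case $\lambda^2\E(\Delta_t^2)=\bigO(\lambda^2(r^2+(\kappa\epsilon)^2))$ and assert that the definition of $\lambda$ pushes this below $\lambda\epsilon$. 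For the $(\kappa\epsilon)^2$ part this would require $\lambda\kappa^2\epsilon=\bigO(1)$, but the definition only supplies $\lambda\le 1/(\kappa\epsilon)$, i.e.\ $\lambda\kappa^2\epsilon\le\kappa$, which is useless when $\kappa$ is large---and in the paper's own application (Lemma~\ref{lem:drift-theorem-for-probabilities}) one has $\kappa=\Theta(\sqrt{\mu})$, $\epsilon=\Theta(\log n)$, $r=\Theta(\sqrt{\mu})$, $\lambda=\Theta((\log n)/\mu)$, so $\lambda^2(\kappa\epsilon)^2=\Theta(\log^4 n/\mu)$ while $\lambda\epsilon=\Theta(\log^2 n/\mu)$: your bound loses by a factor $\log^2 n$. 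The repair is not the worst-case bound but the self-improving estimate $\Delta_t^2\le\kappa\epsilon\cdot\Delta_t$ on $\{0\le\Delta_t\le\kappa\epsilon\}$, so the quadratic forward term is at most $\tfrac{\lambda^2\kappa\epsilon}{2}\,\E\bigl(\Delta_t\indic{0\le\Delta_t\le\kappa\epsilon}\bigr)\le\tfrac{\lambda}{2}\E(\Delta_t\indic{\Delta_t\ge0})$ by $\lambda\le1/(\kappa\epsilon)$, i.e.\ it eats at most half of the linear gain from condition~1. This is precisely what the paper does, and it is what makes a ``typical forward jump factor'' usable at all.

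A second, smaller bookkeeping problem: you claim the backward-jump contribution $\sum_{j\ge1}e^{\lambda(j+1)r}e^{-j}$ ``is dominated by the linear gain''. That sum is a constant (about $2.5$ when $\lambda r=1/2$), which is certainly not below $\lambda\epsilon$. What is actually true is that the zeroth- and first-order parts of $e^{-\lambda\Delta_t}$ on $\{\Delta_t<-r\}$ are already accounted for: the $1$ is part of $\Prob(\Delta_t<-r)$, and the linear part is contained in condition~1's truncated expectation (equivalently, it cancels against the extra linear gain your moderate range acquires by excluding the backward tail). Only the terms of order $k\ge 2$ remain to be controlled, and these carry a prefactor $\lambda^2/\gamma^2=4\lambda^2r^2$ in front of the geometric constant, which $\lambda\le\epsilon/(17r^2)$ then forces below $\lambda\epsilon$. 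Without making this cancellation explicit, your three-range accounting does not close. Both repairs are local, but they are the technical substance of the theorem.
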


To derive upper bounds on hitting times for an optimal state, drift analysis is used, 
in particular in scenarios where the drift towards the optimum is not state-homogeneous.  Such a drift is called \emph{variable} in the literature. 
A clean form of a variable drift theorem, generalizing 
previous formulations from \cite{Johannsen10} and \cite{MitavskiyVariable}, 
was presented in \cite{RoweSudholtTCS2014}. The following formulation 
has been proposed in \cite{LehreWittISAAC14}.

\begin{theorem}[Variable Drift, Upper Bound] 
\label{theo:variable-drift}
Let $(X_t)_{t\in\N_0}$, be a stochastic process,  adapted to a filtration $\filtt$,  over some state space  $S\subseteq \{0\}\cup [\xmin,\xmax]$, where $\xmin>0$.  
Let  $h(x)\colon [\xmin,\xmax]\to\R^+$ be a monotone increasing function such that 
$1/h(x)$ is integrable on $[\xmin,\xmax]$ and 
$\E(X_t-X_{t+1} \mid \filtt) \ge h(X_t)$ if $X_t\ge \xmin$.
 Then it holds for the first hitting time 
$T\coloneqq \min\{t\mid X_t=0\}$ that 
\[
\E(T\mid \filt_0) \le 
\frac{\xmin}{h(\xmin)} + \int_{\xmin}^{X_0} \frac{1}{h(x)} \,\mathrm{d}x.
\] 
\end{theorem}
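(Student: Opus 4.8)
The plan is to reduce the statement to the (standard) additive drift theorem by constructing a potential, or Lyapunov, function that converts the state-dependent drift into a constant drift of at least one per step. Concretely, I would define
\[
g(x) \coloneqq \frac{\xmin}{h(\xmin)} + \int_{\xmin}^{x}\frac{1}{h(y)}\,\mathrm{d}y \qquad \text{for } x\in[\xmin,\xmax],
\]
and set $g(0)\coloneqq 0$. The integrability of $1/h$ on $[\xmin,\xmax]$ guarantees that $g$ is well defined and finite, and the monotonicity of $h$ makes $g$ monotone increasing and $1/h$ non-increasing. Observe that the target bound is exactly $g(X_0)$, so it suffices to show that the transformed process $Y_t\coloneqq g(X_t)$ decreases by at least one in expectation per step while $X_t\ge\xmin$, and then to invoke additive drift.

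The core of the argument is the pointwise comparison
\[
g(X_t) - g(X_{t+1}) \ge \frac{X_t - X_{t+1}}{h(X_t)},
\]
which I would establish by a case distinction on $X_{t+1}$. If $X_{t+1}\in[\xmin,\xmax]$, then $g(X_t)-g(X_{t+1}) = \int_{X_{t+1}}^{X_t}\tfrac{1}{h(y)}\,\mathrm{d}y$, and since $1/h$ is non-increasing one bounds $1/h(y)$ from below on a forward step ($X_{t+1}\le X_t$) and from above on a backward step ($X_{t+1}>X_t$), both times by $1/h(X_t)$, obtaining the same inequality in each case. If $X_{t+1}=0$, then $g(X_{t+1})=0$, and the boundary term $\xmin/h(\xmin)$ together with $\int_{\xmin}^{X_t}\tfrac{1}{h(y)}\,\mathrm{d}y\ge (X_t-\xmin)/h(X_t)$ yields $g(X_t)\ge X_t/h(X_t)$, which is precisely the required inequality with $X_{t+1}=0$. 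Taking conditional expectation, and using that $X_t$ (hence $h(X_t)$) is $\filtt$-measurable, gives
\[
\E(Y_t - Y_{t+1}\mid\filtt) \ge \frac{\E(X_t-X_{t+1}\mid\filtt)}{h(X_t)} \ge 1
\]
whenever $X_t\ge\xmin$, where the last step uses the assumed drift condition $\E(X_t-X_{t+1}\mid\filtt)\ge h(X_t)$.

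Finally, since $Y_t=0$ holds exactly when $X_t=0$, the first hitting time $T$ is unchanged under the transformation, so the additive drift theorem applied to $Y_t$ (with drift at least one) yields $\E(T\mid\filt_0)\le Y_0 = g(X_0)$, which is the claimed bound. I expect the main obstacle to be the pointwise inequality rather than the expectation bookkeeping: one must handle both the jump directly into the absorbing state $0$ and steps that move away from the target within the same clean estimate. The additively inserted boundary term $\xmin/h(\xmin)$ is exactly what compensates for the interval $(0,\xmin)$ on which $h$ is not defined, and it is what makes the $X_{t+1}=0$ case go through. A secondary technical point is to verify that the hypotheses of the additive drift theorem (integrability of $Y_t$ and correct treatment of the absorbing boundary) hold, which follows from the integrability of $1/h$ and the boundedness of the state space; alternatively, this step can be carried out directly by showing that $g(X_{\min\{t,T\}})+\min\{t,T\}$ is a supermartingale and applying optional stopping.
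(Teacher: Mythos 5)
Your proof is correct. Note, however, that the paper does not prove Theorem~\ref{theo:variable-drift} at all: it imports it as a known tool from the cited literature (the formulation from Lehre and Witt, generalizing Johannsen, Mitavskiy et al.\ and Rowe--Sudholt). Your argument --- defining the potential $g(x)=\xmin/h(\xmin)+\int_{\xmin}^{x}h(y)^{-1}\,\mathrm{d}y$, establishing the pointwise inequality $g(X_t)-g(X_{t+1})\ge (X_t-X_{t+1})/h(X_t)$ via the monotonicity of $1/h$ (including the jump to the absorbing state $0$, where the boundary term $\xmin/h(\xmin)$ is exactly what is needed), and then invoking additive drift for the transformed process --- is precisely the standard proof given in those references, and all cases of your case distinction check out, since the state space $S\subseteq\{0\}\cup[\xmin,\xmax]$ leaves no gap between the two cases you treat.
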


Finally, we need the following lemma in our analysis of the impact of the so-called $2$nd-class individuals in Section~\ref{sec:first-and-second-class}. Its statement is 
very specific and tailored to our applications. Roughly, the intuition is to show that $\E(\min\{C,X\})$ is not much less than 
$\min\{C,\E(X)\}$ for $X\sim \Bin(D,p)$ and $D\ge C$. Here and in the following, we write $\Bin(a,b)$ to denote the binomial distribution with 
parameters~$a$ and~$b$.

\begin{lemma}
\label{lem:concave-expec}
Let $X\sim \Bin(D,p)$. Let $C\in\{1,\dots,D\}$. 
Then 
\[\E(\min\{C,X\})\ge Cp + \frac{1}{4} p (1-p) \min\{C,D-C\}.\]
\end{lemma}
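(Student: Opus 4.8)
The plan is to reduce the claim to an elementary pointwise inequality by splitting the $D$ Bernoulli trials underlying $X$ into two independent groups. Writing $X=A+B$, where $A\sim\Bin(C,p)$ collects the first $C$ trials and $B\sim\Bin(D-C,p)$ the remaining $D-C$ trials, I would first record the exact identity $\min\{C,X\}=A+\min\{C-A,B\}$, which holds because $A\le C$ always (if $A+B\le C$ both sides equal $A+B$, and otherwise both equal $C$). Taking expectations and using $\E(A)=Cp$ turns the goal into a lower bound on the "overflow" term: it suffices to show $\E(\min\{C-A,B\})\ge \tfrac14 p(1-p)\min\{C,D-C\}$.

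The key step is a deterministic lower bound on a minimum. For nonnegative $u,v$ with $u+v\le D$ one has $\min\{u,v\}\ge \frac{uv}{u+v}\ge \frac{uv}{D}$; the first inequality is immediate after assuming \wlo $u\le v$, and the second uses $u+v\le D$ (when $u+v=0$ both sides vanish). I would apply this with $u=C-A$ and $v=B$, which are nonnegative and satisfy $u+v=(C-A)+B\le C+(D-C)=D$. Since $A$ and $B$ are independent, taking expectations factorises the product and gives
\[\E(\min\{C-A,B\})\ \ge\ \frac{1}{D}\,\E(C-A)\,\E(B)\ =\ \frac{1}{D}\,C(1-p)\,(D-C)\,p\ =\ p(1-p)\,\frac{C(D-C)}{D}.\]

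It then remains to bound $\frac{C(D-C)}{D}$ from below by $\tfrac14\min\{C,D-C\}$. Setting $m=\min\{C,D-C\}\le D/2$, the complementary factor is $D-m\ge D/2$, so $\frac{C(D-C)}{D}=\frac{m(D-m)}{D}\ge \frac{m}{2}\ge \frac{m}{4}$; combining this with the previous display and the reduction above finishes the proof, and in fact yields the stronger constant $\tfrac12$. The only genuine obstacle is orientation: the naive idea of applying Jensen's inequality to the concave map $\min\{C,\cdot\}$ produces an \emph{upper} bound on $\E(\min\{C,X\})$ and is therefore useless here, so the whole argument hinges on spotting the additive split $X=A+B$ together with the pointwise estimate $\min\{u,v\}\ge uv/D$; once these are in place, independence and an elementary estimate do the rest.
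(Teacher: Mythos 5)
Your proof is correct and in fact establishes the stronger bound with constant $\tfrac{1}{2}$ in place of $\tfrac{1}{4}$. You start exactly as the paper does: the decomposition $X=A+B$ into independent $\Bin(C,p)$ and $\Bin(D-C,p)$ parts and the identity $\min\{C,X\}=A+\min\{C-A,B\}$ both appear in the paper's proof (there with $Y$ and $Z$ in place of $A$ and $B$). After that the two arguments diverge. The paper conditions on an event $\{Y\le k\}$ of probability at least $p^*$, which yields $\E(\min\{C,X\})\ge \E(Y)+p^*\,\E(\min\{C-k,Z\})$, and then runs a two-case analysis on $p$, using the median-versus-mean bound $\Prob(Y\le\lceil\E(Y)\rceil)\ge 1/2$ in the case $p\le 1-2/C$ and the estimate $\Prob(Y\le C-1)\ge C(1-p)/3$ otherwise. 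Your route replaces all of this with the deterministic inequality $\min\{u,v\}\ge uv/(u+v)\ge uv/D$ applied pointwise to $u=C-A$ and $v=B$, after which independence of $A$ and $B$ factorises the expectation and the elementary bound $C(D-C)/D\ge\tfrac{1}{2}\min\{C,D-C\}$ finishes the job. This is shorter, avoids any case distinction and any appeal to binomial median facts, and gives a sharper constant; the only thing the paper's more laborious conditioning argument might be said to buy is reusability in settings where the minimum cannot be bounded below by a product, but for this lemma your approach is strictly cleaner.
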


\begin{proof}
We start by deriving a general lower bound on the expected value of $\min\{C,X\}$. The idea is to 
decompose the random variable $X$, which is a sum of $D$ independent trials, 
into the the first $C$ and the remaining $D-C$ trials. 
Let $Y\sim\Bin(C,p)$  and 
$Z\sim\Bin(D-C,p)$. Hence, $X=Y+Z$ and, since $Z$ is independent of $Y$, we have 
$\min\{C,X\} = \min\{C,Y\} + \min\{C-Y, Z\} =  
Y + \min\{C-Y,Z\}$. We also note that $\E(Y)=Cp$ and $\E(Z)=(D-C)p$. 

Assume that  for some $k<C$ and some $p^*>0$, we know that $\Prob(Y\le k)\ge p^*$. Then  
 by the law of total probability 
\begin{align}
\E(\min\{C,X\}) & \ge \bigl(\E(Y\mid Y\le k) + \E(\min\{C-k, Z\}) \bigr) \Prob(Y\le k) \notag \\ & \qquad + 
\E(Y\mid Y > k) \Prob(Y > k) \notag \\
&  = \E(Y) + \E(\min\{C-k, Z\})  \Prob(Y\le k) \notag \\
& \ge  \E(Y) + p^*\cdot \E(\min\{C-k, Z\}) ).
\label{eq:lemconcaveexp-1}
\end{align}
In the following, we will distinguish between two cases with respect to~$p$, 
in which appropriately chosen  pairs $(k,p^*)$ 
 imply the lemma. 

\textbf{Case 1: }$p\le 1-2/C$. 
Hence $C(1-p)\ge 2$, which implies $\lfloor C(1-p)/2\rfloor\ge 1$. Therefore, 
$\lceil \E(Y)\rceil \le \E(Y)+1 \le Cp + \lfloor C(1-p)/2\rfloor$.
We apply the bound 
$\Prob(Y\le \lceil\E(Y)\rceil)\ge 1/2$, which is equivalent to the well-known 
bound $\Prob(A\ge \lfloor\E(A)\rfloor)\ge 1/2$ that holds for 
all binomially distributed random variables~$A$ \cite{KassBuhrmanMeanMedianMode}.  Hence, 
\[
\Prob(Y\le Cp + \lfloor C(1-p)/2\rfloor)\ge \frac{1}{2}. 
\]
Using \eqref{eq:lemconcaveexp-1} with $p^*\coloneqq 1/2$ and $k\coloneqq Cp + \lfloor C(1-p)/2\rfloor$ we conclude 
\begin{align*}
\E(\min\{C,X\}) & \ge Cp + \frac{1}{2}  \cdot \E(\min\{C(1-p)-\lfloor C(1-p)/2\rfloor,Z\}) \\ 
 & = Cp + \frac{1}{2}  \cdot \E(\min\{\lceil C(1-p)/2\rceil,\Bin(D-C,p)\}) \\
& \ge Cp+ \frac{1}{2}  \cdot \E(\Bin(\min\{\lceil C(1-p)/2\rceil,D-C\},p)) \\
& = Cp + \frac{1}{2} p \min\{\lceil  C(1-p)/2\rceil, D-C\} \\ 
& \ge Cp+ \frac{1}{2} p \min\{C(1-p)/2,(D-C)(1-p)/2\}  \\
&  = Cp + \frac{1}{4}p(1-p) \min\{C,D-C\},
\end{align*}
where the second inequality exploits that $\Bin(A,p)$ is stochastically larger 
than $\Bin(B,p)$ for all $B\le A$, and clearly $\Bin(B,p)\le B$. The third inequality 
uses that  $(1-p)/2\le 1$ and the final equality exploits that $1-p$ is non-negative. 
Hence, the lemma holds in this case.

\textbf{Case 2: }$p > 1-2/C$. The aim is to show 
that $\Pr(Y\le C-1)\ge C(1-p)/3$. In the subcase that $C\le 3$, we clearly have $\Prob(Y\le C-1)\ge 1-p \ge C(1-p)/3$. 
If $C\ge 4$,  we work with $q\coloneqq 1-p\le 2/C\le 2$ and 
note that $\Pr(Y\le C-1) = 1 - p^C = 1-(1-q)^C$. Now, 
\[
1-(1-q)^C \ge 1-e^{-qC} \ge 1-\left(1-\frac{qC}{3}\right) = \frac{qC}{3} = \frac{C(1-p)}{3},
\]
where the first inequality uses $e^{x}\ge 1+x$ for $x\in\R$ and the second 
$e^{-x}\le 1-\frac{x}{3}$ for $x\le 2$. Hence, $\Pr(Y\le C-1)\ge C(1-p)/3$ for all 
$C\in\{1,\dots,D\}$. 
Using~\eqref{eq:lemconcaveexp-1} 
with $k=C-1$ and $p^*\coloneqq \frac{C(1-p)}{3}$ and proceeding similarly to Case~1, we obtain
\begin{align*}
\E(\min\{C,X\}) & \ge Cp + \frac{C(1-p)}{3}  \cdot \E(\min\{1,Z\}) \\
& \ge Cp + \frac{C(1-p)}{3} \E(\Bin(\min\{1,D-C\},p)) \\
& = Cp + \frac{C(1-p)}{3} p \min\{1,D-C\} \\ 
& = Cp + \frac{p(1-p)}{3}  \min\{C,C(D-C)\} \\
& \ge Cp + \frac{p(1-p)}{3}  \min\{C,D-C\} ,
\end{align*}
where the last inequality used that $C\ge 1$ and the previous equality used that $C$ is non-negative. 
This concludes Case~2 and proves the lemma.
\qed\end{proof}

\subsection{On the Stochastic Behavior of Frequencies}
\label{sec:first-and-second-class}
To bound the expected running time of \umda and \umdastar, it is crucial to understand 
how the $n$ frequencies associated with the bits evolve over time. The symmetry of the fitness function 
\onemax implies that each frequency evolves in the same way, but not necessarily 
independently of the others. Intuitively, many frequencies should be close to their upper 
border for making it sufficiently likely to sample the optimum, \ie, the all-$1$s string.

To understand the stochastic process on a frequency, it is useful to consider the \umda 
without selection for a 
moment. More precisely, assume that each of the $\lambda$ offspring 
has the same probability of being selected as one of the $\mu$ individuals determining 
the frequency update. Then the frequency describes a random walk that 
is a martingale, \ie, in expectation it does not change. With \onemax, individuals with 
higher values are more likely to be among the $\mu$ updating individuals. However, since 
only the accumulated number of $1$-bits per individual matters for selection, 
a single frequency may still decrease even if the step leads to an increase of the best-so-far seen 
\onemax value. We will spell out the  
bias due to selection  in the remainder of this section.

In the following, we consider an arbitrary but fixed bit position~$j$ and denote by $p_t\coloneqq p_{t,j}$ its frequency at time~$t$. 
Moreover, let $X_t$, where $0\le X_t\le \mu$, be the number of ones at position~$j$ among the $\mu$ offspring 
selected to compute~$p_t$. Then $p_t=\mathrm{cap}_{1/n}^{1-1/n} (X_t/\mu)$, where $\mathrm{cap}_\ell^h (a) \coloneqq 
\max\{\min\{a,h\},l\}$  caps frequencies at their borders.

\subsubsection*{Ranking, 1st-class individuals, 2nd-class individuals and candidates}
Consider the fitness of all individuals sampled during one generation of the \umda \wrt $n - 1$ 
bits, \ie, all bits but bit $j$. Assume that the individuals are sorted 
in levels decreasingly by their fitness; each individual having a unique rank, where ties 
are broken arbitrarily. Level $n - 1$ is called 
the topmost, and level $0$ the lowermost. 
Let $w^+$ be the level of 
the individual with rank $\mu$, and let $w^-$ be the level of the individual with rank $\mu + 1$. 
Since bit~$j$ has not been considered so far, its \om-value can potentially increase each 
individual’s level by~$1$. Now assume that $w^+ = w^- + 1$. Then, individuals from level 
$w^-$ can end up with the same fitness as individuals from level $w^+$, once bit $j$ has been sampled. 
Thus, individuals from 
level $w^+$ were still prone to selection.

Among the $\mu$ individuals chosen during selection, we distinguish between two different types: 
$1$st-class and $2$nd-class individuals. $1$st-class individuals are those which have so many $1$s at the $n-1$ 
other bits such that they 
had to be chosen 
during selection no matter which value bit $j$ has. The remaining of the $\mu$ selected individuals are 
the $2$nd-class individuals; they had to compete with other individuals for selection. Therefore, their bit value 
at position~$j$ is biased towards~$1$ compared to $1$st-class individuals. Note that $2$nd-class individuals 
can only exist if $w^+ \leq w^- + 1$, since in this case, individuals 
from level $w^-$ can still be as good as individuals from level $w^+$ after sampling bit~$j$.

Given $X_t$,  let $C_{t+1}^*$ denote the number of $2$nd-class individuals in 
generation $t + 1$. Note that the total number of $1$s at position~$j$ in the $1$st-class individuals 
during generation $t + 1$ follows a binomial distribution with success probability $p_t=X_t/\mu$, assuming 
$X_t/\mu$ is within the interval $[1/n,1-1/n]$. 
Since we have $\mu - C^*_{t+1}$ $1$st-class individuals, the distribution of the number of $1$s in  
these follows $\Bin(\mu - C^*_{t+1}, X_t/\mu)$.

We proceed by analyzing the number of $2$nd-class individuals and how they bias the 
number of $1$s, leading to the Lemmas~\ref{lem:2nd-lower-part-1}--\ref{lem:2nd-lower-part-3} 
below. The underlying idea is that 
both the number of $2$nd-class individuals is sufficiently large and that at the same 
time, these $2$nd-class individuals were selected from an even larger set  to allow 
many $1$s to be gained at the considered position~$j$. This requires a careful analysis 
of the level where the rank-$\mu$ individual ends up.

For 
$i \in \{0, \dots, n - 1\}$, let $C_i$ denote the cardinality of level $i$, 
\ie, the number of individuals in level $i$ during an arbitrary generation of the \umda, 
and let $\cumulC{i} = \sum_{a = i}^{n - 1} C_a$. 
Let $M$ denote the index of the first level from the top such that the number 
of sampled individuals is greater than $\mu$ when including the following level, i.e.,
\[
    M \coloneqq  \max \{i \mid \cumulC{i - 1} > \mu\}.
\]

Note that $M$ can never be $0$, and only if $M = n - 1$, $C_M$ can be greater than $\mu$. 
Due to the definition of $M$, if $M \neq n - 1$, level $M - 1$ contains the individual 
of rank $\mu + 1$, so level $M-1$ contains the cut where the best~$\mu$ out of $\lambda$ offspring 
are selected. Individuals in levels at least 
$M + 1$ are definitely $1$st-class individuals since they still will have rank at least~$\mu$ 
even if the  bit~$j$ sampled last turns out to be~$0$. $2$nd-class individuals, if any, have 
to come from levels $M$, $M - 1$ and $M-2$ (still in terms of the ranking before sampling bit~$j$). 
Individuals from level~$M$ may 
still be selected (but may also not) 
for the $\mu$ updating individuals 
even if bit~$j$ turns out as~$0$. Individuals from level~$M-2$ have to sample a~$1$ at bit~$j$ 
to be able to compete with the individuals from levels~$M$ and~$M-1$; still it is not sure that they 
will end up in the~$\mu$ updating individuals.

To obtain 
a pessimistic bound on the bias introduced by $2$nd-class individuals, we concentrate on 
level~$M-1$. Note that all individuals from level~$M-1$ sampling bit~$j$ as~$1$ 
will certainly be selected unless the $\mu-\cumulC{M}$ remaining slots for the $\mu$ best 
are filled up. 
We call the individuals from level 
$M-1$ \emph{$2$nd-class candidates} 
and denote their number by $D^*_{t+1}\coloneqq C_{M-1}$. By definition, $D^*_{t+1} = \cumulC{M-1}-\cumulC{M}$. 
We also introduce the notation $C^{**}_{t+1}\coloneqq \mu-\cumulC{M}$  and note that 
$C^*_{t+1}\ge C^{**}_{t+1}$ since $2$nd-class individuals also may come from levels $M$ and $M-2$, 
in addition to level~$M-1$. Our definition of~$D^*_{t+1}$ only covers the candidates for $2$nd-class individuals 
that come from level~$M-1$; so the candidates from levels~$M-2$ and~$M$ are not part of our notation.

In the following, we often drop the index $t+1$ from $C^*_{t+1}$, $C^{**}_{t+1}$, and $D^*_{t+1}$ if 
there is no risk of confusion.

\subsubsection*{Illustration}

Figure~\ref{fig:first-and-second} illustrates the concepts we have introduced so far. On the 
left-hand side, $\lambda=14$ individuals are ranked with respect to their \OneMax-value, ignoring bit~$j$. 
Level $M$ is the last level from the top such that at most~$\mu$ individuals are sampled in level~$M$ 
or above. The individuals from level $M+1$ and above will be selected for sure even if bit~$j$ turns 
out as~$0$ and are therefore $1$st-class individuals. 
Level~$M-1$ contains the individual of rank~$\mu$ and possibly further individuals. In general, 
if there are $D^*$ individuals in level~$M-1$ and $C_{\ge M}$ in higher levels, then these $D^*$ 
individuals are  the $2$nd-class candidates. After finally bit~$j$ has been sampled, selection 
will take the best $C^{**} = \mu-C_{\ge M}$ out of these $D^*$ candidates. They become 
$2$nd-class individuals. Recall that $C^*\ge C^{**}$ as the latter lacks possible $2$nd-class individuals 
from levels $M-2$ and $M$.

In the following, the crucial idea is to show 
  that $D^*$ is expected to be larger than $C^{**}$. 
That is, we expect to  
have more $2$nd-class candidates (in level~$M-1$) than can actually be selected 
as $2$nd-class individuals. This is dealt with in Lemma~\ref{lem:2nd-lower-part-1} below. Roughly speaking, it shows 
that the number of $2$nd-class individuals is stochastically as least as large as if it was sampled 
from a binomial distribution with parameters $\Theta(\mu)$ and $\Theta(1/\sigma_t)$, 
where $\sigma_t^2\coloneqq \sum_{i=1}^n p_{t,i}(1-p_{t,i})$ is the sampling variance of the \umda. This result can be 
interpreted as follows. It is well known that the Poisson binomial distribution with vector $\bm{p}_t$ has a mode of $O(1/\sigma_t)$ 
\cite{BaillonModePoissonBinomial}. 
Hence, if we just look at 
the number of individuals that has a certain number~$k$ of $1$s at some position, 
then this is determined by a binomial distribution 
with parameters $\lambda$ and $O(1/\sigma_t)$. Lemmas~\ref{lem:2nd-lower-part-1} and 
Lemma~\ref{lem:2nd-lower-part-1b} together show
 that essentially 
the same holds even if we consider the individuals from $C^*$, \ie, specific individuals 
from level $M-1$, each of which is 
drawn from the complicated conditional distribution induced by the definition of level~$M$. Also, it establishes 
a similar probabilistic bound for $D^*-C^{**}$, the difference between the number of $2$nd-class individuals from level~$M-1$ and 
the number of $2$nd-class candidates, since this difference  
is responsible for the selection bias. By definition, it always holds that $D^*-C^{**}\ge 1$.

For our analysis, knowledge of the sheer number of $2$nd-class individuals and 
candidates is not yet sufficient. Therefore, 
 afterwards Lemma~\ref{lem:2nd-lower-part-2} deals with the number of $1$s sampled in the $2$nd-class individuals 
and candidates. This result is then finally used to obtain Lemma~\ref{lem:2nd-lower-part-3}, which quantifies 
the bias due to selecting $2$nd-class individuals in a drift statement. More precisely, the expected value of $X_{t+1}$, the 
number of $1$s at position~$j$ in the $\mu$ selected individuals 
 at time~$t+1$,  is bounded from below depending on $X_t$. 
This statement is also formulated with respect to the expected success probabilities $\E(p_{t+1}\mid p_t)$  
in the lemma.

\begin{figure}[htb]
\begin{tikzpicture}
\tikzset{indiv/.style={fill=black,circle,inner sep=0pt, minimum size=1.2mm},
indiv2/.style={fill=black,circle,inner sep=0pt, minimum size=2mm},
indiv3/.style={fill=black!50,circle,inner sep=0pt, minimum size=2mm},
num/.style={text width=1cm}
},
\foreach \x in {0,1,4,5,6,9,10}
    {   \pgfmathtruncatemacro{\nodelabel}{\x};
        \node (C\nodelabel) at (0,\x*0.5) {};
				\draw[fill=white!90!black] (C\nodelabel) rectangle +(1,0.5) node[inner sep=0pt] (D\nodelabel) {};
    }
		\node (D3) at (3,3.5) {};
		
\draw[densely dotted,thick] ($(C1.north)+(0.5,0.5)$)  -- ($(C4.south)+(0.5,0)$);
\draw[densely dotted,thick] ($(C6.north)+(0.5,0.5)$)  -- ($(C9.south)+(0.5,0)$);
\coordinate (Off) at (0.65,-0.25);
\node[num] at ($(D10)+(Off)$) {$n-1$};
\node[num] at ($(D9)+(Off)$) {$n-2$};
\node[num] at ($(D6)+(Off)$) {$M$};
\node[num] at ($(D5)+(Off)$) {$M-1$};
\node[num] at ($(D4)+(Off)$) {$M-2$};
\node[num] at ($(D1)+(Off)$) {$1$};
\node[num] at ($(D0)+(Off)$) {$0$};
\node[indiv] at ($(C9)+(0.5,0.25)$) {};
\node[indiv] at ($(C6)+(0.2,0.25)$) {};
\node[indiv] at ($(C6)+(0.4,0.25)$) {};
\node[indiv] at ($(C6)+(0.6,0.25)$) {};
\node[indiv] at ($(C6)+(0.8,0.25)$) {};

\foreach\x in {0,...,5}
{
\node[indiv] at ($(C5)+(\x*0.15+0.12,0.25)$) {};
}
\node[indiv] at ($(C4)+(0.3,0.25)$) {};
\node[indiv] at ($(C4)+(0.5,0.25)$) {};
\node[indiv] at ($(C4)+(0.7,0.25)$) {};
\draw [decorate,decoration={brace,amplitude=7pt}]
($(D10)+(1.2,0)$) -- ($(D6.north)+(1.2,0)$) node [black,midway,xshift=1cm] {\parbox{1.3cm}{all\\$1$st class}}; 
\draw [decorate,decoration={brace,amplitude=7pt}]
($(D6.south)+(1.2,0)$) -- ($(D4.north)+(1.2,-0.5)$) node [black,midway,xshift=1cm] {\parbox{1.3cm}{$2$nd class\\from here}};

\draw [decorate,decoration={brace,amplitude=7pt}]
($(C6.north)+(-0.1,0)$) -- ($(C10)+(-0.1,0.5)$) node [black,midway,xshift=-0.5cm] {\parbox{0.5cm}{$\le\mu$\\[-0.3mm]ind.}}; 

\draw [decorate,decoration={brace,amplitude=7pt}]
($(C5.north)+(-0.9,0)$) -- ($(C10)+(-0.9,0.5)$) node [black,pos=0.5,xshift=-0.55cm] {\parbox{0.5cm}{$>\mu$\\[-0.3mm]ind.}}; 


\draw[red] ($(C5)+(-0.08,-0.08)$) rectangle ($(D5)+(0.08,0.08)$) node[outer sep=0pt, inner sep=0pt] (E1){};
\draw[red] ($(E1)+(0,-0.25)$) -- +(4.5,1.2) node (E2) {};

\path[fill=white!90!black] ($(E2)+(0,-0.6)$) node(E3) {} rectangle +(3,1.7);
\draw[fill=white!90!black] ($(E2)+(0,-0.5)$) node(E3) {} rectangle +(3,1.5);

\foreach\x in {0,1}
{
\node[indiv2] at ($(E3)+(\x*0.45+0.4,0.75)$) {};
}
\foreach\x in {2,...,5}
{
\node[indiv3] at ($(E3)+(\x*0.45+0.4,0.75)$) {};
}

\draw [decorate,decoration={brace,mirror}] ($(E3)+(0.25,0.6)$) 
-- ($(E3)+(1.0,0.6)$) node [midway,below=1mm] {$C^{**}$};

\draw [decorate,decoration={brace,mirror}] ($(E3)+(1.25,0.6)$) 
-- ($(E3)+(2.75,0.6)$) node [midway,below=1mm] {$D^*-C^{**}$};

\node at ($(E3)+(1.6,-1)$) {\parbox{3.2cm}{\raggedright\tiny After sampling bit~$j$, the $C^{**}$  best of out $D^*$ 
individuals from level $M-1$ are chosen.}};

\end{tikzpicture}
\caption{Illustration of the ranking of the individuals after sampling $n-1$ bits for $\lambda=14$ and $\mu=7$. Finally,  
$C^{**}=2$ individuals out of $D^*=6$ from level $M-1$  will definitely be selected. Some  individuals from levels~$M$ and~$M-2$ 
may also be $2$nd-class, in which case $C^*  > C^{**}$ holds.}
\label{fig:first-and-second}
\end{figure}
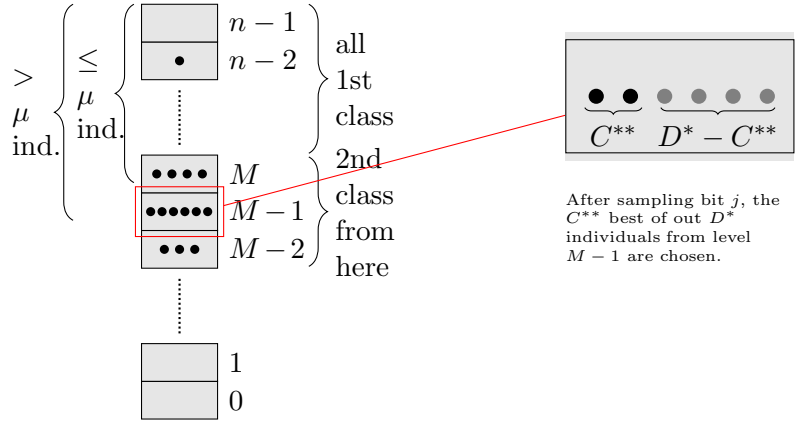

We are now ready to state the first of the above-mentioned four lemmas. It shows that the number of $2$nd-class 
individuals from level~$M-1$ follows a binomial distribution $\Bin(\mu,q)$, the second parameter of which will be 
analyzed in the subsequent Lemma~\ref{lem:2nd-lower-part-1b}. In addition, the lemma establishes a similar 
result for $D^*_{t+1} - C^{**}_{t+1}$, the overhang in $2$nd-class candidates \wrt the number of 
$2$nd-class individuals that can come from level~$M-1$. Again Lemma~\ref{lem:2nd-lower-part-1b} will 
analyze the second parameter of the respective distribution.

\begin{lemma}
\label{lem:2nd-lower-part-1}
For all $t\ge 0$, 
\begin{enumerate}
\item $C^*_{t+1}\succeq C^{**}_{t+1}$ and $C^{**}_{t+1}\sim \Bin(\mu,q)$ for some random $q\le 1$. 
\item $D^*_{t+1} - C^{**}_{t+1}\sim 1 + \Bin(\lambda-\mu-1,q')$ for some random $q'\le 1$. 
\end{enumerate}
\end{lemma}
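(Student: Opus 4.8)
The plan is to reduce both distributional claims to a clean fact about i.i.d.\ order statistics and to expose a genuine conditional binomial structure through the rank-$(\mu+1)$ individual, which I will call the \emph{pivot}. Since the first $n-1$ bits of the $\lambda$ offspring are sampled independently using the same frequencies, the levels $L_1,\dots,L_\lambda$ (the \om-values on those $n-1$ bits) are i.i.d., and ties in selection are broken uniformly. As noted above, the level of the rank-$(\mu+1)$ individual is exactly $M-1$; writing $v$ for this level, we have $\cumulC{M}=\card{\{k:L_k>v\}}$, $D^*_{t+1}=C_{M-1}=\card{\{k:L_k=v\}}$ and $C^{**}_{t+1}=\mu-\cumulC{M}$. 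The domination $C^*_{t+1}\succeq C^{**}_{t+1}$ is then immediate and even pointwise: the $C^{**}_{t+1}$ selected individuals lying in level $M-1$ are all $2$nd-class (sampling bit $j$ as $0$ gives them fitness $M-1$, and since $\cumulC{M-1}>\mu$ they are not guaranteed selection), hence a subset of the $C^*_{t+1}$ $2$nd-class individuals.

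For the distributions I would first fix the pivot. By exchangeability I may condition on the identity of the rank-$(\mu+1)$ individual (say individual $\lambda$), on its level $v$, and on its tie-breaking key; the remaining $\lambda-1$ individuals then stay i.i.d.\ and their tie-breaks against the pivot are mutually independent. Relative to the pivot I classify each of these $\lambda-1$ individuals into four categories --- strictly above ($L>v$), tied with the pivot and winning the tie-break, tied and losing, or strictly below ($L<v$) --- with respective probabilities $a,w,w',e$, where $w+w'=\Prob(L=v)$ and the split between $w$ and $w'$ is governed by the conditioned key. Crucially, the event that the pivot has rank $\mu+1$ is exactly the event that precisely $\mu$ of the others rank above it, i.e.\ that the combined count in the two ``above'' categories equals $\mu$.

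The key step is then the standard multinomial grouping property: conditioned on the ``above'' super-category having exactly $\mu$ members, the split of those $\mu$ into strictly-above versus tied-and-winning is $\Bin(\mu,\cdot)$, and independently the $\lambda-1-\mu$ below-ranking individuals split into strictly-below versus tied-and-losing as $\Bin(\lambda-\mu-1,\cdot)$. Now $C^{**}_{t+1}$ is the number of selected individuals at level $v$, which are precisely the tied-and-winning ones, so $C^{**}_{t+1}=\mu-\card{\{k:L_k>v\}}\sim\Bin(\mu,q)$ with $q=w/(a+w)\le 1$. Similarly $D^*_{t+1}-C^{**}_{t+1}$ counts the level-$v$ individuals that are \emph{not} selected, namely the pivot itself together with the tied-and-losing others, giving $D^*_{t+1}-C^{**}_{t+1}=1+\Bin(\lambda-\mu-1,q')$ with $q'=w'/(e+w')\le 1$. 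As $q$ and $q'$ are functions of the random pivot data $(v,\text{key})$, marginalizing over the pivot's identity and data leaves both quantities as mixtures of binomials, which is precisely the ``for some random $q$'' form asserted.

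I expect the main obstacle to be the bookkeeping around tie-breaking: to make the four per-individual category indicators genuinely independent with a common success probability, one must condition on the pivot's tie-break key and not merely on its level $v$, absorbing that key into the random parameters $q,q'$ --- the lemma's allowance of a \emph{random} $q$ is exactly what legitimizes this. A secondary check is the boundary behaviour (for instance $v$ equal to the top level $n-1$, where $a=0$), but the pivot argument dispenses with these uniformly, since it never invokes $M$ directly and refers only to the level of the rank-$(\mu+1)$ individual.
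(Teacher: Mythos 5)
Your proof is correct, and it takes a genuinely different route from the paper's. The paper stays with the level counts $(C_0,\dots,C_{n-1})$ and views their multinomial allocation as a sequence of deferred decisions: levels are sampled bottom-up for the first item (rewriting $\{M=i\}\cap\{\mu-\cumulC{i}\ge k\}$ in terms of the cumulative counts $C_{\le i-1}$ and $C_{\le i-2}$, so that after levels $0,\dots,i-2$ and the first $\lambda-\mu-a$ forced hits of level $i-1$ are accounted for, exactly $\mu$ independent trials remain) and top-down for the second item. You instead single out the rank-$(\mu+1)$ individual as a pivot, condition on its level and tie-break key so that the remaining $\lambda-1$ offspring form an i.i.d.\ four-category multinomial, identify the rank event with ``exactly $\mu$ of the others land in the two above-pivot categories'', and apply the multinomial grouping property. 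This yields both distributional claims (and even their conditional independence, which the paper neither states nor needs) from a single conditioning, and it handles the boundary cases such as $M=n-1$ uniformly, without a separate top-down pass. Two remarks. First, the domination $C^*_{t+1}\succeq C^{**}_{t+1}$ is cleanest via counting $1$st-class individuals: they all lie in levels $\ge M$, so there are at most $\cumulC{M}$ of them and $C^*_{t+1}=\mu-\#\{1\text{st-class}\}\ge \mu-\cumulC{M}=C^{**}_{t+1}$ pointwise; your phrasing via ``selected individuals at level $M-1$'' slightly conflates the ranking before and after bit~$j$ is sampled, though the conclusion is unaffected. Second, and relevant for what follows in the paper: your $q=w/(a+w)$ is a \emph{different} random variable from the paper's $q$ (yours carries the pivot's tie-break key), and Lemma~\ref{lem:2nd-lower-part-1b} is stated for the paper's construction. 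For your $q$ the bound $q=\Omega(1/\sigma_t)$ on an event of probability $1-e^{-\Omega(\mu)}$ would fail, since with probability about $1/\mu$ the pivot's key lies in the top $1/\mu$ fraction within its level, making $w$ and hence $q$ a factor $\Theta(\mu)$ too small. The marginal law of $C^{**}_{t+1}$ is of course the same under both representations, so nothing downstream is false, but Lemma~\ref{lem:2nd-lower-part-1b} would have to be reformulated (e.g.\ in terms of $\E(C^{**}_{t+1})$ and concentration) if your representation were carried forward.
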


\begin{proof}
We first prove the first statement in a detailed manner and then show that the second one can be proven similarly. 
Hence, we now concentrate on the distribution of $C^{**} = C^{**}_{t+1} =   
 \mu-\cumulC{M}$, which, as outlined above, is a lower bound 
on $C^*_{t+1}$. 
	To this end, we carefully investigate and then reformulate 
		the stochastic process generating the $\lambda$ individuals (before selection), 
	 restricted to 
		 $n-1$ bits. Each individual is 
		sampled by a Poisson binomial distribution for a vector of 
		probabilities $\bm{p}_t'=(p_{t,1}',\dots,p_{t,n-1}')$ 
		obtained from the frequency vector  of the \umda by leaving the entry belonging to bit~$j$ out 
		(\ie, $\bm{p}_t'=(p_{t,1},\dots,p_{t,j-1},p_{t,j+1},\dots,p_{t,n})$). 
		 Counting its 
			number of $1$s, each of the $\lambda$ individuals then falls into some 
			level~$i$, where $0\le i\le n-1$, with 
			some probability~$q_i$ depending on the vector $\bm{p}'_t$. Since 
			the individuals are created independently, the number 
			of individuals in level $i$ is binomially distributed with parameters $\lambda$ and~$q_i$. 
			
			Next, we take an alternative view on the process of putting individuals into levels, using 
			the principle of deferred decisions. We imagine that the 
			process first samples all individuals in level~$0$ (through $\lambda$ trials, all of which 
			either hit the level or not), then (using the trials which did not hit level~$0$) 
			all individuals in level~$1$, \dots, up  
			to level~$n-1$.
            
      The number of individuals~$C_{0}$
			in level~$0$ is still binomially distributed with parameters $\lambda$ 
			and~$q_{0}$. However, after all individuals in level~$0$  have 
			been sampled, the distribution changes. 
			We have $\lambda-C_{0}$ trials left, each of which can hit one of the 
			levels $1$ to $n-1$. In particular, such 
			a trial will hit level~$1$ with probability $q_{1}/(1-q_{0})$, 
			by the definition of conditional probability since level~$0$ is excluded. 
			This holds independently for all 
			of the $\lambda-C_{0}$ trials so that $C_{1}$ follows a binomial distribution with parameters 
			$\lambda-C_{0}$ and $q_{1}/(1-q_{0})$. Inductively, 
			also all $C_{i}$ for $i>1$ are binomially distributed; 
			\eg, $C_{n-1}$ is distributed with parameters $\lambda-C_{n-2}-\dots-C_{0}$ and~$1$. 
			Note that this model 
			of the sampling process can also be applied for any other permutation of the levels; we will make use 
			of this fact.
    
    Recall that our aim is to analyze $C^{**}$. 
		Formally, by applying the law of total probability, 
		its distribution looks as follows for $k \in \{0, \dots, \lambda\}$:
    \begin{equation}
    \begin{split}
    \Pr(&C^{**} \ge k) 
		= \sum_{i = 1}^{n -1} \Pr(M=i) \cdot \Pr(\mu-\cumulC{i}\ge k\mid M=i)\ .
    \end{split}
    \label{eq:cutLevelProbability}
		\end{equation}

		To prove the first item of the lemma, 
		it is now 
		sufficient to bound  $\Pr(\mu-\cumulC{i}\ge k\mid M=i)$ by the distribution function belonging to a binomial 
		distribution for all $i\in\{1,\dots,n-1\}$ (recalling that $M=0$ is impossible). 
		
		We reformulate the underlying event  appropriately. 
		Here we note that \[\{\mu-\cumulC{i}\ge k\}\cap \{M=i\}\] is equivalent to 
		\[
		\{C_{\le i-1}\ge \lambda-\mu+k\}\cap \{M=i\},
		\]
		where $C_{\le i}=\sum_{j = 0}^{i}C_j$, 
		and, using the definition of $M$, this is also equivalent to \[
		\{C_{\le i-1} \ge \lambda-\mu+k\} \cap 
		\{C_{\le i-2}< \lambda-\mu\}.\]
		
		We now use the above-mentioned  view on the stochastic process 
		 and assume that levels $0$ to $i-2$ have been sampled and a number of experiments 
		in a binomial distribution is carried out to determine the individuals 
		from level~$i-1$. Hence, considering the outcome of $C_{\le i-2}$ and using that $C_{i-1} = C_{\le i-1} - C_{\le i-2}$, 
		our event is equivalent to that the event
		\[
		E^*\coloneqq \{C_{i-1} \ge (\lambda-\mu-a) + k\big\} \cap \{C_{\le i-2} = a\} 	\]
		happens for some $a<\lambda-\mu$.
		Recall from our model that $C_{i-1}$ follows a binomial distribution with 
		$\lambda-a$ trials and with a certain success probability~$s$. The number of 
		trials left after having sampled levels $0,\dots,i-2$  is at least~$\mu$ since $a<\lambda-\mu$. 
		The probability of $E^*$ is determined by 
		conditioning  on that $C_{\le i-2}=a$, \ie, $a$ samples have fallen into levels 
		$0,\dots,i-2$ and that afterwards $i-1$ has already 
		been hit by $\lambda-\mu-a$ samples. Then $\mu$ trials are left 
		that still may sample within $C_{i-1}$. 
		 Altogether, we proven that 
		$C_{i-1}$, conditioning on~$M=i$, follows a binomial distribution with parameters $\mu$ and $q$, where 
		the value of~$q$ depends on the random~$M$. This proves the first item of the lemma.

		

		We now use a dual line of argumentation to  prove the second item 
		of the lemma. While the item is concerned with $D^*_{t+1}-C^{**}_{t+1}$, it 
		is more convenient to analyze $C_{\ge M-1}$ and then exploit that 
		\begin{equation}
		D^*_{t+1}-C^{**}_{t+1} = (C_{\ge M-1}-C_{\ge M}) - (\mu-C_{\ge M}) = C_{\ge M-1}-\mu,
		\label{eq:lemma8-help}
		\end{equation}
		which follows directly from the definition of $C^{**}_{t+1}$ and $D^*_{t+1}$.
		
		We claim that 
		\begin{equation}
		C_{\ge M-1} \sim \mu + 1 + \Bin(\lambda-\mu-1,q'),
		\label{eq:lemma8-part3}
		\end{equation}
		for some probability~$q'$ depending on the outcome of~$M$.
		To show this, 
		we take the same view on the stochastic process as above but imagine 
		now that the levels are sampled in the order from $n-1$ down to~$0$. Conditioning 
		on that levels $n-1,\dots,M$ have been sampled, 
		there are at least $\lambda-\mu$ trials are left 
		to populate level~$M-1$ 
		 since  by definition less than $\mu$ samples 
		fall into levels $n-1,\dots,M$. However, by definition of~$M$, at least 
		$\mu+1-C_{\ge M}$ of these trials must fall into level~$M-1$. Afterwards, there are 
		$\lambda-\mu-1$ trials left, each of which may hit level~$M-1$ or not. This proves
		\eqref{eq:lemma8-part3}.
\qed\end{proof}

As announced, the purpose of the following lemma is to analyze the second parameters 
of the binomial distributions that appear in Lemma~\ref{lem:2nd-lower-part-1}. Roughly speaking, 
up to an exponentially small failure probability, we obtain $\Omega(1/\sigma_t)$ as success probability of 
the binomial distribution.

\begin{lemma}
\label{lem:2nd-lower-part-1b}
Let $\sigma_t^2\coloneqq \sum_{i=1}^n p_{t,i}(1-p_{t,i})$ be the sampling variance of the \umda. 
Consider $C^{**}_{t+1}\sim \Bin(\mu,q)$  and $D^*_{t+1} - C^{**}_{t+1}\sim 1 + \Bin(\lambda-\mu-1,q')$ 
 as defined in Lemma~\ref{lem:2nd-lower-part-1}.  
There  is an event~$E^*$ with $\Prob(E^*)=1-e^{-\Omega(\mu)}$ 
such that for all $t\ge 0$ the following holds:
\begin{enumerate}
\item Conditioned on $E^*$, it holds that  $q=\Omega(1/\sigma_t)$. 
Hence $\E(C^{**}_{t+1}\mid \sigma_t)=\Omega(\mu/\sigma_t)$.
\item 
Conditioned on $E^*$, it holds that
  $q'=\Omega(1/\sigma_t)$. Hence $\E(D^*_{t+1}-C^{**}_{t+1}\mid \sigma_t) = 1+\Omega((\lambda-\mu-1)/\sigma_t)$.
\end{enumerate}
\end{lemma}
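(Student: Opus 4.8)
The plan is to reduce both items to a single lower bound on the point probability of the Poisson binomial variable $W$ counting the $1$s among the $n-1$ bits other than~$j$; note that $\Var(W)=\sigma_t^2-p_{t,j}(1-p_{t,j})\le \sigma_t^2$, so $1/\sqrt{\Var(W)}\ge 1/\sigma_t$. Inspecting the deferred-decisions analysis behind Lemma~\ref{lem:2nd-lower-part-1}, the parameter $q$ is exactly the per-trial probability of landing in level~$M-1$ given that a trial reaches at least that level, \ie $q=\Prob(W=M-1\mid W\ge M-1)$, and dually $q'=\Prob(W=M-1\mid W\le M-1)$ by the top-down sampling used for~\eqref{eq:lemma8-part3}. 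Since both conditioning events have probability at most~$1$, we get $q\ge \Prob(W=M-1)$ and $q'\ge \Prob(W=M-1)$. Hence it suffices to show $\Prob(W=M-1)=\Omega(1/\sigma_t)$ with the required failure probability.

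First I would apply Lemma~\ref{lem:at-least-1-over-sigma} to~$W$ with two constants $\ell,u\in(0,1)$, $\ell+u<1$, chosen so that the target quantile $\mu/\lambda=1/(1+\beta)$ lies strictly inside the guaranteed chunk $\{k_\ell,\dots,k_u\}$. Concretely I take $u=1/(2(1+\beta))$ and $\ell=\beta/(2(1+\beta))$, so that $\ell+u=1/2<1$, $u<1/(1+\beta)$, and $1-\ell>1/(1+\beta)$. The lemma then yields $\Prob(W=k)=\Omega(\min\{1,1/\sqrt{\Var(W)}\})=\Omega(\min\{1,1/\sigma_t\})$ for every $k\in\{k_\ell,\dots,k_u\}$.

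Next I define $E^*$ as the event $\{k_\ell\le M-1\le k_u\}$ and bound its failure probability. The crucial structural fact is that each of the $\lambda$ individuals independently reaches level at least~$i$, so $\cumulC{i}\sim\Bin(\lambda,\Prob(W\ge i))$. Using the definition $M=\max\{i\mid \cumulC{i-1}>\mu\}$ one checks the equivalences $\{M-1>k_u\}=\{\cumulC{k_u+1}>\mu\}$ and $\{M-1<k_\ell\}=\{\cumulC{k_\ell}\le\mu\}$. For the former, $\E(\cumulC{k_u+1})=\lambda\Prob(W\ge k_u+1)<\lambda u=\mu/2$ because $\Prob(W\ge k_u+1)<u$ by definition of~$k_u$, so a Chernoff upper-tail bound gives $\Prob(\cumulC{k_u+1}>\mu)=e^{-\Omega(\mu)}$. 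For the latter, $\E(\cumulC{k_\ell})=\lambda\Prob(W\ge k_\ell)>\lambda(1-\ell)=\mu(1+\beta/2)$ because $\Prob(W\le k_\ell-1)<\ell$, so a Chernoff lower-tail bound gives $\Prob(\cumulC{k_\ell}\le\mu)=e^{-\Omega(\mu)}$. A union bound yields $\Prob(E^*)=1-e^{-\Omega(\mu)}$.

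On $E^*$ the value $M-1$ lies in $\{k_\ell,\dots,k_u\}$, so the chunk bound applies pointwise for every admissible realization of~$M$, giving $q,q'\ge \Prob(W=M-1)=\Omega(\min\{1,1/\sigma_t\})=\Omega(1/\sigma_t)$ in the relevant regime $\sigma_t=\Omega(1)$. The expected-value claims follow unconditionally: conditioning on $\filtt$ (which fixes $\sigma_t$ and the law of~$W$) and using $C^*_{t+1}\succeq C^{**}_{t+1}\sim\Bin(\mu,q)$, we get $\E(C^{**}_{t+1}\mid\sigma_t)=\mu\,\E(q)\ge \mu\cdot\Omega(1/\sigma_t)\cdot\Prob(E^*)=\Omega(\mu/\sigma_t)$ since $\Prob(E^*)=\Omega(1)$, and analogously $\E(D^*_{t+1}-C^{**}_{t+1}\mid\sigma_t)=1+(\lambda-\mu-1)\E(q')=1+\Omega((\lambda-\mu-1)/\sigma_t)$. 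The main obstacle is precisely that $M$ is random and $q,q'$ depend on it, so the point-probability bound cannot be invoked at a fixed index; the heart of the argument is the concentration of the cut level~$M-1$ in the bulk of~$W$, which rests on the identity $\cumulC{i}\sim\Bin(\lambda,\Prob(W\ge i))$ and on $\mu/\lambda$ being a constant bounded away from both~$u$ and~$1-\ell$. A minor point to handle carefully is the gap between $\Var(W)$ and $\sigma_t^2$ together with the $\min\{1,1/\sigma_t\}$-versus-$1/\sigma_t$ discrepancy when $\sigma_t=o(1)$.
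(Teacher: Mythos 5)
Your proposal is correct and follows essentially the same route as the paper: lower-bound $q$ and $q'$ by the unconditional point probability $\Prob(W=M-1)$, show via Chernoff bounds on the binomially distributed tail counts $\cumulC{i}\sim\Bin(\lambda,\Prob(W\ge i))$ that the cut level lies with probability $1-e^{-\Omega(\mu)}$ in the central chunk provided by Lemma~\ref{lem:at-least-1-over-sigma}, and conclude a point probability of $\Omega(\min\{1,1/\sigma_t\})=\Omega(1/\sigma_t)$ there. If anything, your version is slightly more careful than the paper's: you centre the event on $M-1$ rather than on $M$, and you assign the tail masses as $\ell=\beta/(2+2\beta)$ and $u=1/(2+2\beta)$ so that the Chernoff step goes through for every constant $\beta>0$, whereas the paper uses the opposite assignment.
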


\begin{proof}
We recall the first statement from Lemma~\ref{lem:2nd-lower-part-1} and the stochastic model 
used in its proof. Let $X$ be the number of $1$s at the considered position~$j$ in a single individual sampled 
		in the process of creating the~$\lambda$ offspring 
		(without conditioning on certain levels being hit). The aim is to 
		derive bounds on~$q$  using~$X$. By our stochastic model, $q$ 
		denotes the probability to sample an individual with $M-1$ $1$s, 
		given that it cannot have less than $M-1$ $1$s. By omitting this condition, we clearly do 
		not increase the probability. Hence, 
		we pessimistically assume that $q=\Prob(X=i-1)$, given $M=i$. The latter probability heavily depends 
		on~$M$. We will now concentrate on the values of~$i$ where $\Pr(M=i)$ is not 
		too small.
		
		The random variable~$X$ follows a Poisson binomial distribution with vector $\bm{p}'_t$ as defined in the proof of 
		Lemma~\ref{lem:2nd-lower-part-1}. Clearly, the variance of this distribution, call it $\tilde{\sigma}_t^2$, is smaller than $\sigma_t^2$ since 
		bit~$j$ is left out. Still, since $\sigma_t^2\ge n\cdot (1/n)(1-1/n)=1-1/n$ due to the borders on the frequencies, 
		we obtain $\tilde{\sigma}_t^2\ge 1-1/n-1/4$ and $\tilde{\sigma}_t^2 = \Theta(\sigma_t^2)$. 
		
		We define 
		\[L\coloneqq \min\left\{i\mid \Prob(X\le i)\ge \frac{1}{2+2\beta}\right\}\]
		and 
		\[U\coloneqq \max\left\{i\mid \Prob(X\ge i)\ge \frac{\beta}{2+2\beta}\right\},\]
		where $\beta$ is still the constant from our assumption $\lambda=(1+\beta)\mu$. 
		By 
		Chernoff bounds, both the 
    number of individuals sampled above~$U$ is less 
		than $\frac{1}{1+\beta}\lambda=\mu$  
		and 
    the number of individuals sampled below~$L$ 
		is  less than $\frac{\beta}{1+\beta}\lambda=\lambda-\mu$ 
		with probability $1-e^{-\Omega(\lambda)}$. 
		 Then the $\mu$-th ranked individual will be within 
    $Z\coloneqq [L,U]$ with probability at least $1-e^{-\Omega(\mu)}$, which means that 
		\begin{equation}
		\Prob(M\in Z)=1-e^{-\Omega(\mu)}. 
		\label{eq:lemma8-minz}
		\end{equation}
		Note that $M\in Z$ is the event $E^*$ mentioned in the statement of the lemma.

		We  now assume $M\in Z$ and 
		apply Lemma~\ref{lem:at-least-1-over-sigma}, using $\ell\coloneqq 1/(2+2\beta)$ and 
		$u\coloneqq \beta/(2+2\beta)$, in accordance with the above definition of $L$ and~$U$. 
		Hence, every level in~$Z$, in particular level~$M-1$, is 
		hit with probability $\Omega(\min\{1,1/\tilde{\sigma}_t\}) = \Omega(1/\sigma_t)$. 
		Hence with probability~$1-e^{-\Omega(\mu)}$ 
		we have that $q=\Omega(1/\sigma_t)$ and therefore 
		$C^{**}_{t+1}  \sim \Bin(\mu,\Omega(1/\sigma_t))$. 
		Using the properties 
		of the binomial distribution and the law of total probability, 
		we obtain $\E(C^{**}) = \Omega(\mu/\sigma_t)$, which proves the first item 
		of the lemma. 
		
		The second item is proven similarly. 
		 We recall from 
		Lemma~\ref{lem:2nd-lower-part-1} that 
		$D^*_{t+1}-C^{**}_{t+1} = 1+ \Bin(\lambda-\mu-1,q')$ 
		where $q'$ is the probability of hitting level $M-1$, assuming levels~$n-1,\dots,M$ 
		have been sampled. 
    Hence 	
		with probability $1-e^{-\Omega(\mu)}$ according to \eqref{eq:lemma8-minz}, 
		we have $q'=\Omega(1/\sigma_t)$.
		Altogether, we obtain 
		$\E(D^*_{t+1}-C^{**}_{t+1})=1+\Omega((\lambda-\mu-1)/\sigma_t)$, which concludes the  proof of the 
		second item of the lemma.
\qed\end{proof}

As mentioned above, we now know much about the distribution of 
the number of $2$nd-class individuals and candidates. The next step is 
to bound the number of $1$s sampled at position~$j$ in these individuals.

\begin{lemma}
\label{lem:2nd-lower-part-2}
For all $t\ge 0$
\[
X_{t+1}\succeq \Bin(\mu-C^{**}_{t+1},X_t/\mu)+\min\{C^{**}_{t+1},\Bin(D^*_{t+1},X_t/\mu)\}.
\]
\end{lemma}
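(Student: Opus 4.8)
The plan is to reduce the statement to a pointwise (coupled) inequality by conditioning on everything that is determined by the $n-1$ bits other than~$j$. This conditioning fixes the entire level structure of the $\lambda$ offspring, and hence the quantities $M$, $\cumulC{M}=\mu-C^{**}_{t+1}$, and $D^*_{t+1}=C_{M-1}$, while leaving the bit-$j$ values of all $\lambda$ offspring as independent Bernoulli variables with success probability $X_t/\mu$ (assuming, as in the surrounding discussion, that $X_t/\mu$ lies within the borders; if it lies below the lower border the true success probability is only larger, which preserves a lower bound). I would then introduce $N_1$ as the number of $1$s at position~$j$ among the $\cumulC{M}=\mu-C^{**}_{t+1}$ offspring in levels~$\ge M$, and $N_2$ as the number of $1$s among the $D^*_{t+1}=C_{M-1}$ offspring in level~$M-1$. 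Since these count disjoint offspring whose bit-$j$ values are untouched by the conditioning, they are independent with $N_1\sim\Bin(\mu-C^{**}_{t+1},X_t/\mu)$ and $N_2\sim\Bin(D^*_{t+1},X_t/\mu)$, so it suffices to prove the pointwise bound $X_{t+1}\ge N_1+\min\{C^{**}_{t+1},N_2\}$.

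To establish this bound I would analyse the selection cut directly. After bit~$j$ is sampled, every offspring in levels~$\ge M+1$ has fitness $>M$, a level-$i$ offspring sampling a~$1$ rises to fitness $i+1$, and the decisive ties occur at fitness exactly~$M$, contested between the level-$M$ offspring that sampled~$0$ and the $N_2$ level-$(M-1)$ offspring that sampled~$1$. I would split on whether $N_2\ge C^{**}_{t+1}$. If $N_2\ge C^{**}_{t+1}$, then at least $\mu$ offspring have fitness~$\ge M$, so the cut falls at fitness~$M$; all offspring of fitness~$>M$ are selected and contribute exactly $N_1$ ones (the ones among levels~$\ge M$ together with the level-$M$ offspring that sampled~$1$), while the remaining $\mu-\cumulC{M+1}-(\text{level-}M\text{ ones})=C^{**}_{t+1}+(\text{level-}M\text{ zeros})$ slots are filled from the fitness-$M$ pool. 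Even under the least favourable tie-breaking, all level-$M$ zeros are taken first, which forces at least $C^{**}_{t+1}$ further ones to be chosen from the level-$(M-1)$ ones, giving $X_{t+1}\ge N_1+C^{**}_{t+1}=N_1+\min\{C^{**}_{t+1},N_2\}$. If instead $N_2<C^{**}_{t+1}$, then fewer than $\mu$ offspring have fitness~$\ge M$, so all of them are selected, already contributing $N_1+N_2$ ones; the remaining slots are drawn from fitness~$M-1$ and, in the worst case, consist entirely of zeros (the level-$(M-1)$ offspring that sampled~$0$, of which there are enough because $D^*_{t+1}>C^{**}_{t+1}$), so $X_{t+1}\ge N_1+N_2=N_1+\min\{C^{**}_{t+1},N_2\}$.

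Since the uniformly random tie-breaking used by the \umda can only do at least as well as the adversarial tie-breaking invoked above, the pointwise inequality holds with certainty, and combining it with the joint distribution of the independent binomials $N_1$ and $N_2$ yields the claimed domination $X_{t+1}\succeq\Bin(\mu-C^{**}_{t+1},X_t/\mu)+\min\{C^{**}_{t+1},\Bin(D^*_{t+1},X_t/\mu)\}$; note that only $C^{**}_{t+1}$ (not the true $C^*_{t+1}\ge C^{**}_{t+1}$) and only the level-$(M-1)$ candidates enter, which is exactly what makes the bound a valid lower bound. The main obstacle I anticipate is the bookkeeping around the fitness-$M$ ties: one must argue carefully that replacing a dropped level-$M$ zero by a selected level-$(M-1)$ one never decreases the count, and verify the slot arithmetic $\cumulC{M}=\mu-C^{**}_{t+1}$ and $D^*_{t+1}>C^{**}_{t+1}$ in both cases. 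By contrast, the independence and exact parametrisation of $N_1$ and $N_2$ follow cleanly from conditioning on the other $n-1$ bits.
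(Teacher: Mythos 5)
Your proof is correct and follows essentially the same route as the paper: decompose the selected $1$s at position~$j$ into the unbiased binomial contribution of the individuals in levels~$\ge M$ and the contribution $\min\{C^{**}_{t+1},\Bin(D^*_{t+1},X_t/\mu)\}$ of the level-$(M-1)$ candidates, of which selection keeps the $1$s preferentially. The only difference is one of detail: the paper asserts the domination in two sentences (invoking that the worst case is $C^*_{t+1}=C^{**}_{t+1}$ and that selection favours $1$s among the candidates), whereas you make the conditioning on the other $n-1$ bits explicit and verify the pointwise inequality by an adversarial tie-breaking case analysis at the fitness-$M$ cut, which is a welcome sharpening of the same argument.
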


\begin{proof}
		We essentially show that 
		the expected overhang in $2$nd-class candidates from level~$M-1$ compared to 
		$2$nd-class individuals from this level 
		allows a bias of the frequency towards higher values, as detailed in the following. We recall that we ignore 
		$2$nd-class individuals stemming from levels~$M-2$ and~$M$. These might introduce a bias that is only 
		larger, which is why the statement of the lemma only establishes a lower bound on~$X_{t+1}$.

    In each of the $D^*_{t+1}$ $2$nd-class candidates from level~$M-1$, 
		bit~$j$ is sampled as~$1$ with probability $X_t/\mu$. Only 
		a subset of the candidates, namely the $C^{**}_{t+1}$ $2$nd-class individuals from this level,
		is selected for the best $\mu$ offspring determining 
		the next frequency. As observed above in Section~\ref{sec:first-and-second-class}, the number of $1$s at position~$j$ in the $1$st-class individuals 
		is binomially distributed with parameters $\mu-C^*_{t+1}$ and $X_t/\mu$. We have $C_{t+1}^{*}\ge C^{**}_{t+1}$ and recall  
		that the distribution of $X_{t+1}$ becomes stochastically smallest when equality holds. 
    Hence, we obtain for the number of $1$s (at position~$j$) in the $\mu$ selected offspring that 
		\[
		X_{t+1}\succeq \Bin(\mu-C^{**}_{t+1},X_t/\mu)+\min\{C^{**}_{t+1},\Bin(D^*_{t+1},X_t/\mu)\},
		\]
		which is what we wanted to show.
		\qed\end{proof}

		Finally, based on the preceding two lemmas, we can quantify the bias 
		of the frequencies due to selection in a simple drift statement. The following lemma is crucially used in the 
		drift analyses that prove Theorems~\ref{theo:upper-above-phase} and~\ref{theo:upper-below-phase}.

\begin{lemma}
\label{lem:2nd-lower-part-3}
Let $\mu=\omega(1)$. Then 
for all $t\ge 0$, 
 \[\E(X_{t+1}\mid X_t,\sigma_t) = X_t + \Omega\bigl((\mu/\sigma_t)(X_t/\mu)(1-X_t/\mu)\bigr	).\]		

If $p_t\le 1-c/n$, where $c>0$ is a sufficiently large constant, then 
 \[\E(p_{t+1}\mid p_t,\sigma_t) = p_t + \Omega(p_t(1-p_t)/\sigma_t).\]
\end{lemma}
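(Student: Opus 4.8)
The plan is to combine the three preceding lemmas to lower-bound $\E(X_{t+1}\mid X_t,\sigma_t)$. First I would start from the stochastic domination in Lemma~\ref{lem:2nd-lower-part-2}, which gives
\[
X_{t+1}\succeq \Bin(\mu-C^{**}_{t+1},X_t/\mu)+\min\{C^{**}_{t+1},\Bin(D^*_{t+1},X_t/\mu)\}.
\]
Taking expectations, the first summand contributes $(\mu-C^{**}_{t+1})(X_t/\mu)$ in expectation, so the whole task reduces to showing that the second summand beats the value it would have if the $C^{**}_{t+1}$ selected individuals were unbiased. Writing $p\coloneqq X_t/\mu$, I would invoke Lemma~\ref{lem:concave-expec} with $D\coloneqq D^*_{t+1}$ and $C\coloneqq C^{**}_{t+1}$ (which is legitimate since $D^*_{t+1}\ge C^{**}_{t+1}$, as $D^*_{t+1}-C^{**}_{t+1}\ge 1$) to obtain
\[
\E\bigl(\min\{C^{**}_{t+1},\Bin(D^*_{t+1},p)\}\bigm|\,C^{**}_{t+1},D^*_{t+1}\bigr)\ge C^{**}_{t+1}p+\tfrac14 p(1-p)\min\{C^{**}_{t+1},D^*_{t+1}-C^{**}_{t+1}\}.
\]
Summing the two expected contributions collapses the $C^{**}_{t+1}p$ terms back into $\mu p=X_t$, leaving the additional bias term $\tfrac14 p(1-p)\min\{C^{**}_{t+1},D^*_{t+1}-C^{**}_{t+1}\}$.

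It then remains to take the outer expectation over $C^{**}_{t+1}$ and $D^*_{t+1}$ and show this bias term is $\Omega((\mu/\sigma_t)p(1-p))$. Here I would condition on the event $E^*$ from Lemma~\ref{lem:2nd-lower-part-1b}, which has probability $1-e^{-\Omega(\mu)}$, so that conditionally $C^{**}_{t+1}\sim\Bin(\mu,\Omega(1/\sigma_t))$ and $D^*_{t+1}-C^{**}_{t+1}\sim 1+\Bin(\lambda-\mu-1,\Omega(1/\sigma_t))$. On this event both $\E(C^{**}_{t+1})=\Omega(\mu/\sigma_t)$ and $\E(D^*_{t+1}-C^{**}_{t+1})=\Omega((\lambda-\mu-1)/\sigma_t)=\Omega(\mu/\sigma_t)$, using $\lambda=(1+\beta)\mu$ so that $\lambda-\mu-1=\Omega(\mu)$ once $\mu=\omega(1)$. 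Since both $C^{**}_{t+1}$ and $D^*_{t+1}-C^{**}_{t+1}$ are (shifted) binomials with mean $\Omega(\mu/\sigma_t)$, I would argue by a Chernoff-type concentration bound that their minimum is $\Omega(\mu/\sigma_t)$ with probability $\Omega(1)$, which suffices to lower-bound the expectation of $\min\{C^{**}_{t+1},D^*_{t+1}-C^{**}_{t+1}\}$ by $\Omega(\mu/\sigma_t)$. The exponentially small failure probability $e^{-\Omega(\mu)}$ of $E^*$ is negligible against this polynomial-in-$1/\sigma_t$ gain because $\mu=\omega(1)$, so it can be absorbed into the $\Omega$-notation.

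Assembling these pieces yields $\E(X_{t+1}\mid X_t,\sigma_t)\ge X_t+\Omega((\mu/\sigma_t)p(1-p))$ with $p=X_t/\mu$, which is the first claim. For the second claim I would translate back to frequencies: dividing by $\mu$ gives a drift statement for $X_{t+1}/\mu$, and the hypothesis $p_t\le 1-c/n$ for a sufficiently large constant $c$ ensures that the capping at the border $1-1/n$ does not destroy the gain, so that $\E(p_{t+1}\mid p_t,\sigma_t)\ge p_t+\Omega(p_t(1-p_t)/\sigma_t)$, using $p_t=X_t/\mu$ and $1-p_t=1-X_t/\mu$. The main obstacle I anticipate is the concentration argument for the minimum of the two binomials: I must ensure that the two means being both $\Omega(\mu/\sigma_t)$ genuinely forces their minimum to be of the same order with constant probability, rather than one of them occasionally collapsing, and that the dependence between $C^{**}_{t+1}$ and $D^*_{t+1}$ induced by the shared conditioning on $M$ does not spoil this. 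A clean way around subtle dependence issues is to bound $\min\{C^{**}_{t+1},D^*_{t+1}-C^{**}_{t+1}\}$ from below using separate one-sided tail bounds for each binomial and a union bound, which only costs constant factors.
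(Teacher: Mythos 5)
Your decomposition is exactly the paper's: start from Lemma~\ref{lem:2nd-lower-part-2}, apply Lemma~\ref{lem:concave-expec} to the minimum term, collapse the $C^{**}_{t+1}\cdot X_t/\mu$ contributions back into $X_t$, and reduce everything to a lower bound on $\E(\min\{C^{**}_{t+1},D^*_{t+1}-C^{**}_{t+1}\}\mid\sigma_t)$; the treatment of the failure event $E^*$ and the final capping argument for $p_{t+1}$ also match. The one genuine gap sits precisely in the step you flag as the main obstacle. Your plan is to show that the minimum is $\Omega(\mu/\sigma_t)$ \emph{with probability $\Omega(1)$} via Chernoff bounds on the two (shifted) binomials. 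That works only when $\mu/\sigma_t$ exceeds a sufficiently large constant: a bound of the form $\Prob(C^{**}_{t+1}<\E(C^{**}_{t+1})/2)\le e^{-\Theta(\E(C^{**}_{t+1}))}$ is vacuous when the mean is a small constant or $o(1)$, and this regime genuinely occurs --- for instance $\mu=\Theta(\log n)$ and $\sigma_t=\Theta(\sqrt{n})$ give $\mu/\sigma_t=o(1)$, in which case $C^{**}_{t+1}=0$ with probability $1-o(1)$ and the minimum is \emph{not} positive with constant probability. The claimed expectation bound is still true there, but not by your route, and the same objection applies to your fallback of separate one-sided tail bounds plus a union bound.

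The paper closes this with a case distinction on $\mu/\sigma_t$ versus a constant $\kappa$. For $\mu/\sigma_t\ge\kappa$ with $\kappa$ large it runs exactly your union-bound argument (each of the two events $C^{**}_{t+1}\ge\E(C^{**}_{t+1})/2$ and the analogue for $D^*_{t+1}-C^{**}_{t+1}$ fails with probability at most $1/4$). For $\mu/\sigma_t\le\kappa$ it instead exploits the deterministic fact $D^*_{t+1}-C^{**}_{t+1}\ge 1$, so that
\[
\E\bigl(\min\{C^{**}_{t+1},D^*_{t+1}-C^{**}_{t+1}\}\bigr)\ge \Prob\bigl(C^{**}_{t+1}=1\bigr)=\mu\cdot\Omega(1/\sigma_t)\cdot(1-o(1))=\Omega(\mu/\sigma_t),
\]
using that $q=O(1/\mu)=o(1)$ in this case. (Equivalently, one can use the monotone bound $\E(\min\{C^{**}_{t+1},1\})=\Prob(C^{**}_{t+1}\ge 1)\ge 1-e^{-\mu q}=\Omega(\mu q)$ when $\mu q=O(1)$.) Adding this second case makes your argument complete, at which point it coincides with the paper's proof.
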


\begin{proof}		
		We start with  
		the bound 
		\begin{equation}
		X_{t+1}\succeq \Bin(\mu-C,X_t/\mu)+\min\{C,\Bin(D,X_t/\mu)\}
		\label{eq:lem:2nd-lower-part-3-1}
		\end{equation}
		from Lemma~\ref{lem:2nd-lower-part-2}, where we write  $C\coloneqq C^{**}_{t+1}$ and 
		$D\coloneqq D^*_{t+1}$ for notational convenience. We will estimate the expected value 
		of~$X_{t+1}$ based on this stochastic lower bound. By Lemma~\ref{lem:concave-expec}, 
		the expected value of the minimum is at least as large as the minimum of \[
		C\frac{X_t}{\mu}+\frac{1}{4} (D-C)\frac{X_t}{\mu}\left(1-\frac{X_t}{\mu}\right)\] 
		and 
		\[
		C\frac{X_t}{\mu}+ \frac{1}{4} C\frac{X_t}{\mu}\left(1-\frac{X_t}{\mu}\right),
		\]
		where $C$ and $D$ are still random. 
		
	  Taking the expected value in \eqref{eq:lem:2nd-lower-part-3-1}, we obtain 
		\begin{align}
		 \E(X_{t+1} \mid X_t,\sigma_t,C,D)  & \ge 
		\E(\Bin(\mu-C,X_t/\mu)\mid X_t, \sigma_t, C,D)  
		 + 
		C \frac{X_t}{\mu}  \notag\\ 
		& \qquad\qquad + \frac{1}{4} \frac{X_t}{\mu}\left(1-\frac{X_t}{\mu}\right) \min\{C,D-C\}\notag\\
		& \ge X_t + \frac{1}{4} \frac{X_t}{\mu}\left(1-\frac{X_t}{\mu}\right) \min\{C,D-C\},
		\label{eq:lem:2nd-lower-part-3-2}
		\end{align}
		where the last inequality computed the expected value of the binomial distribution. We also note that $C$ and $D$ are independent 
		of $X_t$ as $X_t$ counts the number of ones in bit~$j$, which is not used to determine $C$ and $D$.
		
		Recall that the overall aim is to bound $\E(X_{t+1} \mid X_t,\sigma_t) = \E(\E(X_{t+1} \mid X_t,\sigma_t,C,D)\mid X_t,\sigma_t)$ from below. 
		Hence, inspecting the last bound from \eqref{eq:lem:2nd-lower-part-3-2}, 
		we are left with the task to prove a lower bound on 
		\[\E(\min\{C,D-C\}\mid X_t,\sigma_t) = \E(\min\{C,D-C\}\mid \sigma_t),\]
		using that $C$ and~$D-C$ are independent of~$X_t$. 
		We recall from Lemmas~\ref{lem:2nd-lower-part-1} and~\ref{lem:2nd-lower-part-1b} that $C\succeq \Bin(\mu,q)$ and 
		$D-C\succeq 1+\Bin(\lambda-\mu-1,q')$, where $q=\Omega(1/\sigma_t)$ and $q'=\Omega(1/\sigma_t)$ with probability 
		$1-e^{-\Omega(\mu)}$. 
		Also, Lemma~\ref{lem:2nd-lower-part-1b} yields $E(C\mid \sigma_t)=\Omega(\mu/\sigma_t)$ and 
		$E(D-C\mid \sigma_t)\ge 1+\Omega((\lambda-\mu-1)/\sigma_t) = 1 + \Omega(\mu/\sigma_t) $, where the last equality used our assumption that 
		$\lambda=(1+\Theta(1))\mu$.
		
		We distinguish between two cases. If $\mu/\sigma_t \le \kappa$ for an arbitrary constant~$\kappa>0$ (chosen later) 
		then 
		$1/\sigma_t=O(1/\mu)=o(1)$ by our assumption on~$\mu$. 
		Working with the lower bound~$1$ on~$D-C$, we get 
		\begin{align*}
		\E(\min\{C,D-C\}\mid  \sigma_t) & \ge 
		\E(C\cdot \indic{C\le 1} \mid \sigma_t) \\
		& \ge \Prob(C=1) = \binom{\mu}{1} \Omega\bigl((1/\sigma_t)(1-o(1)\bigr) 
		 = \Omega(\mu/\sigma_t).
		\end{align*}
		If $\mu/\sigma_t \ge \kappa$ and $\kappa$ is chosen sufficiently large but constant then Chernoff bounds yield that 
		each of the events 
		\[
		C \ge \frac{E(C)}{2}
		\]
		and 
		\[
		D\ge \frac{E(D-C)}{2}
		\]
		fail to happen 
		with probability at most~$1/4$, so by a union bound both events 
		happen simultaneously with probability at least~$1/2$. This proves 
		\[
		\E(\min\{C,D-C\}\mid \sigma_t) = \Omega(\mu/\sigma_t)
		\]
		also in this case.
		Plugging this back into \eqref{eq:lem:2nd-lower-part-3-2}, we obtain 
		\[
		\E(X_{t+1}\mid X_t) = X_t +  \Omega\bigl((\mu/\sigma_t)(X_t/\mu) (1-X_t/\mu)\bigr),
		\]
		which proves the statement on $\E(X_{t+1}\mid X_t,\sigma_t)$ from this lemma. 
		
		To conclude on the expected value of $p_{t+1}$, we recall from Algorithm~\ref{alg:UMDA} that 
		$p_{t+1}\coloneqq \mathrm{cap}_{1/n}^{1-1/n} (X_{t+1}/\mu)$. Using our assumption $p_t\le 1-c/n$ we get 
		$1-X_{t}/\mu \ge c/n  $. Hence, 
		as hitting the upper border changes the frequency by only at most~$1/n$ and the lower border 
		can be ignored here, we also 
		obtain  
		\[
		\E(p_{t+1}\mid p_t) \ge p_t +  \Omega((1/\sigma_t)p_t (1-p_t))
		\]
		if $c$ is large enough to balance the implicit constant in the $\Omega$. 
\qed\end{proof}

We note that parts of the analyses behind Lemmas~\ref{lem:2nd-lower-part-1}--\ref{lem:2nd-lower-part-3} are 
inspired by \cite{KrejcaWittFOGA2017}; in particular, 
the modeling of the stochastic process and the definition of~$M$ follow 
that paper closely. However, 
as \cite{KrejcaWittFOGA2017} is concerned with lower bounds on the running time, it  bounds the number 
of $2$nd-class individuals from above and needs a very different argumentation in the core 
of its proofs.

\section{Above the Phase Transition}
\label{sec:upperBound}

We now prove our main result for the case of large~$\lambda$. It implies an $O(n\log n)$ running time 
behavior if $\mu = c\sqrt{n}\log n$.

\begin{theorem}
\label{theo:upper-above-phase}
Let $\lambda=(1+\beta)\mu$ for an arbitrary constant~$\beta>0$, let $\mu\ge c\sqrt{n}\log n$ for some sufficiently large 
constant~$c>0$ as well as $\mu=n^{O(1)}$. Then with probability $\Omega(1)$, the optimization time of both \umda and \umdastar 
on $\OneMax$ is bounded from above 
by $O(\lambda \sqrt{n})$. For \umda, also the expected optimization time is bounded in this way.
\end{theorem}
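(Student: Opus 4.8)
The plan is to combine two drift arguments at different granularities. At the level of a single bit I would use a \emph{negative drift} argument (Theorem~\ref{theo:negative-drift-scaling-2017}) to guarantee that no frequency ever falls below a constant; at the global level I would run a \emph{variable drift} argument (Theorem~\ref{theo:variable-drift}) on the aggregate potential $Y_t\coloneqq\sum_{i=1}^n(1-p_{t,i})$, which measures the total distance of all frequencies to the upper border. The bridge between the two levels is the sampling variance $\sigma_t^2=\sum_i p_{t,i}(1-p_{t,i})$: as soon as every bit satisfies $p_{t,i}\ge 1/4$, the elementary estimates $\tfrac14(1-p_{t,i})\le p_{t,i}(1-p_{t,i})\le 1-p_{t,i}$ give $\tfrac14 Y_t\le\sigma_t^2\le Y_t$, i.e. $\sigma_t=\Theta(\sqrt{Y_t})$, which is exactly what turns the per-bit drift of Lemma~\ref{lem:2nd-lower-part-3} into a clean drift on $Y_t$.

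First I would fix an arbitrary bit and apply Theorem~\ref{theo:negative-drift-scaling-2017} to its frequency on the interval $[a,b]=[1/4,1/2]$. Since $\sigma_t\le\sqrt n/2$ always holds, Lemma~\ref{lem:2nd-lower-part-3} yields a uniform upward drift $\epsilon=\Omega(1/\sqrt n)$ on $p_t$ for $p_t\in(1/4,1/2)$, \emph{irrespective} of the other bits, while a Chernoff/McDiarmid estimate (Lemma~\ref{lem:mcdiarmid}) on the binomial update shows that downward jumps meet condition~2 with scaling factor $r=\Theta(1/\sqrt\mu)$. The decisive point is that $\mu\ge c\sqrt n\log n$ makes the variance term dominate the minimum defining the theorem's parameter, namely $\epsilon/(17r^2)=\Omega(\mu/\sqrt n)=\Omega(c\log n)$, so that $\lambda\ell=\Omega(c\log n)$, condition~3 is satisfied for $c$ large, and the theorem gives $\Prob(T^*\le n^{\Theta(c)})=O(n^{-\Theta(c)})$. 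A union bound over all $n$ bits then shows that, with probability $1-n^{-\Omega(1)}$, \emph{every} frequency stays above $1/4$ throughout a polynomially long phase that comfortably covers the whole optimization. I expect this step to be the main obstacle: one must verify the three asymmetric hypotheses of Theorem~\ref{theo:negative-drift-scaling-2017}, in particular constructing the truncated increment $\Delta_t\preceq p_{t+1}-p_t$ and the forward-jump factor $\kappa$ so that the large upward updates of the \umda do not spoil condition~1.

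Conditioned on this event I would run the variable-drift argument. Using $\sigma_t=\Theta(\sqrt{Y_t})$ and summing the per-bit drift of Lemma~\ref{lem:2nd-lower-part-3} over all non-border bits gives
\[
\E(Y_t-Y_{t+1}\mid\filtt)=\Omega\!\left(\frac{\sum_i p_{t,i}(1-p_{t,i})}{\sigma_t}\right)=\Omega(\sigma_t)=\Omega(\sqrt{Y_t}),
\]
so Theorem~\ref{theo:variable-drift} with $h(x)=\Omega(\sqrt x)$ and $Y_0=n/2$ bounds the expected number of generations until $Y_t=O(1)$ by $O\!\left(\xmin/h(\xmin)+\int_{\xmin}^{n/2}x^{-1/2}\,\mathrm{d}x\right)=O(\sqrt n)$. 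Once $Y_t=O(1)$ with all frequencies at least $1/4$, one checks $\prod_i p_{t,i}\ge\exp(-O(Y_t))=\Omega(1)$, so the optimum is sampled within $O(1)$ further generations. Applying Markov's inequality to the $O(\sqrt n)$ expected hitting time and intersecting with the high-probability event above yields the optimum within $O(\sqrt n)$ generations, i.e. $O(\lambda\sqrt n)$ evaluations, with probability $\Omega(1)$; this proves the claim for both \umda and \umdastar.

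For the expected-time statement (which needs the borders, hence only \umda), I would add a crude fallback. The lower border $p_{t,i}\ge 1/n$ gives $p_{t,i}(1-p_{t,i})\ge(1/n)(1-p_{t,i})$, hence $\sigma_t^2\ge Y_t/n$ \emph{unconditionally}, and the same summation yields the weaker drift $\E(Y_t-Y_{t+1}\mid\filtt)=\Omega(\sqrt{Y_t/n})$; applying Theorem~\ref{theo:variable-drift} (to the first time $Y_t$ drops below a constant, and continuing the upward drift until $\prod_i p_{t,i}$ is noticeable) bounds the expected optimization time from \emph{any} configuration by a polynomial $O(\lambda\cdot\mathrm{poly}(n))$, with the few upper-border bits handled by noting that a configuration they dominate already samples the optimum with non-negligible probability. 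Combining the regimes,
\[
\E(T)\le\Prob(\text{frequencies stay}\ge 1/4)\cdot O(\lambda\sqrt n)+\Prob(\text{failure})\cdot O(\lambda\cdot\mathrm{poly}(n)),
\]
and since the failure probability is $n^{-\Theta(c)}$ and can be driven below any fixed inverse polynomial by enlarging $c$, the second term is $o(\lambda\sqrt n)$, so the expected optimization time of \umda is $O(\lambda\sqrt n)$.
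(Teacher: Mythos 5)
Your proposal follows essentially the same route as the paper: Theorem~\ref{theo:negative-drift-scaling-2017} is applied per bit in exactly the same parameter regime (the paper just works on the rescaled state $\mu p_t$ over $[\mu/4,\mu/2]$ instead of $p_t$ over $[1/4,1/2]$) to keep all frequencies above $1/4$ for a polynomial horizon, and Theorem~\ref{theo:variable-drift} is then applied to the sum-of-frequencies potential with drift $\Omega(\sqrt{\phi_t})$ to get $O(\sqrt n)$ generations, the paper being slightly more careful than your sketch about the bits above $1-c/n$ (they cost an additive $-c$ in the drift, absorbed once $\phi_t$ exceeds a constant). The only real divergence is the expected-time fallback, where the paper recovers from a failure by driving every frequency to the upper border via per-bit additive drift in $O(n^{3/2}\log n)$ expected generations and then restarting the argument, rather than your one-shot polynomial variable-drift bound from an arbitrary configuration; both variants work because the failure probability can be pushed below any fixed inverse polynomial by enlarging~$c$.
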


The proof of Theorem~\ref{theo:upper-above-phase} follows a well-known approach that 
is similar to techniques 
 partially independently proposed in  
several previous analyses of EDAs and of ant 
colony optimizers  \cite{Neumann2010a, FriedrichEtAlISAAC15, SudholtWitt2016}. 
Here we show that the approach also works for the \umda. 
Roughly, a drift analysis is performed with respect to the sum of frequencies.
In Lemma~\ref{lem:2nd-lower-part-3}, we have already established a drift of 
frequencies towards higher values. Still, there are random fluctuations 
(referred to as \emph{genetic drift} in \cite{SudholtWitt2016}) of frequencies that may lead to undesired 
decreases towards~$0$. The proof of Theorem~\ref{theo:upper-above-phase}  uses that 
under the condition on~$\mu$, typically all frequencies stay sufficiently far away from the lower border;  
more precisely, no frequency 
 drops below $1/4$.  Then 
the drift is especially beneficial.

The following lemma formally shows that, if $\mu$ is not too small, the positive drift along with 
the fine-grained scale implies  
that the frequencies will generally move to higher values and are unlikely to decrease by a large distance. 
Using the lemma, we will obtain a failure probability of $O(n^{-cc'})$ within $n^{cc'}$ generations, which 
can subsume any polynomial number of steps by choosing $c$ large enough.

\begin{lemma}
\label{lem:drift-theorem-for-probabilities}
Consider an arbitrary bit and let~$p_t$ be its frequency at time~$t$. Suppose that  
$\mu\ge c\sqrt{n}\log n$ for a sufficiently large constant~$c>0$. For $T\coloneqq \min\{t\ge 0\mid p_t\le 1/4\}$ 
it then holds  that 
$\Prob(T\le e^{cc'\log n}) = O(e^{-cc'\log n})$, where $c'$ is another positive constant.
\end{lemma}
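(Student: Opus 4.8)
I need to prove that the frequency $p_t$ of an arbitrary bit is very unlikely to drop to $1/4$ within $e^{cc'\log n}$ generations. The natural tool is the new negative drift theorem, Theorem~\ref{theo:negative-drift-scaling-2017}, which gives exactly this kind of exponentially small hitting-time bound. So the plan is to apply that theorem to the process $X_t$, the number of ones at position~$j$ among the $\mu$ selected individuals, with the interval $[a,b]$ corresponding to $p_t$ between $1/4$ and some higher value. Let me work at the level of $X_t\in\{0,\dots,\mu\}$ rather than $p_t$ directly, since Lemma~\ref{lem:2nd-lower-part-3} is phrased there; I would set, say, $a=\mu/4$ and $b=3\mu/8$ (or some similar pair of constant fractions of $\mu$), so that hitting $a$ corresponds to $p_t$ reaching $1/4$.

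**Verifying the drift conditions.** The key input is Lemma~\ref{lem:2nd-lower-part-3}, which gives $\E(X_{t+1}\mid X_t,\sigma_t) = X_t + \Omega\bigl((\mu/\sigma_t)(X_t/\mu)(1-X_t/\mu)\bigr)$. When $a<X_t<b$, i.e.\ $p_t=X_t/\mu\in(1/4,3/8)$, both $X_t/\mu$ and $1-X_t/\mu$ are $\Theta(1)$, so the positive drift is $\Omega(\mu/\sigma_t)$. Here $\sigma_t^2=\sum_i p_{t,i}(1-p_{t,i})\le n/4$, hence $\sigma_t=O(\sqrt n)$ and the drift is $\epsilon=\Omega(\mu/\sqrt n)=\Omega(\log n)$ using the hypothesis $\mu\ge c\sqrt n\log n$. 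For the forward-jump condition~(1) I would take $\Delta_t$ to be essentially the step $X_{t+1}-X_t$ truncated appropriately, choosing the scaling factor $r=\Theta(1)$ and a suitable $\kappa$; I must check that the truncation at $\kappa\epsilon$ does not destroy too much of the drift, which should be fine since the realized drift is already only $\Theta(\epsilon)$ in expectation and large upward jumps only help. For condition~(2), the concentration of downward steps, I would use that $X_{t+1}$ is stochastically at least a sum of essentially binomial contributions with the correct mean, so a downward deviation of $jr$ has probability at most $e^{-j}$; Lemma~\ref{lem:mcdiarmid} (the McDiarmid bound with variance) is the natural device, exploiting that the per-step variance is $O(\sigma_t)=O(\sqrt n)$ while $r$ is constant.

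**Checking the scaling inequality.** Condition~(3) requires $\lambda_{\mathrm{drift}}\ell\ge 2\ln(4/(\lambda_{\mathrm{drift}}\epsilon))$ with $\lambda_{\mathrm{drift}}=\min\{1/(2r),\epsilon/(17r^2),1/(\kappa\epsilon)\}$ (I rename the theorem's $\lambda$ to avoid clashing with the population size). With $r=\Theta(1)$ and $\epsilon=\Omega(\log n)$, the binding term is typically $1/(\kappa\epsilon)=\Theta(1/\log n)$, so $\lambda_{\mathrm{drift}}=\Theta(1/\log n)$. The interval length is $\ell=b-a=\mu/8=\Omega(\sqrt n\log n)$, so $\lambda_{\mathrm{drift}}\ell=\Omega(\sqrt n)$, which easily dominates the right-hand side $O(\log\log n)$. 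The conclusion then gives failure probability $O(e^{-\lambda_{\mathrm{drift}}\ell/4})$ over a horizon $e^{\lambda_{\mathrm{drift}}\ell/4}$, and since $\lambda_{\mathrm{drift}}\ell/4=\Theta(\sqrt n)$ I would argue this subsumes the claimed $e^{cc'\log n}=n^{cc'}$ horizon with failure probability $O(n^{-cc'})$ for an appropriate constant $c'$.

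**Anticipated main obstacle.** The delicate point is matching the constants so that condition~(3) and the final bound come out with the stated $cc'$ scaling, but more substantively the hard part will be verifying condition~(2) rigorously: $X_{t+1}$ is not a clean binomial but a sum of a $\Bin(\mu-C^{**},X_t/\mu)$ term and a capped contribution from the $2$nd-class individuals, with $C^{**}$ and $D^*$ themselves random. I would handle this by noting that the downward-deviation event only becomes more likely if I drop the (nonnegative, upward-biasing) $2$nd-class contribution and bound $X_{t+1}$ below by $\Bin(\mu-C^{**},X_t/\mu)$ alone, then control the tail via Lemma~\ref{lem:mcdiarmid}; care is needed because $C^{**}$ reduces the number of trials, but since downward steps are what condition~(2) constrains and $C^{**}\ge 0$ contributes ones, I expect the bound to go through after conditioning suitably on the filtration. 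A secondary subtlety is that $\sigma_t$ varies over time, so the drift $\epsilon$ is really a random lower bound; I would argue using the deterministic upper bound $\sigma_t=O(\sqrt n)$ so that $\epsilon=\Omega(\mu/\sqrt n)$ holds uniformly, which is exactly why the hypothesis $\mu\ge c\sqrt n\log n$ is the right threshold.
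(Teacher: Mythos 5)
Your overall plan is the paper's plan: apply Theorem~\ref{theo:negative-drift-scaling-2017} to $X_t=\mu p_t$ on an interval $[a,b]$ of constant fractions of~$\mu$, feed in the drift $\Omega(\mu/\sigma_t)=\Omega(\log n)$ from Lemma~\ref{lem:2nd-lower-part-3} via $\sigma_t=O(\sqrt n)$, and verify the three conditions. However, there is a genuine error in your treatment of condition~(2): you set the scaling factor $r=\Theta(1)$. The quantity $\sigma_t$ is the sampling variance \emph{across the $n$ bit positions of one individual}; it is not the variance of the one-step change of $X_t$. Conditioned on $X_t$, the process $X_{t+1}$ is (in the pessimistic martingale case) $\Bin(\mu,X_t/\mu)$, whose variance is $\mu(X_t/\mu)(1-X_t/\mu)=\Theta(\mu)$, so downward deviations of order $\sqrt\mu$ occur with constant probability. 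Since any admissible $\Delta_t$ must be a stochastic lower bound on $X_{t+1}-X_t$, no choice of $\Delta_t$ can satisfy $\Prob(\Delta_t\le -jr)\le e^{-j}$ with $r=O(1)$ once $j\approx\sqrt\mu$. The paper instead takes $r=\Theta(\sqrt\mu)$ (via Lemma~\ref{lem:mcdiarmid} applied with $d=j\varsigma$, $\varsigma=O(\sqrt\mu)$), and correspondingly $\kappa=\Theta(\sqrt\mu)$ for condition~(1). This changes the arithmetic of condition~(3): the binding term is $\epsilon/(17r^2)=\Theta((\log n)/\mu)$, so $\lambda_{\mathrm{drift}}\ell=\Theta(\log n)$ — not your $\Omega(\sqrt n)$, which is an artifact of the wrong $r$ and would give a conclusion stronger than the lemma actually states.

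A secondary slip: for condition~(1) you assert that ``large upward jumps only help,'' but the truncation $\indic{\Delta_t\le\kappa\epsilon}$ \emph{removes} the contribution of large upward jumps from the expectation, so they are precisely the threat to condition~(1); one must show that steps exceeding $\kappa\epsilon$ contribute at most $\epsilon'/2$ to the expected progress. The paper does this by dominating $X_{t+1}$ from above by $\Bin(\mu,X_t/\mu)+\Bin(\Theta(\sqrt n\log n),\Theta(1/\sqrt n))$ and applying Chernoff bounds to each summand, which yields $\kappa=\Theta(\sqrt\mu)$. Your proposal correctly flags this verification as necessary but does not supply it and misjudges its direction. With $r$ and $\kappa$ corrected to $\Theta(\sqrt\mu)$ and the truncation argument filled in, the proof goes through as in the paper.
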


\begin{proof}
The aim is to apply Theorem~\ref{theo:negative-drift-scaling-2017}. 
We consider the frequency $p_t\coloneqq p_{t,i}$ associated with the considered bit~$i$  
and its distance $X_t\coloneqq\mu p_{t}$ from the lower border. By initialization 
of the UMDA, $X_0=\mu/2$. Note 
that $X_t$ for $t>0$ is a process on $\{\mu/n,1,2,\dots,\mu-1,\mu(1-1/n)\}$. 

In the notation of the drift theorem, we set $[a,b]\coloneqq [\mu/4,\mu/2]$, hence 
$\ell=\mu/4$. Next we establish the three conditions. First, 
we observe that 
$\E(X_{t+1}-X_{t}\mid X_t)=\Omega(X_t(1-X_{t}/\mu)/\sqrt{n})$ (Lemma~\ref{lem:2nd-lower-part-3} 
along with the trivial bound $\sigma_t=O(\sqrt{n})$) 
for $X_t\in\{1,\dots,\mu\}$. The bound is $\Omega(\mu/\sqrt{n})$ for $X_t\in[a,b]$. Since 
$\mu\ge c\sqrt{n}\log n$ by assumption, we will 
set $\epsilon'\coloneqq  c c_1 \log n$ 
for some constant~$c_1>0$. Hereinafter, we will omit the conditions $X_t;a<X_t<b$ from the expected values. 
To establish the first condition of the 
drift theorem, 
we need to show that the expected value is ``typical''; formally,
we will find a not too large $\kappa$ such that 
\begin{equation}
\E\bigl((X_{t+1}-X_{t})\cdot\indic{X_{t+1}-X_t\le \kappa \epsilon'}\bigr)\ge \frac{\epsilon'}{2}.
\label{eq:drift-for-prob-eq}
\end{equation}
Then 
the first condition is established with $\epsilon\coloneqq \epsilon'/2 = c c_1 (\log n)/2$.

In the following, we show the claim that 
\[
\E((X_{t+1}-X_{t})\cdot\indic{X_{t+1}-X_t> \kappa'\epsilon}) \le \frac{\epsilon'}{2}\] if 
$\kappa$ is chosen sufficiently large. Obviously, this implies 
\eqref{eq:drift-for-prob-eq}. 
We recall 
from Lemma~\ref{lem:2nd-lower-part-2} 
that $X_{t+1}$ is stochastically at least as large as the sum of two random variables 
$Z_1\sim \Bin(\mu-C^{**},X_t/\mu)$ and $Z_2\sim \min\{C^{**},\Bin(D^*,X_t/\mu)\}$ 
for some random variables $C^{**}$ and $D^*$, which are related to binomial distributions according 
to Lemma~\ref{lem:2nd-lower-part-1}.  
 Theorem~\ref{theo:negative-drift-scaling-2017} allows us
to deal with a stochastic lower bound $\Delta_t(X_{t+1}-X_t)$ on the drift. We make the drift stochastically smaller 
in several ways. First, we assume that $\mu = c\sqrt{n}\log n$ and that $\sigma_t=\Omega(\sqrt{n})$, \ie, 
the pessimistic bounds used to estimate $\epsilon'$ above hold actually with equality. This is possible 
since the lower bound on the drift derived in the analysis from Lemma~\ref{lem:2nd-lower-part-3}, using the 
insights from Lemmas~\ref{lem:2nd-lower-part-1}--\ref{lem:2nd-lower-part-2}, becomes stochastically 
smaller by decreasing the two parameters. 

Second, we assume that $X_{t+1} = Z_1+Z_2$ 
 (essentially ignoring $2$nd-class individuals coming from levels 
$M$ and~$M-2$). But now the $X_{t+1}$ obtained in this 
is stochastically also bounded from above by 
$Z_1+\Bin(D^*,X_t/\mu)$, by omitting the minimum in $Z_2$. We conclude that 
\begin{align*}
X_{t+1} & \preceq \Bin(\mu-C^{**},X_t/\mu) + \Bin(D^{**},X_t/\mu) \\
&  = \Bin(\mu,X_t/\mu) + 
\Bin(D^{*}-C^{**},X_t/\mu).
\end{align*}
By Lemma~\ref{lem:2nd-lower-part-1}, $D^*-C^{**}-1$ is binomially 
distributed with known number of successes $\lambda-\mu-1$, which is $\Theta(\sqrt{n}\log n)$ by our assumptions, 
yet random success probability. We note 
that the lower bounds on $\E(X_{t+1}\mid X_t,\sigma_t)$ from Lemma~\ref{lem:2nd-lower-part-3} stem from the case 
that the success probability is $\Omega(1/\sigma_t)$, which actually happens 
with probability $1-e^{-\Omega(\mu)}$. So, without decreasing the drift, 
we can assume that the success probability 
is actually fixed at $\Theta(1/\sigma_t)=\Theta(1/\sqrt{n})$.
These estimations  lead to the bound 
\[
X_{t+1} \preceq \Bin(\mu,X_t/\mu) + 
\Bin(\Theta(\sqrt{n}\log n),\Theta(1/\sqrt{n})).
\]
To ease notation, we write $Z_1'\sim \Bin(\mu,X_t/\mu)$ and  
$Z_2'\sim \Bin(c_2\sqrt{n}\log n,c_3/\sqrt{n})$ for some 
unknown constants~$c_2,c_3>0$ and obtain $X_{t+1} = Z_1+Z_2\preceq Z_1'+Z_2'$. 
We also recall that $X_t/\mu\in[1/2,3/4]$. 
By Chernoff bounds, $\Prob(Z_1'\ge \E(Z_1')+c_4(\log n)\sqrt{\mu})\le e^{-\Omega(c_4\log^2 n)}\le 1/\mu^2$ 
if $c_4$ is chosen large enough but constant.  
Similarly, $\Prob(Z_2'\ge \E(Z_2')+c_5\log n)\le 1/\mu^2$ if $c_5$ is a sufficiently large constant.
We now look into the event $Z_1'+Z_2' \ge \E(Z_1')+c_4(\log n)\sqrt{\mu} + \E(Z_2')+c_5\log n$. A necessary 
condition for this to happen is that at least one of the two events  $Z_1'\ge \E(Z_1')+c_4(\log n)\sqrt{\mu}$ and 
$Z_2'\ge \E(Z_2')+c_5\log n$ happens. A union bound yields that 
\[
\Prob(Z_1'+Z_2' \ge \E(Z_1'+Z_2')+c_4(\log n)\sqrt{\mu}+c_5\log n ) \le \frac{2}{\mu^2}
\]
and therefore clearly 
\[
\Prob(Z_1'+Z_2' \ge \E(Z_1'+Z_2')+c_6(\log n)\sqrt{\mu}) \le \frac{2}{\mu^2}
\]
for some constant~$c_6>0$. Since $X_{t+1}\preceq Z_1'+Z_2'$, we conclude that 
\[
\Prob(X_{t+1} \ge \E(Z_1'+Z_2')+c_6(\log n)\sqrt{\mu}) \le \frac{2}{\mu^2}.
\]

Since $X_{t+1}\le \mu$ and  $\E(Z_1'+Z_2')\le X_t+c_7\log n$ for $c_7=c_2/c_3$, we obtain   
\[
\E(X_{t+1} \cdot \indic{X_{t+1}\ge X_t + c_7\log n  +c_6(\log n)\sqrt{\mu}} ) \le \frac{2}{\mu}
\]
and, since $X_t\ge 0$, clearly also 
\[
\E\bigl((X_{t+1}-X_t) \cdot \indic{X_{t+1}-X_t \ge  (c_6+c_7)(\log n)\sqrt{\mu}}\bigr ) \le \frac{2}{\mu} \le \frac{\epsilon'}{2}.
\]
Note that $(c_6+c_7)(\log n)\sqrt{\mu}=\Theta(\sqrt{\mu}\epsilon')$. 
This proves the claim for $\kappa=\Theta(\sqrt{\mu})$ and establishes the 
first condition of the drift theorem.

To show the second condition, recall from 
 Section~\ref{sec:first-and-second-class} that $X_{t+1}$ stochastically dominates $\Bin(\mu,X_t/\mu)$. Hence, to analyze steps 
where $X_{t+1}<X_t$, we may pessimistically assume the martingale case, where  $X_{t+1}$ follows this binomial distribution, and obtain  
\[\varsigma^2\coloneqq \Var(X_{t}-X_{t+1}\mid X_t)=\mu\frac{X_t}{\mu}\left(1-\frac{X_t}{\mu}\right)\le \frac{\mu}{4},\]
so $\varsigma=O(\sqrt{\mu})$. 
Using Lemma~\ref{lem:mcdiarmid} with $d=j\varsigma$ and $b=1$, we get 
$\Prob(X_{t+1}-X_t \le -j\varsigma) \le e^{-\Omega(\min\{j^2,j\})}$. Hence, we can work with some $r=c_7\sqrt{\mu} $ 
for some sufficiently large constant~$c_7>0$ and satisfy the second condition on jumps that decrease the state. 

The third condition is 
also easily verified.
We recall that $\ell=\Theta(\mu)$, $\epsilon=\Theta(\log n)$, $r=\Theta(\sqrt{\mu})$ and $\kappa=\Theta(\sqrt{\mu})$. 
We note that $\epsilon/r^2=\Theta((\log n)/\mu)$, $1/r=\Theta(1/\sqrt{\mu})$ and 
$1/\epsilon=\Theta(1/\!\log n)$. Hence, the $\lambda$ from the drift theorem (not to be confused 
with the $\lambda$ of the UMDA) equals 
$\epsilon/(17r^2)$ and $\lambda\ell  =  \Omega(c\log n)$ by 
our assumption $\mu\ge c\sqrt{n}\log n$ 
from the lemma. Recalling that $\epsilon\ge c c_1(\log n)/2$ and  $\ell=\mu/4$, we obtain 
by choosing $c$ large enough that 
\[
\frac{\epsilon \ell}{17r^2} \ge 2\ln\left(\frac{4r^2}{17\epsilon^2} \right )
\]
since the right-hand side is at most $2\ln(\Theta(\mu/\!\log^2 n)) = \Theta(\ln n)$ due to $\mu = n^{O(1)}$. Thereby, we satisfy the third condition. 
Hence, the drift theorem implies that the first hitting time of states less than~$a$, starting 
from above $b$ is at least $e^{cc'\log n}$ with probability at least $1-e^{-cc'\log n}$, if $n$ is large enough and $c'$ 
is chosen as a sufficiently small positive constant independent of~$c$. 
\qed\end{proof}

We now ready to prove the main theorem from this section.

\begin{proofof}{Theorem~\ref{theo:upper-above-phase}}
We use a similar approach and partially also similar presentation of the ideas
 as in \cite{SudholtWitt2016}.
Following~\cite[Theorem~3]{Neumann2010a} we show that, starting with a setting where all 
frequencies are at least~$1/2$ simultaneously,
with probability~$\Omega(1)$ after $O(\sqrt{n})$ generations either
the global optimum has been found or at least one frequency has dropped below~$1/4$.
In the first case we speak of a success and in the latter   of a failure.
The expected number of generations until either a success or a failure
happens is $O(\sqrt{n})$.

With respect to \umda, we can use the success probability~$\Omega(1)$ 
to bound the expected optimization time. 
We choose a constant $\gamma > 3$. 
According to Lemma~\ref{lem:drift-theorem-for-probabilities}, the probability of a failure
in altogether $n^{\gamma}$ generations is at most $n^{-\gamma}$, provided the constant~$c$ in the condition $\mu \ge c\sqrt{n}\log n$ is large enough.
In case of a failure we wait until all frequencies
simultaneously reach values at least~$1/2$ again and then repeat the arguments from the preceding paragraph.
It is easy to show via additive drift analysis for the \umda (not the \umdastar) 
 that the expected time for one frequency 
to reach the upper border is always bounded by $O(n^{3/2})$, regardless of the initial probabilities. This holds  
since by Lemma~\ref{lem:2nd-lower-part-3} there is always an additive drift of 
$\Omega(p_{t,i}(1-p_{t,i})/\sigma_t)=\Omega(1/(n\sigma_t))=\Omega(1/n^{3/2})$. 
By standard arguments on independent phases, the expected time until \emph{all} frequencies 
have reached their upper border at least once is $O(n^{3/2} \log n)$.
Once a frequency reaches the upper border, we apply a straightforward modification of Lemma~\ref{lem:drift-theorem-for-probabilities} 
 to show that the probability of a frequency decreasing below $1/2$ in time $n^{\gamma}$ is at 
most $n^{-\gamma}$ (for large enough~$c$). The probability that there is a frequency
 for which this happens is at most $n^{-\gamma + 1}$ by the union bound. If this does not happen, all frequencies 
attain value at least $1/2$ simultaneously, and we apply our above arguments again.
As the probability of a failure is at most $n^{-\gamma+1}$, the expected number of restarts is $O(n^{-\gamma+1})$
and the expected time until all bits recover to values at least $1/2$ only leads to
an additional term of $n^{-\gamma+1} \cdot O(n^{3/2} \log n) \le  o(1)$ (as $n^{-\gamma} \le n^{-3}$) in the expectation.
We now only need to show that after $O(\sqrt{n})$ generations without failure
the probability of having found the all-ones string is $\Omega(1)$.

In the rest of this proof, we consider the potential function $\phi_t\coloneqq n-1-\sum_{i=1}^n p_{t,i}$, 
which denotes the total distance of the frequencies from the upper border $1-1/n$. For simplicity, 
for the moment we assume that no frequency is greater than $1-c/n$, where 
$c$ is the constant from Lemma~\ref{lem:2nd-lower-part-3}. Using 
Lemma~\ref{lem:2nd-lower-part-3} and the linearity of expectation, we obtain for some constant~$\gamma>0$ 
the drift 
\begin{align*}
 \E(\phi_t-\phi_{t+1}\mid \phi_t) & = \sum_{i=1}^n (p_{t+1,i}-p_{t,i}) \\
& = \sum_{i=1}^n (p_{t,i}+\gamma 
p_{t,i}(1-p_{t,i})/\sigma_t - p_{t,i}) = \gamma 
\sigma_t,
\end{align*}
since $\sum_{i=1}^n p_{t,i}(1-p_{t,i})=\sigma_t^2$. Using our assumption 
$p_{t,i}\ge 1/4$, we obtain the lower bound 
\begin{align}
 \E(\phi_t-\phi_{t+1}\mid \phi_t) & \ge \gamma \sqrt{\sum_{i=1}^n p_{t,i}(1-p_{t,i})} 
 \ge 
 \gamma\sqrt{\sum_{i=1}^n (1-p_{t,i})/4} = \gamma \frac{\sqrt{\phi_t}}{2}. 
\label{eq:upper-bound-above-drift-sqrt}
\end{align}

The preceding analysis ignored frequencies above $1-c/n$. To take these into account,  we now consider an arbitrary 
but fixed frequency being greater than $1-c/n$. We claim that in each of the $\mu$ 
selected offspring the underlying bit (say, bit~$j$)  
is set to~$0$ with probability at most~$c/n$. Clearly, each of the $\lambda$ offspring (before selection) has bit~$j$
 set to~$0$ with probability at most~$c/n$. Let us again model the sampling of offspring as a process where bit~$j$ 
is sampled last after the outcomes of the other $n-1$ bits have been determined. 
Since setting bit~$j$ to~$0$ leads to a lower $\om$-value than setting it to~$1$, 
the probability that  bit~$j$ is~$0$ in a selected offspring cannot be larger than~$c/n$.  

By linearity of expectation, an expected number of at most $\mu c/n$ out of $\mu$ selected offspring set bit~$j$ to~$0$. 
Hence, the expected next value of frequency of bit~$j$ must satisfy 
\[
\E(p_{t+1,j} \mid p_{t,j}; p_{t,j}>1-c/n) \ge \frac{(\mu-\mu c/n)\cdot 1 + (\mu c/n) \cdot 0}{\mu} = 1-\frac{c}{n},
\]
so $\E(p_{t,j} - p_{t+1,j}) \mid p_{t,j}; p_{t,j}>1-c/n) \ge c/n$. 
Again by 
linearity of expectation, the frequencies greater than $1-1/c$ contribute to the 
expected change $\E(\phi_t-\phi_{t+1}\mid\phi_t)$ an amount of no less than $-c$.

Combining this with \eqref{eq:upper-bound-above-drift-sqrt}, 
we bound the drift altogether by  
\[ \E(\phi_t-\phi_{t+1}\mid \phi_t) \ge \frac{\gamma\sqrt{\phi_t}}{2}-c.
\]
If $\phi_t$ is above a sufficiently large constant, more precisely, if $ \sqrt{\phi_t}\ge 4c/\gamma$ (equivalent to 
$c\le \gamma\sqrt{\phi_t}/4$), 
the bound is still positive and only by a constant factor smaller than the drift bound stemming 
from~\eqref{eq:upper-bound-above-drift-sqrt}. We obtain 
\[ \E(\phi_t-\phi_{t+1}\mid \phi_t; \sqrt{\phi_t}\ge 4c/\gamma) \ge \frac{\gamma\sqrt{\phi_t}}{2}-c \ge 
\frac{\gamma\sqrt{\phi_t}}{2} - \frac{\gamma\sqrt{\phi_t}}{4} = \frac{\gamma\sqrt{\phi_t}}{4}.
\]
We now set $h(\phi_t)\coloneqq \frac{\gamma\sqrt{\phi_t}}{4}$ and  
apply 
 the variable drift theorem (Theorem~\ref{theo:variable-drift}) with drift function $h(\phi_t)$, maximum $n$ 
and minimum $\xmin=16c^2/\gamma^2$ (since $\sqrt{\phi_t}\ge 4c/\gamma$ is required).
 Hence, the expected number of generations until 
the $\phi$-value is at most $16c^2/\gamma^2$ is at most
\[
\frac{\xmin}{h(\xmin)} + \int_{\xmin}^n \frac{\mathrm{d}x}{h(x)} \le \frac{16c^2/\gamma^2}{\gamma\sqrt{16c^2/\gamma^2}/4} + 
\int_{16c^2/\gamma^2}^n \frac{\mathrm{d}x}{\gamma\sqrt{x}/4} = O(\sqrt{n})
\]
since both $c$ and $\gamma$ are constant. Hence, by Markov's inequality, $O(\sqrt{n})$ generations, 
amounting to $O(\lambda\sqrt{n})$ function evaluations, suffice with probability $\Omega(1)$ to reach 
$\phi_t\le 16c^2/\gamma^2=O(1)$. 
It is easy to see that $\phi_t=O(1)$ implies 
an at least constant probability of sampling the all-ones string (assuming that all $p_{t,i}$ are at least~$1/4$). 
Hence, the optimum 
is sampled in $O(\sqrt{n})$ generations  with probability $\Omega(1)$, which, as outlined above, 
 proves the first statement 
of the lemma and also the statement on \umda's  expected running time.
\end{proofof}

The $\Omega(1)$ bound in Theorem~\ref{theo:upper-above-phase} on 
the probability of \umdastar finding the optimum (without stagnating at wrong borders) 
results from two factors: the application of Markov's inequality and the 
probability $\Omega(1)$ of sampling the optimum after $\sqrt{\phi_t}\le 4c/\gamma$ has been reached from the first time. 
It is not straightforward to improve this bound to higher values (\eg, probability $1-o(1)$) since
one would have to analyze the subsequent development of potential in the case that the algorithm fails 
to sample the optimum at $\sqrt{\phi_t}\le 4c/\gamma$.

\section{Below the Phase Transition}
\label{sec:upperBoundTwo}
Theorem~\ref{theo:upper-above-phase} crucially assumes that $\mu\ge c\sqrt{n}\log n$ for 
a large constant~$c>0$. As described above, the \umda shows a phase transition between unstable 
and stable behavior at the threshold $\Theta(\sqrt{n}\log n)$. Above the threshold, 
the frequencies typically stay well focused on their drift towards the upper border and do not 
 drop much below $1/2$. The opposite is the case if $\mu<c'\sqrt{n}\log n$ for a sufficiently 
small constant~$c'>0$. Krejca and Witt \cite{KrejcaWittFOGA2017} have shown for this regime 
that with high probability 
$n^{\Omega(1)}$ frequencies 
will walk to the lower border before the optimum is found, resulting 
in a coupon collector effect and 
therefore the lower bound $\Omega(n\log n)$ on the running time. It also follows 
directly from their results (although this was not made explicit) that \umdastar will  
in this regime with high probability  have infinite optimization time since 
$n^{\Omega(1)}$ frequencies will get stuck at~$0$. Hence, in the regime $\mu=\Theta(\sqrt{n}\log n)$, 
the \umdastar turns from efficient with at least constant probability to inefficient 
with overwhelming probability.

Interestingly, the value $\Theta(\sqrt{n}\log n)$ has also been derived in \cite{SudholtWitt2016} 
as an important parameter setting 
\wrt the update strengths called $K$ and $1/\rho$ 
in the simple EDAs cGA and $2$-MMAS$_\mathrm{ib}$, respectively. Below the threshold value, lower bounds are obtained
through a coupon collector argument, whereas above the threshold, the running time is 
$O(K\sqrt{n})$ (and $O((1/\rho)\sqrt{n})$, respectively) 
since frequencies evolve smoothly towards the upper border. The \umda and \umdastar demonstrate the same 
threshold behavior, even at the same threshold points.

The EDAs considered in \cite{SudholtWitt2016} use borders~$1/n$ and $1-1/n$ 
for the frequencies in the same way as the \umda. The only upper bounds on the running time are obtained for update strengths  
greater than $c\sqrt{n}\log n$. Below the threshold, no conjectures on upper bounds on the running time are stated; however, 
 it seems that the authors do not see any benefit in smaller settings of the parameter since they recommend always to 
choose values above the threshold. Surprisingly, this does not seem to be necessary if the borders $[1/n,1-1/n]$ 
are used. With respect to the \umda, we will show that  even for logarithmic~$\mu$ it has polynomial expected running time, 
thanks to the borders, while we already know that \umdastar will fail. 
We also think that a similar effect can  be shown for the EDAs in  \cite{SudholtWitt2016}.

We now give our theorem for the \umda with small~$\mu$. If $\mu=\Omega(\sqrt{n}\log n)$, it is weaker 
than Theorem~\ref{theo:upper-above-phase}, again underlining the phase transition. The proof
is more involved since it  has to carefully  bound the number of times frequencies 
leave a border state.

\begin{theorem}
\label{theo:upper-below-phase}
Let %
$\lambda=(1+\beta)\mu$ for an arbitrary constant~$\beta>0$ and $\mu\ge c\log n$ 
for a sufficiently large constant~$c>0$
 as well as $\mu = O(n^{1-\epsilon})$ for some constant~$\epsilon>0$.
 Then the 
expected optimization time of \umda on \OneMax is $O(\lambda n)$.  
For \umdastar, it is infinite with high probability 
if $\mu<c'\sqrt{n}\log n$ for a sufficiently small constant~$c'>0$.
\end{theorem}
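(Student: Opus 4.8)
The plan is to run a drift analysis on the global potential $\phi_t \coloneqq \sum_{i=1}^n (1-p_{t,i})$, the total distance of all frequencies from the upper border, and to show that $\E(\phi_t-\phi_{t+1}\mid \filtt)$ is bounded below by a function of $\phi_t$ for which the variable drift theorem (Theorem~\ref{theo:variable-drift}) yields $\bigO(n)$ generations, hence $\bigO(\lambda n)$ function evaluations. The engine is Lemma~\ref{lem:2nd-lower-part-3}: summing the per-frequency drift $\Omega(p_{t,i}(1-p_{t,i})/\sigma_t)$ over all bits and using $\sum_i p_{t,i}(1-p_{t,i})=\sigma_t^2$ gives a downhill contribution of order $\sigma_t^2/\sigma_t=\sigma_t$. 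The decisive observation—and the point where the lower border $1/n$ is indispensable—is that $p_{t,i}\ge 1/n$ implies $p_{t,i}(1-p_{t,i})\ge (1-p_{t,i})/n$, so that $\sigma_t^2=\sum_i p_{t,i}(1-p_{t,i})\ge \phi_t/n$. Thus the drift is $\Omega(\sigma_t)=\Omega(\sqrt{\phi_t/n})$, and feeding $h(\phi)=\Theta(\sqrt{\phi/n})$ into Theorem~\ref{theo:variable-drift} makes the integral $\int_{\Theta(1)}^{n}\sqrt{n/\phi}\,\mathrm{d}\phi=\bigO(n)$ bound the expected number of generations to reach $\phi_t=\bigO(1)$. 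Once there, the all-ones string is sampled with probability $\Omega(1)$ per generation, and a restart/renewal argument (cleaner than in Theorem~\ref{theo:upper-above-phase}, since the drift itself handles returns and therefore gives an \emph{expected}, not merely constant-probability, bound) yields expected time $\bigO(\lambda n)$.

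The hard part—and the reason this regime is more involved than Theorem~\ref{theo:upper-above-phase}—is that here one cannot maintain $p_{t,i}\ge 1/4$: frequencies routinely walk to the lower border and move only sluggishly there, so two border effects must be controlled. Near the lower border the per-step increase is tiny, and although the collective drift identity above still holds, one must argue that frequencies genuinely escape often enough; this is where the assumption $\mu=\bigO(n^{1-\epsilon})$ enters, keeping the escape-and-return behaviour tame. More delicate is the \emph{upper} border: a frequency sitting there may leak downward and contribute negatively to $\E(\phi_t-\phi_{t+1})$, and when $\sigma_t=\Theta(1)$ (forced to be $\Omega(1)$ by the borders, but possibly no larger) this leakage is of the same order as the positive drift. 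I would bound the expected downward leakage of one upper-border frequency by $\bigO(1/n)$, exactly as in the proof of Theorem~\ref{theo:upper-above-phase} (before selection a $0$ is sampled at the considered bit with probability at most $1/n$, and selection cannot increase this), and then show that the \emph{number of times} frequencies leave a border state is small enough that this correction is dominated by the $\Omega(\sqrt{\phi_t/n})$ term outside the target region $\phi_t=\bigO(1)$. Making this domination rigorous—accounting over the whole run for every excursion away from a border, and showing that leaked frequencies recover before they accumulate—is the main obstacle and the technically heaviest part of the argument.

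For the second statement, concerning \umdastar, I would invoke the lower-bound analysis of Krejca and Witt~\cite{KrejcaWittFOGA2017}: for $\mu<c'\sqrt{n}\log n$ with $c'$ a sufficiently small constant, with high probability $n^{\Omega(1)}$ frequencies reach the lowest attainable value before the optimum is found. In \umda the border $1/n$ catches them, but \umdastar has no such border, so such a frequency continues all the way to $0$; once a frequency equals $0$, every future sample carries a $0$ at that position, the update rule can never move it away from $0$, and the all-ones string can never be produced. Hence with high probability at least one position is permanently frozen at $0$ and the optimization time is infinite. The only point to verify beyond citing~\cite{KrejcaWittFOGA2017} is that the frequencies identified there as hitting the border in \umda would, under \umdastar, actually reach the absorbing value $0$ rather than a small positive value; this follows because without the border there is nothing to arrest the downward genetic drift once a frequency is small, and $0$ is the unique absorbing state approached from below.
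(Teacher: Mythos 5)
Your overall architecture (drift on $\phi_t=\sum_i(1-p_{t,i})$ via Lemma~\ref{lem:2nd-lower-part-3}, variable drift theorem, citation of \cite{KrejcaWittFOGA2017} for \umdastar) matches the paper, and the \umdastar part is essentially the paper's argument. But the core drift estimate has a genuine gap, in two compounding places. First, the claimed drift $\Omega(\sigma_t)=\Omega(\sqrt{\phi_t/n})$ is obtained by summing Lemma~\ref{lem:2nd-lower-part-3} over \emph{all} $n$ bits, but the lemma's statement on $\E(p_{t+1}\mid p_t)$ only applies to bits with $p_{t,i}\le 1-c/n$. The correct positive contribution is $\Omega(V^*/\sigma_t)$ where $V^*=\sum_{i:\,p_{t,i}\le 1-c/n}p_{t,i}(1-p_{t,i})$, and this can be far smaller than $\sigma_t$: \eg, with $\sqrt{n}$ bits at the lower border and the rest at the upper border one has $V^*=\Theta(1/\sqrt{n})$ while $\sigma_t=\Theta(1)$ and $\phi_t=\Theta(\sqrt{n})$, so the true positive drift is $\Theta(n^{-1/2})$, not your claimed $\Theta(n^{-1/4})$. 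Second, and more fatally, your domination claim fails: $\sqrt{\phi_t/n}\le 1$ for every reachable $\phi_t$, while the downward leakage of the upper-border bits is of constant order in the worst case, so ``dominated by the $\Omega(\sqrt{\phi_t/n})$ term outside $\phi_t=O(1)$'' would only hold for $\phi_t=\Omega(n)$. This is exactly the point where the below-threshold regime differs from Theorem~\ref{theo:upper-above-phase}, where $p_{t,i}\ge 1/4$ gives $\sigma_t^2\ge\phi_t/4$ and hence a drift $\Omega(\sqrt{\phi_t})$ that genuinely beats the constant leakage.

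The paper closes this gap with machinery you have not supplied. It writes the drift as $\gamma V^*/\sqrt{1+V^*}-1$ and distinguishes two cases: if $V^*\ge c'$ the drift is an additive constant $\ge 1$ and the variable drift theorem (with $h(x)=\min\{1,c''x/\sqrt{1+x}\}$, giving $O(1)+O(\log\mu)+O(n)$) applies; if $V^*<c'$, it \emph{abandons} the global drift on the border bits entirely and instead proves, via Lemma~\ref{lem:upper-below-phase-above-half} and Lemma~\ref{lem:markov-chain-analysis}, that a frequency which has reached $1-1/n$ thereafter stays at $1-O(1/\mu)$ and returns to the border within $r=O(1)$ steps with high probability, so that its \emph{expected value} is $1-O(1/n)$ and all border bits are simultaneously sampled as $1$ with probability $\Omega(1)$. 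Your sentence ``accounting over the whole run for every excursion away from a border'' is precisely the content of those two lemmas (a per-excursion concentration bound plus a two-state Markov-chain occupation argument, under the auxiliary claim $\sigma_t=O(1)$ throughout a phase of $\kappa n$ generations), and without them the proof does not go through.
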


Before we prove the theorem, we state two lemmas that 
work out properties of frequencies that have reached the upper border. In a nutshell, 
the following lemma show that with high probability such frequencies will  
stay in the vicinity of the border afterwards, assuming 
a sufficiently large drift stemming from $\sigma_t=O(1)$. Moreover, it return to the 
border very quickly afterwards with high probability; in fact, it will only spend a constant number of  
steps at a non-border value with high probability.   
Since we are dealing with a Markov chain, 
 the analysis does not change if a certain amount of time has elapsed. In the following lemma, 
we therefore \wlo\ assume that the  time where the border is hit equals~$0$.
%

\begin{lemma}
\label{lem:upper-below-phase-above-half}
In the setting of Theorem~\ref{theo:upper-below-phase}, 
consider the frequency $p_t$, $t\ge 0$, belonging to an arbitrary but 
fixed bit and suppose that $p_0=1-1/n$ and $\sigma_t=O(1)$ for 
$t\ge 1$, where $\sigma_t^2=\sum_{i=1}^n p_{t,i}(1-p_{t,i})$.
Then for every constant $c_1>0$ there is a constant~$c_2>0$ such that 
\begin{itemize}
\item $p_1\ge 1-c_2/\mu$ with probability at least $1-n^{-c_1}$.  
\item There is a constant~$r>0$ with the following properties: if $p_1\ge 1-c_2/\mu$ then  
with probability $1-e^{-\Omega(c\log n)}$, $p_t\ge 1-c_2/\mu$ 
for all $t\in\{2,\dots,r-1\}$ and finally $p_r=1-1/n$. 
\end{itemize}

%
\end{lemma}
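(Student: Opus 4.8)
The plan is to track, for the fixed bit, the number of zeros $Z_t\coloneqq\mu-X_t$ that the $\mu$ selected individuals carry at position~$j$ (so $p_t\ge1-c_2/\mu$ is equivalent to $Z_t\le c_2$, and $p_t=1-1/n$ to $Z_t=0$), and to read off both items from the behaviour of $Z_t$ during a short excursion off the upper border. For the first item I would argue directly from the sampling: since $p_0=1-1/n$, each of the $\lambda$ offspring independently has a $0$ at position~$j$ with probability $1/n$, so $Z_1$ is dominated by the number of zeros among all $\lambda$ offspring, namely by $\Bin(\lambda,1/n)$. A crude tail bound gives $\Prob(\Bin(\lambda,1/n)\ge k)\le\binom{\lambda}{k}n^{-k}\le(e\lambda/(kn))^{k}$, which with $\lambda=O(n^{1-\epsilon})$ is $O(n^{-\epsilon k})$; choosing $c_2\coloneqq\lceil c_1/\epsilon\rceil$ makes $\Prob(Z_1>c_2)\le n^{-c_1}$. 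Whenever $Z_1\le c_2$ we have $X_1\ge\mu-c_2$, so after capping $p_1\ge1-c_2/\mu$ (the upper border only helps, and $1-1/n\ge1-c_2/\mu$ for large~$n$), which settles the first item.

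For the second item I would keep the same $c_2$ and study $Z_t$ through a multiplicative downward drift together with a distributional upper bound on its successor. On the structural good event $E^*$ of Lemma~\ref{lem:2nd-lower-part-1b} --- that $M$ lies in its typical band, which holds with probability $1-e^{-\Omega(\mu)}=1-e^{-\Omega(c\log n)}$ --- we have $q=\Omega(1/\sigma_t)=\Omega(1)$, so $C^{**}_{t+1}\sim\Bin(\mu,q)$ concentrates at $\Omega(\mu)$ and hence $\cumulC{M}=\mu-C^{**}_{t+1}\le(1-\Omega(1))\mu$. Since the overhang $D^*_{t+1}-C^{**}_{t+1}=\Theta(\mu)$ swamps the $O(1)$ expected zeros among the $D^*_{t+1}$ candidates, the minimum in Lemma~\ref{lem:2nd-lower-part-2} equals $C^{**}_{t+1}$, yielding the clean bound $Z_{t+1}\preceq\Bin(\cumulC{M},1-p_t)$ with $1-p_t=Z_t/\mu$ off the border, a distribution of mean $(1-\Omega(1))Z_t$. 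This is a genuine multiplicative contraction of the number of zeros towards the border; a union bound over the constantly many steps, whose only exponentially small failures are these per-step structural events, would keep $Z_t$ controlled while the contraction drives it down.

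The hard part is the return to the \emph{exact} border at a constant time~$r$, that is, the claim $Z_r=0$ (equivalently $p_r=1-1/n$). Since $p_{t+1}=1-1/n$ requires every one of the $\mu$ selected individuals to sample a $1$ at position~$j$ --- equivalently, none of the $\Theta(\mu)$ first-class individuals to sample a $0$ --- this probability is $(1-\Theta(1/\mu))^{\Theta(\mu)}=\Omega(1)$ but bounded away from~$1$. The exponentially small failure probability $e^{-\Omega(c\log n)}$ therefore cannot be supplied by the border hit itself; it has to come entirely from the structural events of Lemmas~\ref{lem:2nd-lower-part-1}--\ref{lem:2nd-lower-part-1b}, with the strong multiplicative contraction ensuring that, conditioned on those events, the zeros collapse to $0$ within a bounded number of steps. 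Reconciling these two very different scales --- the $e^{-\Omega(\mu)}$ structural failures against the intrinsically constant-probability border hit --- while simultaneously preventing $Z_t$ from overshooting $c_2$ during the excursion, is the main technical obstacle, and is where I expect the precise choice of $r$ and the careful bookkeeping to be needed.
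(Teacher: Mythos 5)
Your treatment of the first bullet is correct and essentially the paper's own argument: the paper dominates the number of zeros among the $\mu$ selected by $\Bin(\mu,1/n)$ via the selection bias, while you dominate by the zeros among all $\lambda$ offspring, i.e.\ $\Bin(\lambda,1/n)$; since $\lambda=O(\mu)=O(n^{1-\epsilon})$ the same tail computation and the same choice of a constant $c_2$ go through, so this part is fine.

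For the second bullet, however, you have not given a proof: you set up the right quantities (the zero-count $Z_t$, the events of Lemmas~\ref{lem:2nd-lower-part-1} and~\ref{lem:2nd-lower-part-1b} with $q,q'=\Omega(1)$ because $\sigma_t=O(1)$, and the decomposition of Lemma~\ref{lem:2nd-lower-part-2}), but you then stop at the statement ``reconciling these two scales is the main technical obstacle,'' which is precisely the claim the lemma asserts. Identifying the crux is not the same as crossing it. The paper's route is different from the contraction-of-$Z_t$ picture you sketch: it works with $X_t=\mu p_t$ and shows, for each step off the border, the \emph{additive} bound $X_{t+1}\ge X_t+c_4\mu/2$ (or $X_{t+1}=\mu$ outright) with probability $1-e^{-\Omega(c\log n)}$, obtained from Chernoff bounds on $C^{**}\ge c_3\mu$, on $D^*-C^{**}\ge c_3\mu$, on the number of ones among the $D^*$ candidates, and on the number of ones among the $\mu-C^{**}$ first-class individuals; since $X_{t+1}\le\mu$ and the deficit is only $c_2=O(1)$, a gain of $\Omega(\mu)$ forces the cap, and a union bound over $r\le 2c_2/c_4=O(1)$ steps gives the claim. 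Your objection---that with $\Theta(\mu)$ first-class individuals, each carrying a zero with probability $\Theta(1/\mu)$, the event that \emph{no} selected individual carries a zero has only constant probability, so the border cannot be hit with probability $1-e^{-\Omega(c\log n)}$ in $O(1)$ steps---is aimed exactly at the case in which the minimum in Lemma~\ref{lem:2nd-lower-part-2} is attained by $C^{**}$, which the paper's proof disposes of only with the parenthetical remark ``or $X_{t+1}$ takes its maximum $\mu$ anyway.'' So you have located the most delicate point of the argument, but a proof proposal must either resolve it (e.g.\ by showing why the first-class zeros are in fact displaced in the regime $\sigma_t=O(1)$, or by weakening the per-step claim and compensating elsewhere) or follow the paper's additive-gain computation; as written, the second bullet remains unproven.
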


\begin{proof}
By assumption, $p_0=1-1/n$. We analyze the 
distribution of $p_{1}$.  
Since the $\mu$ best individuals are biased towards 
one-entries, the number~$N$ of $0$s sampled at the bit among the $\mu$ best 
is stochastically smaller than $\Bin(\mu,1/n)$. Since  $\mu=O(n^{1-\epsilon})$ according to the assumptions from
Theorem~\ref{theo:upper-below-phase}, we obtain for any $k>0$ 
\[
\Prob(N\ge k)\le \binom{\mu}{k} \left(\frac{1}{n}\right)^k \le \frac{1}{k!}\left(\frac{\mu}{n}\right)^k \le n^{-c' k}
\]
for some constant~$c'>0$. If $N=k>0$, then clearly $p_1=1-k/\mu$ by the update rule of the \umda (using that 
$\mu=o(n)$ so that $p_1$ is not capped at the border). 
We can now choose a constant~$k$ such 
that $c'k=c_1$  
and establish the first claim with $c_2=k$.   

To establish the second claim, 
we consider the distance $X_t\coloneqq \mu p_{t}$ of the frequency 
from the lower border for $t\ge 1$. By assumption, $X_1=\mu-c_2$. 
The aim is to show the following: a phase of $O(1)$ steps will consist 
of increasing steps only such that the upper border is finally reached, all 
with probability $1-e^{-\Omega(c\log n)}$.  For technical reasons, 
it is not straightforward to apply Theorem~\ref{theo:negative-drift-scaling-2017} 
to show this.
Instead, 
we use a more direct and somewhat simpler argumentation based on the analysis of the sampling process.

\newcommand{\tmu}{\tilde{\mu}}
Recall from Section~\ref{sec:first-and-second-class}, in particular Lemma~\ref{lem:2nd-lower-part-2},  
that $X_{t+1}$ is obtained by summing up 
the number of $1$s at position~$j$ 
sampled in both the $1$st-class individuals and the $2$nd-class 
individuals at generation~${t+1}$. Denote by $D^*$ the number of $2$nd-class candidates 
and by $C^{**}$ the number of 2nd-class individuals as in 
Lemma~\ref{lem:2nd-lower-part-1}. From Lemma~\ref{lem:2nd-lower-part-1b} 
we obtain that $D^*-C^{**}-1$ with probability $1-e^{-\Omega(\mu)}$ follows a binomial 
distribution with parameters $\Omega(\mu)$ and $\Omega(1/\sigma_t)$; analogously for $C^{**}$. We assume both 
to happen.  Note that $\Omega(1/\sigma_t)=\Omega(1)$ by assumption, which implies 
$\mu/\sigma_t = \Omega(\mu) = \Omega(c \log n)$, where we used 
the constant~$c$ from Theorem~\ref{theo:upper-below-phase}.

By Chernoff bounds the probability that $D^*-C^{**} \ge c_3 \mu $ for some constant~$c_3>0$  
is at least $1-e^{-\Omega(\mu)} = 1-e^{-\Omega(c\log n)}$; similarly, $C^{**} \ge c_3 \mu$ with probability 
$1-e^{-\Omega(c\log n)}$. 
 We assume both these event also to happen. The number of $1$s 
sampled in the $D^*$ candidates is at least $D^* (1-c_4) X_t/\mu$ with 
probability $1-e^{-\Omega(c\log n)}$ by Chernoff bounds (using that $X_t/\mu=\Theta(1)$), where 
$c_4$ is a constant that can be chosen small enough. Also by Chernoff bounds, the number of $1$s 
sampled in the $\mu-C^{**}$ $1$st-class individuals (pessimistically assuming 
that $C^*=C^{**}$)  is at least 
$(\mu-C^{**})  X_t/\mu - c_4 \mu$ with probability $1-e^{-\Omega(c\log n)}$.

Assuming all 
this to happen, we have $X_{t+1}\ge X_t -c_4 \mu  + (D^*-C^{**}) (1-c_4) X_t/\mu - c_4 C^{**} X_t/\mu $ 
(or $X_{t+1}$ takes its maximum $\mu$ anyway) 
with probability $1-e^{-\Omega(c\log n)}$. If $c_4$ is chosen sufficiently 
small compared to~$c_3$, we obtain $X_{t+1}\ge X_t+ c_4\mu /2$ with probability 
$1-e^{-\Omega(c\log n)}$. Note that the constants in the exponent of $e^{-\Omega(c\log n)}$ 
may be different; we use the smallest ones from these estimations and union bounds 
to obtain the bound $1-e^{-\Omega(c\log n)}$ on the  final probability.

If $X_{t+1}\ge X_t+ c_4\mu /2$ for a constant number $r\le 2c_2/c_4$ of iterations, then 
the frequency~$p_t$ has raised from~$1-c_2/\mu$ to its upper border  in this number 
of steps. By a union bound over these many iterations, the probability 
that any of the events necessary for this happens to fail is most $O(1)\cdot e^{-\Omega(c\log n)}=e^{-\Omega(c\log n)}$ 
as claimed above.
\qed\end{proof}

In the following lemma, we show a statement on the expected value of the frequency.  
To obtain this value, it is required that the frequency does not drop below~$1-O(1/\mu)$ and 
quickly returns to $1-1/n$, 
which can be satisfied by means of the previous lemma.

\begin{lemma}
\label{lem:markov-chain-analysis}
In the setting of Theorem~\ref{theo:upper-below-phase}, 
consider the frequency $p_t$, $t\ge 0$, belonging to an arbitrary but 
fixed bit and suppose that \begin{itemize}
\item $p_0=1-1/n$, 
\item $p_t\ge 1-O(1/\mu)$ for 
all $t\ge 0$, and
\item there is a constant~$r>0$ such that the following holds: 
for all $t'$ where $p_{t'}=1-1/n$ there is some $s\in\{1,\dots, r\}$ 
such that $p_{t'+s}=1-1/n$. 
\end{itemize}
 Then for all $t\ge 0$ it holds that  $\E(p_t) = 1-O(1/n)$.
\end{lemma}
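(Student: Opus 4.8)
The plan is to bound the expected \emph{deficit} $\E(1-p_t)$ directly by splitting according to whether the frequency sits exactly at its upper border or strictly below it. When $p_t=1-1/n$ the deficit equals $1/n$, and when $p_t\neq 1-1/n$ the hypothesis $p_t\ge 1-O(1/\mu)$ still guarantees a deficit of only $O(1/\mu)$. Hence
\[
\E(1-p_t)\le \frac1n + O\!\left(\frac1\mu\right)\cdot\Prob\bigl(p_t\neq 1-\tfrac1n\bigr),
\]
and everything reduces to proving $\Prob(p_t\neq 1-1/n)=O(\mu/n)$: then the second summand is $O(1/\mu)\cdot O(\mu/n)=O(1/n)$ and the claim $\E(p_t)=1-O(1/n)$ follows.

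To estimate this off-border probability I would first control how the frequency can \emph{leave} the border in a single step. Exactly as in the proof of Lemma~\ref{lem:upper-below-phase-above-half}, whenever $p_{t'}=1-1/n$ the number $N$ of $0$s among the $\mu$ selected individuals is stochastically dominated by $\Bin(\mu,1/n)$. Since $\mu<n$, already a single sampled $0$ forces the next frequency to $1-1/\mu<1-1/n$, so $\{p_{t'+1}\neq 1-1/n\}=\{N\ge 1\}$; by Markov's inequality and the domination, conditionally on any history with $p_{t'}=1-1/n$ we get $\Prob(p_{t'+1}\neq 1-1/n\mid\filt_{t'})\le \E(N\mid\filt_{t'})\le\mu/n$. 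Taking expectations yields $\Prob(p_{t'}=1-1/n,\ p_{t'+1}\neq 1-1/n)\le\mu/n$ for every $t'$, a uniform single-step bound.

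Next I would use the recurrence hypothesis to anchor any off-border state to a recent border visit. Since $p_0=1-1/n$, iterating that hypothesis produces an increasing sequence of border times $0=\tau_0<\tau_1<\cdots$ with consecutive gaps $\tau_{i+1}-\tau_i\le r$. Fix $t$ and let $\tau$ be the largest border time with $\tau\le t$. On the event $\{p_t\neq 1-1/n\}$ we have $\tau<t$, and the next border time exceeds $t$ yet is at most $\tau+r$, which forces $t-\tau\le r-1$; moreover $p_{\tau+1}\neq 1-1/n$ by maximality of $\tau$. Therefore $\{p_t\neq 1-1/n\}$ is contained in the union, over the at most $r-1=O(1)$ indices $\tau\in\{t-r+1,\dots,t-1\}$, of the single-step leaving events bounded in the previous paragraph, and a union bound gives $\Prob(p_t\neq 1-1/n)\le(r-1)\cdot\mu/n=O(\mu/n)$, as needed.

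The main obstacle is that $(p_t)$ is not a clean renewal process, so one cannot simply invoke a stationary fraction of time spent off the border; the real work is the bookkeeping above, namely converting the recurrence hypothesis into the two facts that the last border visit before $t$ lies in a \emph{constant-width} window $\{t-r+1,\dots,t-1\}$ and that, throughout this window, the uniform single-step leaving bound $\mu/n$ holds \emph{conditionally on the entire history}. Once these are in place the union bound over $O(1)$ terms is routine. The only further point to watch is that the off-border deficit is genuinely $O(1/\mu)$ (the second hypothesis) and not larger, since it is exactly this that keeps the product $O(1/\mu)\cdot O(\mu/n)$ down to the desired $O(1/n)$.
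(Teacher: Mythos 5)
Your proposal is correct, and it reaches the key intermediate claim $\Prob(p_t\neq 1-1/n)=O(\mu/n)$ by a genuinely different route than the paper. The paper abstracts the process into a two\nobreakdash-state Markov chain (border versus off\nobreakdash-border), considers time in blocks of $r$ steps so that the return to the border becomes a sure one\nobreakdash-block transition, and then argues somewhat informally about occupation probabilities over time, finally patching up the block discretization by inflating the leaving probability by a factor of~$r$. You instead localize the event directly: the constant\nobreakdash-time return hypothesis forces the last border visit before $t$ to lie in the constant\nobreakdash-width window $\{t-r+1,\dots,t\}$, so $\{p_t\neq 1-1/n\}$ is covered by at most $r-1$ single\nobreakdash-step leaving events $\{p_\tau=1-1/n,\ p_{\tau+1}\neq 1-1/n\}$, each of probability at most $\mu/n$ by the domination of the number of sampled $0$s by $\Bin(\mu,1/n)$ together with Markov's inequality (and, as you implicitly use, $\mu/n<1$ so that a single $0$ already pushes the frequency strictly below the border). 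A union bound over $O(1)$ terms then gives the explicit bound $(r-1)\mu/n$. Both arguments rest on the same two facts --- single\nobreakdash-step leaving probability $O(\mu/n)$ and return within $r=O(1)$ steps --- but your bookkeeping avoids the block\nobreakdash-of\nobreakdash-$r$ device and the occupation\nobreakdash-probability induction, and is arguably cleaner and easier to verify. The concluding step, combining $\Prob(p_t\neq 1-1/n)=O(\mu/n)$ with the deficit bound $O(1/\mu)$ on off\nobreakdash-border states to get $\E(p_t)=1-O(1/n)$, is the same in both proofs.
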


\begin{proof}
By assumption, $p_0=1-1/n$. We will analyze the 
distribution and expected value 
of $p_{t}$ for $t\ge 1$. 
We will show that for all $t\ge 0$, we have $\Prob(p_t=1-1/n) \ge 1-c_1\mu/n$ for 
some constant~$c_1>0$. By assumption, in any case $p_t\ge 1-O(1/\mu)$. Using the 
law of total probability to combine the cases $p_t=1-1/n$ and $p_t<1-1/n$, 
we obtain 
\[
\E(p_t)\ge \left(1-\frac{c_1\mu}{n}\right)\left(1-\frac{1}{n}\right) + \frac{c_1\mu}{n} \frac{\mu-O(1)}{\mu} = 1-O(1/n).
\]

We are left with the claim $\Prob(p_t=1-1/n)=1-O(\mu/n)$. Recall that 
on $p_{1}<1-1/n$ we have $p_{1}=1-O(1/\mu)$ and that again $p_t=1-1/n$ after $t\le r$ steps. 
We take now a simpler view by means of a two-state Markov chain, where state~$0$ 
corresponds to frequency $1-1/n$ and state~$1$ to the rest of the reachable states. Time is 
considered in blocks of~$r$ steps, which will be justified in the final paragraph of this proof. 
The transition probability from $0$ to~$1$ is at most $O(\mu/n)$ (the expected number of $0$s sampled 
in the $\mu$ best individuals)  
and the transition probability from $1$ to~$0$ is $1$; the remaining probabilities are self-loops. Now, it 
is easy to analyze the steady-state probabilities, which are $1-O(\mu/n)$ for state~$0$ and 
$O(\mu/n)$ for state~$1$. Moreover, since the chain starts in state~$0$, simple 
calculations of occupation probabilities over time yield for state~$0$ 
a probability of  
$1-O(\mu/n)$ for all points of time $t\ge 0$. More precisely, at the transition 
from time $t$ to time~$t+1$ the occupation probability of state~$0$ can only decrease by $O(\mu/n)$. 
When  state~$1$ exceeds an occupation probability of $c_2\mu /n$ for a sufficiently large constant 
$c_2>0$, the process goes to state~$0$ with probability at least $c_2\mu/n$, which is 
less than the decrease of the occupation probability for state~$0$ for $c_2$ large enough. Hence, 
the occupation probability of state~$0$ cannot drop below~$1-O(\mu/n)$. 

Finally, we argue why we may consider phases of length~$r$ in the Markov chain analysis. Note that 
only every $r$th step a transition from state~$1$ to $0$ is possible (in our pessimistic model), however, 
in fact every step can transit from state~$0$ to~$1$. Formally, we have 
to work in these additional steps in our two-state model. 
We do so by increasing the probability 
of leaving state~$0$ by a factor of~$r$, which vanishes in the $O(\mu/n)$ bound used above.
\qed\end{proof}

Having proved these two preparatory lemmas, we can give the proof of the main theorem from this section.

\begin{proofof}{Theorem~\ref{theo:upper-below-phase}}
The second statement can be derived from \cite{KrejcaWittFOGA2017}, as  discussed above. We now focus 
on the first claim, basically reusing  the potential function $\phi_t =n-1-\sum_{i=1}^n p_{t,i}$ from the proof 
of Theorem~\ref{theo:upper-above-phase}. Let $k$ denote the 
number of frequencies below $1-c/n$ for the $c$ from Lemma~\ref{lem:2nd-lower-part-3}, w.\,l.\,o.\,g., these are the 
frequencies associated with bits $1,\dots,k$. 
The last $n-k$ bits are actually at $1-1/n$ since $1/\mu=\omega(1/n)$ by assumption. They are 
set to~$0$ with probability at most $1/n$ in each of the selected offspring, amounting 
to a total expected loss of at most~$1$. Similarly as in the proof of Theorem~\ref{theo:upper-above-phase}, we 
compute the drift
\begin{align}
\notag
\E(\phi_t-  \phi_{t+1}\mid \phi_t)  & \ge \sum_{i=1}^k (p_{t+1,i}-p_{t,i}) - (n-k)\frac{1}{n}\\\notag
& \ge \sum_{i=1}^k (p_{t,i}+\gamma 
p_{t,i}(1-p_{t,i})/\sigma_t - p_{t,i}) - 1 \\\notag
& =  \frac{\gamma\sum_{i=1}^k p_{t,i}(1-p_{t,i})}{\sqrt{((n-k)/n)(1-1/n)+\sum_{i=1}^k p_{t,i}(1-p_{t,i})}} -1 \\
&  \ge \frac{\gamma\sum_{i=1}^k p_{t,i}(1-p_{t,i})}{\sqrt{1+\sum_{i=1}^k p_{t,i}(1-p_{t,i})}}    - 1
\label{eq:drift-below-phase}
\end{align}
where $\gamma$ is the implicit constant in the 
$\Omega$-notation from Lemma~\ref{lem:2nd-lower-part-3}, and  
the last equality just used the definition of $\sigma_t$. We now distinguish two cases 
depending on $V^*\coloneqq  \sum_{i=1}^k p_{t,i}(1-p_{t,i})$, the total variance \wrt\ the bits not at the upper border. 

\textbf{Case 1:}
If $V^*\ge c'$ for some sufficiently large constant~$c'>0$, we obtain 
\[
\frac{\gamma V^*}{\sqrt{1+V^*}} \ge 2,
\]
and therefore 
\[\E(\phi_t-\phi_{t+1}\mid \phi_t)\ge \frac{\gamma V^*}{\sqrt{1+V^*}}-1 \ge  
1\]
from \eqref{eq:drift-below-phase}. If $V^*<c'$, we will show by advanced arguments that the bits 
that have reached the upper border can almost be ignored and that 
the drift with respect to the other bits is still in the order $\Omega(V^*/\sqrt{1+V^*})$.  Using this (to be proved) statement, 
we apply variable drift (Theorem~\ref{theo:variable-drift}) with $\xmin=1/\mu$ (since each $p_{i,t}=i/\mu$ for some $i\in\{1,\dots,\mu-1\}$ if 
it is not at a border) and \[h(x)\coloneqq \min\{1, c''x/\sqrt{1+x}\}\] for some constant~$c''$. Let $x^*$ be the point where 
$1=c''x^*/\sqrt{1+x^*}$ and note that $x^*$ is some constant bigger than~$1$ if $c''$ is small enough. We obtain the upper bound
\begin{align}
 \frac{\xmin}{h(\xmin)} + \int_{1/\mu}^n \frac{1}{h(x)}\mathrm{d}x = \frac{(1/\mu)\sqrt{1+1/\mu}}{c''/\mu} + \int_{1/\mu}^{x^*} \frac{\sqrt{1+x}}{c''x}\mathrm{d}x 
+ \int_{x^*}^n \frac{\mathrm{d}x }{1} 
\label{eq:upper-bound-above-integral}
\end{align}
on the expected number of generations until all frequencies have hit the upper border at least once. 
The anti-derivative of $\sqrt{1+x}/x$ is $2\sqrt{1+x}+\ln(\sqrt{1+x}-1) - \ln(\sqrt{1+x}+1)$. 
Hence, \[
\int_{1/\mu}^{x^*} \frac{\sqrt{1+x}}{c''x}\mathrm{d}x \le O(1) - \frac{\ln(\sqrt{1+1/\mu}-1)}{c''} = O(\log \mu)
\]
since $\ln(\sqrt{1+1/\mu}-1)\ge \ln(1/(2\mu)-1/(8\mu^2)) = -\ln(\mu)/2+\Theta(1)$ by 
a Taylor expansion. Hence, the whole bound \eqref{eq:upper-bound-above-integral} can be simplified 
to \[
O(1) + O(\log\mu) + O(n) = O(n)
\]
using $\mu=o(n)$. When the potential has reached its minimum value, the optimum is sampled 
with probability~$\Omega(1)$. If this fails, the argumentation can be repeated. The expected number 
of repetitions is $O(1)$.  
This corresponds to an expected running time of $O(\lambda n)$.

\textbf{Case 2: }
We still have to show that we have a drift of $\Omega(V^*/\sqrt{1+V^*})$ if $V^*\le c'$. Actually, 
we will consider a phase of $\kappa n$ generations for some sufficiently large constant~$\kappa>0$ 
and show that the claim holds with high probability throughout the phase.
We then show that under this assumption the optimum is still sampled with probability~$\Omega(1)$ in the phase. 
In case of a failure, we repeat the argumentation and get an expected number of $O(1)$ repetitions, altogether 
an expected running time of $O(\lambda n)$.

We have 
seen above that frequencies at the upper border may contribute negatively to the drift of the $\phi_t$-value. 
Hence, to show the claim that the potential also is expected to decrease when $V^*\le c'$, 
we will analyze an arbitrary but fixed 
frequency and use the above-proven fact that 
it is likely to stay in the vicinity of the upper border once 
having been there. We work under the assumption that we have  
$\sigma_t=O(1)$ within a phase of $\kappa n$ generations. We claim that this actually happens with 
probability at least~$1-O(1/n)$, a proof of which will be given below. In case of a failure, we repeat 
the argumentation and are done within an expected number of $O(1)$ repetitions.

We invoke Lemma~\ref{lem:upper-below-phase-above-half} for any frequency that has hit the upper border   
and note that, unless a failure event of probability $e^{-\Omega(c\log n)}+n^{-c_1}$ 
happens, we can apply Lemma~\ref{lem:markov-chain-analysis}. We assume that the constant~$c$ 
(stemming from  the assumption~$\mu\ge c\log n$) as well as~$c_1$ 
have been chosen appropriately such that the failure probability is $O(1/n^2)$. Hereinafter, we 
assume that no failure occurs. 
Hence, we have an expected frequency of $1-O(1/n)$. This expected value is 
just the probability of sampling a~$1$ at the underlying bit. 
 Consequently, if there are $\ell$ bits 
that have been at the upper border at least once, the probability of sampling only $1$s at all these 
bits is at least \[\prod_{i=1}^\ell \left(1-O\left(\frac{1}{n}\right)\right)=\Omega(1).\] 
This still allows the 
optimum to be sampled with probability~$\Omega(1)$ 
after the potential on the bits that never have hit the border so far has decreased below~$c'$.

Finally, we have to justify why with high probability 
$\sigma_{t'}=O(1)$ for any $t'\ge t$ within $\kappa n$ steps 
after the first time~$t$ where $\phi_t=O(1)$. The frequencies 
that never have been at the upper border 
contribute at most $c'=O(1)$ to~$\phi_t$ 
by assumption and, since $\sigma_t\le \phi_t$, also to~$\sigma_t$. 
Frequencies that are  at the upper border leave this state only with probability $O(\mu/n)$ 
and have a value of $1-O(1/\mu)$ afterwards according to Lemma~\ref{lem:upper-below-phase-above-half} with 
high probability. In every step only an expected number of 
$n\cdot O(\mu/n)=O(\mu)$ frequencies leave the upper border. By Chernoff bounds, the number is $O(\mu)$ 
even with probability $1-e^{-\Omega(\mu)}$. 
Finally, 
since every frequency that has left the upper border returns to it within~$r$ steps with high 
probability, there are within the phase only 
$O(\mu)$ such frequencies with high probability. 
%
Their contribution to $\sigma_t$ is therefore $O(\mu)\cdot O(1/\mu)=O(1)$ with high probability. The 
failure probability is again $O(1/n^2)$ as the constants in Lemma~\ref{lem:upper-below-phase-above-half}
and the $c$ in the assumption~$\mu\ge c\log n$ were chosen sufficiently large. 
Hence, the probability of a failure in $O(\kappa n)$ steps 
is $O(1/n)$.   

We finally note that, although the state of the algorithm may switch between the cases $V^*\le c'$ and $V^*>c'$
more than once, the drift argument can always be applied since we have established a drift of 
$ \Omega(V^*/\sqrt{1+V^*})$ for the $\phi_t$-value regardless of the case.
\end{proofof}

We have now concluded the proof of Theorem~\ref{theo:upper-below-phase}. 
As mentioned before, we can extract from this theorem 
 a second value of $\mu$ that gives the 
$O(n\log n)$ running time bound, namely $\mu=c'\log n$. We also believe that 
values $\mu=o(\log n)$ will lead to a too coarse-grained frequency scale 
and exponential lower bounds on the running time, which can be regarded 
as another phase transition in the behavior. We do not give a proof here 
but only mention that such a phase transition from polynomial to exponential 
running time is known from ACO algorithms and non-elitist $(1,\lambda)$~EAs when 
a parameter crosses $\log n$ \cite{Neumann2010a,RoweSudholtTCS2014}.

\section{Experiments}
We have carried out experiments for \umda on \om to gain some empirical insights into the relationship 
between $\lambda$ and the average running time. The algorithm was implemented in the C programming language 
using the PCG32 random number generator. 
The problem size was set to $n=2000$, $\lambda$ was increased from $14$ to $350$ in steps of size~$2$, $\mu$ 
was set to $\lambda/2$, and, due to the high variance of the runs especially for small $\lambda$, 
an average was taken over $3000$ runs for every setting of~$\lambda$. The left-hand side 
of Figure~\ref{fig:eda:experiments-1} demonstrates that the average running time in fact shows a multimodal dependency on~$\lambda$. 
Starting out from very high values, it takes a minimum at $\lambda\approx 20$ and then increases again up to $\lambda\approx 70$. Thereafter 
 it falls 
again up to $\lambda\approx 280$ and finally increases rather steeply for the rest of the range. The right-hand side, a
semi-log plot, also illustrates that 
the number of times the lower border is hit seems to decrease exponentially with $\lambda$. The phase transition where the behavior 
of frequencies turns from chaotic into stable is empirically located somewhere between $250$ and $300$.

Similar results are obtained for $n=5000$, see Figure~\ref{fig:eda:experiments-2}. The location of the  maximum 
does not seem to increase linearly with~$n$.

\pgfplotstableread[col sep = semicolon]{umda2000-3000.txt}\mydata
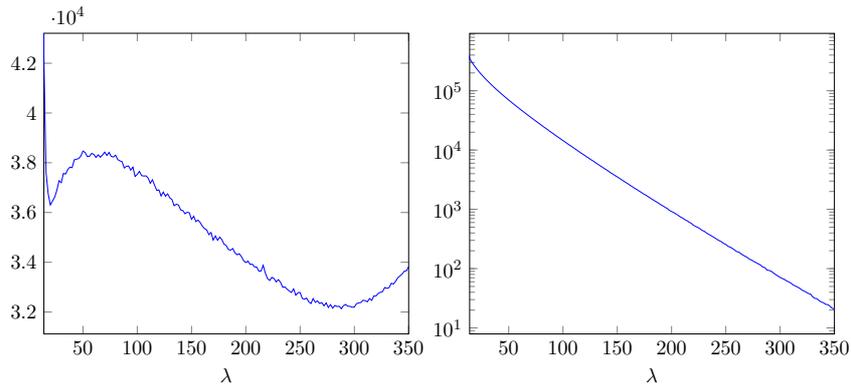
\begin{figure}
\centerline{
\begin{tikzpicture}[scale=0.7]
  \begin{axis}[
    legend pos = north east, 
    xmin = 14, xmax = 350, xlabel = $\lambda$
    ]
    \addplot table[x index = {0}, y index = {1}, mark = none]{\mydata};
		\end{axis}
		\begin{scope}[xshift=8 cm]
		\begin{semilogyaxis}[
    legend pos = north east, 
    xmin = 14, xmax = 350, xlabel = $\lambda$
    ]
		    \addplot table[x index = {0}, y index = {2}, mark = none]{\mydata};
  \end{semilogyaxis}
	\end{scope}
\end{tikzpicture}}

\caption{Left-hand side: empirical running time of \umda on \om, right-hand side: number of hits of lower border; for $n=2000$, $\lambda\in\{14,16,\dots,350\}$, $\mu=\lambda/2$,  and averaged over 
$3000$ runs}
\label{fig:eda:experiments-1}
\end{figure}

\pgfplotstableread[col sep = semicolon]{umda5000-3000.txt}\mydata
\begin{figure}
\centerline{
\begin{tikzpicture}[scale=0.7]
  \begin{axis}[
    legend pos = north east, 
    xmin = 16, xmax = 650, xlabel = $\lambda$
    ]
    \addplot table[x index = {0}, y index = {1}, mark = none]{\mydata};
		\end{axis}
		\begin{scope}[xshift=8 cm]
		\begin{semilogyaxis}[
    legend pos = north east,
    xmin = 16, xmax = 650, xlabel = $\lambda$
    ]
		    \addplot table[x index = {0}, y index = {3}, mark = none]{\mydata};
  \end{semilogyaxis}
	\end{scope}
\end{tikzpicture}}

\caption{Left-hand side: empirical running time of \umda on \om, right-hand side: number of hits of lower border; for $n=5000$, $\lambda\in\{14,16,\dots,650\}$, $\mu=\lambda/2$,  and averaged over 
$3000$ runs}
\label{fig:eda:experiments-2}
\end{figure}
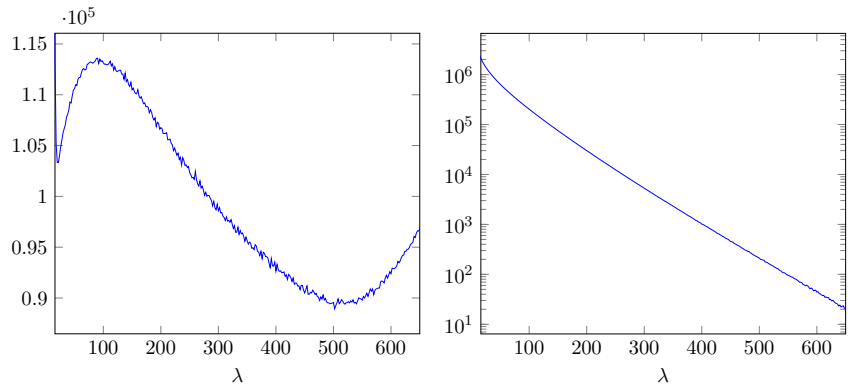

\section*{Conclusions}
We have analyzed the UMDA on $\om$ and obtained the upper bounds  
$O(\mu\sqrt{n})$ and $O(\mu n)$ on its expected running time in different 
domains for $\mu$, more precisely if $\mu\ge c\sqrt{n}\log n$  and $\mu\ge c'\log n$, respectively, 
where $c,c'$ are positive constants.  This 
implies an expected running time of $O(n\log n)$ for two asymptotic values of $\mu$, closing the previous 
gap between the lower bound $\Omega(\mu\sqrt{n}+n\log n)$ and the upper bound $O(n\log n\log\log n)$. 
In our proofs, we provide detailed tools for the analysis of the stochastic processes at single 
frequencies in the \umda. We hope that these tools will be fruitful in future analyses of EDAs.

 We note that all our results assume $\lambda=\bigO(\mu)$. However, we do not think that 
 larger~$\lambda$ can be beneficial; if $\lambda=\alpha \mu$, for $\alpha=\omega(1)$, the 
progress due to $2$nd-class individuals can be by a factor of at most $\alpha$ bigger; however, also 
the computational effort per generation would grow by this factor. A formal analysis of 
other ratios between $\mu$ and $\lambda$ is open, as is the case of sublogarithmic~$\mu$. 
Moreover, we do not have lower bounds matching the upper bounds  
from Theorem~\ref{theo:upper-above-phase} if $\mu$ is in the regime where both $\mu=\omega(\log n)$ and 
$\mu=o(\sqrt{n}\log n)$.

\paragraph{Acknowledgments} 
Financial support by the Danish Council for Independent Research 
(DFF-FNU 4002--00542) is gratefully acknowledged.

\appendix
\section{Appendix}

\subsection{Proof of Theorem~\ref{theo:negative-drift-scaling-2017}}

We will use Hajek's drift 
theorem to prove  Lemma~\ref{theo:negative-drift-scaling-2017}.  
As we are dealing with a stochastic process, we 
implicitly assume that the random variables~$X_t$, $t\ge 0$,  
are adapted to some filtration $\filtt$ such as the natural filtration~$X_0,\dots,X_t$, $t\ge 0$.

We do not formulate the theorem using a potential/Lyapunov function $g$ 
mapping from some state space to the reals either. Instead,  
we \wlo\ assume the random variables~$X_t$ as already obtained by the 
mapping.

The following theorem follows immediately from taking Conditions~D1 and D2 in \cite{Hajek1982} 
and applying Inequality~(2.8) in a union bound over $L(\ell)$ time steps.

\begin{theorem}[\cite{Hajek1982}]
  \label{theo:orig-drift}
  Let $X_t$, $t\ge 0$, be real-valued random variables describing a
	stochastic process over some state space, adapted to a filtration $\filtt$. Pick two
  real numbers $a(\ell)$ and $b(\ell)$ depending on a parameter~$\ell$
  such that $a(\ell)<b(\ell)$ holds. Let $T(\ell)$ be the random
  variable denoting the earliest point in time $t\ge 0$ such that
  $X_t\le a(\ell)$ holds.  If there are $\lambda(\ell)>0$ and
  $p(\ell)>0$ such that the condition
  \begin{equation}
  \label{eq:maindriftcondition}
  \tag{$\ast$}
  \E\bigl(e^{-\lambda(\ell)\cdot (X_{t+1}-X_t)}
     \mid \filtt \,;\, a(\ell)<  X_t < b(\ell)\bigr)
  \;\,\le\,\; 1-\frac{1}{p(\ell)}  
  \end{equation}
  holds for all $t\ge 0$ then for all time bounds $L(\ell)\ge 0$
  \[
  \Prob\bigl(T(\ell)\le L(\ell) \mid X_0\ge b(\ell)\bigr) 
  \;\le\; e^{-\lambda(\ell)\cdot
    (b(\ell)-a(\ell))}\cdot L(\ell)\cdot D(\ell)\cdot p(\ell),
  \]
  where $D(\ell)=\max\bigl\{1,\E\bigl(e^{-\lambda(\ell)\cdot (X_{t+1}-b(\ell))}\mid \filtt \,;\, X_t\ge b(\ell)\bigr)\bigr\}$.
\end{theorem}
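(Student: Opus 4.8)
The plan is to derive Theorem~\ref{theo:orig-drift} directly from Hajek's two drift conditions (D1) and (D2) of~\cite{Hajek1982}, together with his one-step estimate Inequality~(2.8), and then to convert that single-step estimate into a bound over the whole time horizon by a union bound. The first step is to identify the abstract hypotheses of~\cite{Hajek1982} with the objects in the statement, working throughout with the exponential test function $e^{-\lambda(\ell) X_t}$. Multiplying Condition~\eqref{eq:maindriftcondition} by $e^{-\lambda(\ell)X_t}$ shows that on the band $a(\ell)<X_t<b(\ell)$ we have $\E(e^{-\lambda(\ell)X_{t+1}}\mid \filtt) \le \rho\, e^{-\lambda(\ell)X_t}$ with $\rho\coloneqq 1-1/p(\ell)<1$, i.e.\ $e^{-\lambda(\ell)X_t}$ behaves as a contracting supermartingale on the band; this is exactly Hajek's Condition~(D1). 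The region $X_t\ge b(\ell)$ is governed separately, and this is precisely the role of $D(\ell)=\max\{1,\E(e^{-\lambda(\ell)(X_{t+1}-b(\ell))}\mid \filtt; X_t\ge b(\ell))\}$, which is Hajek's Condition~(D2): it bounds, on the exponential scale, how far below $b(\ell)$ the process can land in a single step when it starts at or above $b(\ell)$.

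Next I would invoke Hajek's Inequality~(2.8). Combining (D1) and (D2), it provides a per-step estimate: for any fixed time $t$, the probability, conditioned on the process having reached $\ge b(\ell)$ and not yet having fallen to $\le a(\ell)$, that a full down-crossing of the interval $[a(\ell),b(\ell)]$ is completed at step $t$ is at most $e^{-\lambda(\ell)(b(\ell)-a(\ell))}\cdot D(\ell)\cdot p(\ell)$. Intuitively, the exponential factor reflects that such a down-crossing forces the supermartingale $e^{-\lambda(\ell)X_t}$ to grow by a factor of $e^{\lambda(\ell)(b(\ell)-a(\ell))}$, which a maximal inequality renders exponentially unlikely; the factor $p(\ell)=1/(1-\rho)$ is the geometric sum produced by the contraction across the band; and the factor $D(\ell)$ absorbs the one-step overshoot incurred when the process re-enters the band from above~$b(\ell)$.

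Finally I would pass from the single-step estimate to the time-horizon statement. Starting from $X_0\ge b(\ell)$, the event $\{T(\ell)\le L(\ell)\}$ forces a down-crossing of $[a(\ell),b(\ell)]$ to be completed at one of the steps $t=0,\dots,L(\ell)-1$, so it is contained in the union of the corresponding per-step events. Since the estimate from~(2.8) is uniform in~$t$ (the hypotheses hold for all $t\ge 0$), summing over the $L(\ell)$ steps multiplies the per-step probability by $L(\ell)$ and yields $\Prob(T(\ell)\le L(\ell)\mid X_0\ge b(\ell)) \le e^{-\lambda(\ell)(b(\ell)-a(\ell))}\cdot L(\ell)\cdot D(\ell)\cdot p(\ell)$, which is the claimed bound.

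I expect the main obstacle to be the bookkeeping of the band-restricted form of the condition rather than any genuinely hard estimate. One must check that imposing (D1) only on $a(\ell)<X_t<b(\ell)$ while delegating the region $X_t\ge b(\ell)$ entirely to the $D(\ell)$ term matches Hajek's hypotheses, and in particular that excursions above $b(\ell)$ influence the hitting probability solely through the single-step overshoot constant $D(\ell)$ and do not accumulate across steps. Once this identification is made carefully, the remaining work — reading off $\rho=1-1/p(\ell)$ and $D(\ell)$, and summing $L(\ell)$ identical per-step bounds — is routine.
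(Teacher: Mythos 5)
Your proposal is correct and takes exactly the route the paper itself takes: the paper justifies this theorem in a single remark, namely that it follows from Hajek's Conditions D1 and D2 together with his Inequality~(2.8) applied in a union bound over the $L(\ell)$ time steps, which is precisely your identification of Condition~$(\ast)$ with D1 (via the exponential supermartingale $e^{-\lambda(\ell)X_t}$), of $D(\ell)$ with the overshoot condition D2, and your per-step-estimate-plus-union-bound structure. The caveat you flag at the end is indeed the only point needing care, and it resolves as you expect: restricting attention to times before $T(\ell)$, the process at each step is either in the band (where $(\ast)$ applies) or at least $b(\ell)$ (where $D(\ell)$ applies), so the band-restricted hypothesis suffices.
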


\begin{proof}[Proof of {Theorem~\ref{theo:negative-drift-scaling-2017}}]
  We will apply Theorem~\ref{theo:orig-drift} for suitable choices of
  its variables, some of which might depend on the parameter
  $\ell=b-a$ denoting the length of the interval~$[a,b]$.
  The following argumentation is
  also inspired by Hajek's work \citep{Hajek1982}.

	By assumption $\Delta_t(X_{t+1}-X_t)\preceq X_{t+1}-X_t$. Clearly, 
	for the process $X'_t=X_0+\sum_{j=0}^{t-1} \Delta_t(X_{t+1}-X_t)$ 
	we have $X_t'\preceq X_t$. Hence, the hitting time $T^*$ for 
	state less than~$a$ of the original 
	process $X_t$ is stochastically 
	at least as big as the corresponding hitting time of the process~$X_t'$. In the following, 
	we will therefore without further mention analyze $X_t'$ instead of~$X_t$ and bound the tail of its hitting time. 
	We work with $\Delta\coloneqq \Delta_t$, which equals $X'_{t+1}-X'_t$. We still use the old notation 
	$X_t$ instead of $X'_t$.

	%
	The aim is to bound 
	the moment-generating function (mgf.) from 
	 Condition~\eqref{eq:maindriftcondition}. In this analysis,  we for notational convenience often omit 
	the filtration $\filtt$. 
	First we observe that 
	it is sufficient to bound the mgf. of $\Delta\cdot \indic{\Delta\le \kappa\epsilon}$ 
	since 
	\begin{align*}
	\E(e^{-\lambda \Delta}) & = 
	\E(e^{-\lambda \Delta \indic{\Delta\le \kappa\epsilon} - \lambda\Delta \indic{\Delta> \kappa\epsilon}})\\
  & = \E(e^{-\lambda \Delta \indic{\Delta\le \kappa\epsilon}} e^{ - \lambda\Delta \indic{\Delta> \kappa\epsilon}})
  \le 	\E(e^{-\lambda \Delta \indic{\Delta\le \kappa\epsilon}}), 
	\end{align*}
	using $\Delta \indic{\Delta> \kappa\epsilon}>0$ and 
	hence $e^{ - \lambda\Delta \indic{\Delta> \kappa\epsilon}}\le 1$. 
	In the following, we omit the factor $\indic{\Delta\le  \kappa\epsilon}$ 
	but implicitly multiply $\Delta$ with it all the time. The same goes 
	for $\indic{a<X_t<b}$. 
	
	To establish Condition~\eqref{eq:maindriftcondition}, it is sufficient 
  to identify values $\lambda\coloneqq \lambda(\ell)>0$ and $p(\ell)>0$ such
  that
  \[
  \E(e^{-\lambda \Delta}\indic{a<X_t<b})
  \;\le\; 1-\frac{1}{p(\ell)}.
  \]
  Using the series expansion of the exponential function, we get
  \begin{align*}
    & \E(e^{-\lambda \Delta}\indic{a<X_t<b})  = 
		1 - \lambda \E(\Delta) + \sum_{k=2}^\infty  \frac{(-\lambda)^{k}}{k!} \E(\Delta^k)\\ 
    & \quad = 
    1 - \lambda \E(\Delta) + \sum_{k=2}^\infty  \frac{(-\lambda)^{k}}{k!} \left(\E(\Delta^k\indic{\Delta\ge 0}) + 
		\E(\Delta^k\indic{\Delta< 0})\right). 
		\end{align*}
		
		We first concentrate on the positive steps in the direction of the expected value, more precisely, we consider
		for any odd $k\ge 3$ 
		\[
		M_k \coloneqq \frac{\lambda^{k}}{k!} \E(\Delta^k\indic{\Delta\ge 0}) - \frac{\lambda^{k+1}}{(k+1)!} \E(\Delta^{k+1}\indic{\Delta\ge 0}).
	\]
	Since we implicitly multiply with $\indic{\Delta\le \kappa\epsilon}$, we have $\Delta^k\indic{\Delta\ge 0}\le (\kappa\epsilon)^k$ and hence 
	$\lvert\E(\Delta^{k+1}\indic{\Delta\ge 0})/\E(\Delta^k\indic{\Delta\ge 0})\rvert \le \kappa\epsilon$. By choosing $\lambda\le 1/(\kappa\epsilon)$, 
	we have 
	\[
	M_k\ge \frac{\lambda^k}{k!} \E(\Delta^k\indic{\Delta\ge 0}) - 
	\frac{\lambda^{k}}{\kappa\epsilon (k+1)!} \kappa\epsilon\E(\Delta^{k}\indic{\Delta\ge 0}) \ge 0,
	\]
	for $k\ge 3$ since $(1/k!)/(1/(k+1)!)=k$. Hence,
	\begin{align*}
     \E(e^{-\lambda \Delta}) & \le 1- \lambda\E(\Delta) + \frac{\lambda^2 }{2} \E(\Delta^2\indic{\Delta\ge 0}) \\
		& \le 1- \lambda\E(\Delta) + \frac{\lambda^2 }{2} \E(\Delta\cdot \kappa\epsilon\cdot \indic{\Delta\ge 0})\\
		& \le 1-\lambda\E(\Delta) + \lambda \frac{1}{2\kappa\epsilon}\cdot \kappa\epsilon \cdot \E(\Delta)  
		\le 1-\lambda\epsilon/2
	\end{align*}
	where the first inequality used that $\Delta^{2}\le \Delta\kappa\epsilon$ due to our implicit multiplication 
	with $\indic{\Delta\le \kappa\epsilon}$ everywhere and the second 
	used again $\lambda\le 1/(\kappa\epsilon)$.
	So,  we 
	have estimated the contribution of all the positive steps by $1-\lambda\E(\Delta)/2$. 
	
	We proceed with the remaining terms. We overestimate the sum by using $\Delta'\coloneqq \lvert \Delta \cdot \indic{\Delta<0})\rvert$ 
	and bounding $(-\lambda^k)\le \lambda^k$ in 
	all terms starting from $k=2$. Incorporating the contribution of the positive steps, we   
   obtain for all $\gamma\ge
  \lambda$
  \begin{align*}
  \E(e^{-\lambda \Delta})  & \le
    1 - \frac{\lambda}{2} \E(\Delta) + \frac{\lambda^2}{\gamma^2} \sum_{k=2}^\infty \frac{\gamma^{k}}{k!} \E(\Delta'^k)\\
     & \le
		1 - \frac{\lambda}{2} \E(\Delta)+ \frac{\lambda^2}{\gamma^2} \sum_{k=0}^\infty \frac{\gamma^{k}}{k!} \E(\Delta'^k) 
		 \le 1 -\frac{\lambda}{2} \epsilon + \lambda^2  \underbrace{\frac{\E(e^{\gamma \Delta'})}{\gamma^2}}_{=:C(\gamma)},
  \end{align*}
	where the last inequality uses the first condition of the theorem, \ie, the bound on the drift.
 
  Given any $\gamma>0$, choosing $\lambda\coloneqq \min\{1/(\kappa\epsilon), \gamma,
  \epsilon/(4C(\gamma))\}$ results in
  \[
  \E(e^{-\lambda \Delta}\indic{a<X_t<b})  \;\le\; 1- \frac{\lambda}{2}\epsilon + \lambda\cdot \frac{\epsilon}{4C(\gamma)}\cdot C(\gamma) 
  \;=\; 1-\frac{\lambda \epsilon}{4} \;=\;  1-\frac{1}{p(\ell)}
  \]
  with $p(\ell)\coloneqq 4/(\lambda \epsilon)$.

	The aim is now to choose $\gamma$ in such a way that $\E(e^{\gamma \Delta'})$ is bounded from 
	above by a constant.
	We get
	\[
	\E(e^{\gamma \Delta'}) \;\le\; \sum_{j=0}^\infty e^{\gamma (j+1)r} \Prob(\Delta \le -jr) 
	 \;\le\; 
	\sum_{j=0}^\infty e^{\gamma (j+1)r} e^{- j }  
	\]
	where the inequality uses the second condition of the theorem.

  Choosing $\gamma\coloneqq 1/(2r)$ yields
	 \begin{align*}
	\E(e^{\gamma \Delta'})
    & \le \sum_{j=0}^\infty e^{(j+1)/2 - j }
     = e^{1/2} \sum_{j=0}^\infty e^{-j/2}
		 = e^{1/2} \frac{1}{1-e^{-1/2}}
		\le 4.2.
  \end{align*}
	Hence, $C(\gamma)\le 4.2/\gamma^2$ and
  therefore 	
	$\lambda\le \epsilon/(4\cdot 4.2r^2)< \epsilon/(17r^2)$. 
  From the definition of~$\lambda$, we altogether have 
	 $\lambda = \min\{1/(2r),\epsilon/(17r^2),1/(\kappa\epsilon)\}$. 
	Since $p(\ell)=4/(\lambda \epsilon)$, we know $p(\ell)=O(r / \epsilon + r^2/\epsilon^2 + \kappa )$.
  Condition~\eqref{eq:maindriftcondition} of
  Theorem~\ref{theo:orig-drift} has been established along with these
  bounds on~$p(\ell)$ and $\lambda=\lambda(\ell)$.
	

  To bound the probability of a success within $L(\ell)$ steps, we
  still need a bound on $D(\ell)=\max\{1,\E(e^{-\lambda(X_{t+1}-b)}\mid
  X_t\ge b)\}$. If $1$ does not maximize the expression then 
	 \begin{align*}
   D(\ell) & \;=\; \E(e^{-\lambda(X_{t+1}-b)}\mid X_t\ge b)
  \;\le\; \E(e^{-\lambda\lvert \Delta\rvert}\mid X_t\ge b) \\
  &\, 
  \;\le \; 1+ \E(e^{\gamma \Delta'}\mid X_t\ge b),
  \end{align*}
  where the first inequality follows from $X_t\ge b$ 
  and 
  the second one from $\gamma\ge \lambda$ along with 
	the bound $+1$ for the positive terms as argued above.  The last term can be bounded 
  as in the above calculation leading to $\E(e^{\gamma \Delta'})=O(1)$ since 
  that estimation uses only the second condition, which holds 
  conditional on $X_t > a$. Hence, in any case $D(\ell) =
  O(1)$.  
  Altogether,  we
  have 
  \begin{align*}
  & e^{-\lambda(\ell)\cdot \ell}\cdot D(\ell) \cdot p(\ell) \;\le\; 
  e^{-\lambda\ell} \cdot \frac{4}{\lambda\epsilon} \\
  & \quad \;=\; e^{-\lambda\ell\epsilon + \ln(4/(\lambda\epsilon))}
  \end{align*}

	By the third condition, we have  
	$\lambda \ell \ge 2\ln(4/(\lambda \epsilon))$, 
  which finally means that 
  \[
  e^{-\lambda(\ell)\cdot \ell}\cdot D(\ell) \cdot p(\ell) \;\le\; 
  O(e^{-\lambda\ell\epsilon/2 )) } 
  \]
  Choosing
  $L(\ell)=e^{\lambda\ell/4}$, Theorem~\ref{theo:orig-drift} yields
  \[
  \Prob(T(\ell)\le L(\ell)) \;\le\; L(\ell)\cdot O(e^{-\lambda\ell/4}) \;=\;
  O(e^{-\lambda\ell/4}),\]
  which proves the theorem.  \qed\end{proof}

\bibliographystyle{plainnat}


\end{document}